\documentclass{article}
\usepackage{amsmath, amssymb, amsthm}
\usepackage{mathtools}
\usepackage{cite}
\usepackage{hyperref}

\newtheorem{rmk}{Remark}[section]
\usepackage{xcolor}
\newtheorem{defn}{Definition}
\newtheorem{thm}{Theorem}
\newtheorem{lem}{Lemma}
\newtheorem{exmp}{Example}

\newtheorem{claim}{Claim}
\newtheorem{cor}{Corollary}

\title{Evidential-Based Higher-Order Set Argumentation Framework}
\author{Shuai Tang\thanks{Corresponding author. Email: TangShuaiMath@outlook.com.}}

\date{} 

\begin{document}
	\maketitle
	
	\begin{abstract}
		Evidential argumentation extends Dung's abstract argumentation by requiring arguments and interactions to be backed by chains of evidence rooted in prima-facie elements. However, existing formalisms lack a unified treatment of evidential support, higher-order relations (attacks and supports targeting arbitrary elements), and collective interactions (sources as sets). In this paper, we introduce the Evidential-Based Higher-Order Set Argumentation Framework (EHSAF), which conservatively generalises several existing frameworks within a single expressive setting. We develop two complete semantics for EHSAFs: an \emph{adjacent complete labelling semantics} that admits multiple truth values (true, false, undecided) for arguments in support cycles, reflecting an open epistemic attitude toward future evidence; and an \emph{extension-based complete semantics} that follows a strict evidentialist stance, accepting only arguments with well-founded support chains. We show that these two semantics diverge in the presence of support cycles, and prove their equivalence under support‑acyclicity. To enable computational reasoning, we provide a normal propositional encoding of EHSAFs and prove that, in three-valued {\L}ukasiewicz logic, its models correspond precisely to the adjacent complete labellings. We further extend this encoding to continuous fuzzy logics (G{\"o}del, Product, and {\L}ukasiewicz), defining a continuous fuzzy normal encoded semantics. We establish that this fuzzy semantics satisfies key properties---continuity, monotonicity, boundary conditions, and solution existence---and that its ternarisation recovers the adjacent complete labellings under natural t-norm conditions. Our framework thus unifies expressive argumentation with principled three-valued and fuzzy semantics, bridging the gap between qualitative and quantitative reasoning about evidence.
	\end{abstract}
	
	\noindent\textbf{Keywords:} Argumentation framework, Evidential support, Propositional logic, Encoded semantics, Equational semantics, Model relationship
	
	\medskip
	\noindent\textbf{MSC:} 68T27, 03B70, 03B50
	\section{Introduction}
	
	Since its introduction by Dung \cite{dung1995acceptability}, abstract argumentation has become a cornerstone of knowledge representation and reasoning in artificial intelligence. In Dung's argumentation frameworks (DAFs), arguments are treated as abstract entities, and the only interaction between them is a binary attack relation. This simple yet powerful model allows the identification of acceptable sets of arguments—called extensions—under various semantics, such as admissible, complete, preferred, grounded, and stable extensions \cite{dung1995acceptability}. The framework has been successfully applied in diverse domains, including legal reasoning \cite{benchcapon2009argumentation}, negotiation \cite{dimopoulos2019argumentation}, and decision making \cite{amgoud2009using}.
	
	Despite its elegance and wide adoption, the abstract nature of DAFs imposes limitations on their expressive power. In particular, DAFs only represent negative interactions (attacks) between arguments, leaving positive interactions unmodeled. To address this limitation, researchers have proposed various extensions that introduce a support relation between arguments \cite{cohen2014survey,cayrol2013bipolarity}. Among these, the evidential interpretation of support has gained significant attention. The Evidential Argumentation System (EAS), introduced by Oren and Norman \cite{oren2008semantics} and later refined by Polberg and Oren \cite{polberg2014revisiting}, captures the intuition that arguments cannot be accepted in isolation; rather, they must be backed by evidence. In an EAS, a special argument \(\eta\) represents indisputable evidence from the environment, and an argument is considered evidentially supported only if it can be traced back to \(\eta\) through a chain of supports. This framework distinguishes between \emph{prima-facie} arguments (which require no support) and \emph{standard} arguments (which must be supported by evidence) \cite{oren2008semantics}. The EAS framework has been further studied and compared with other support formalisms, such as Argumentation Frameworks with Necessities (AFNs) \cite{nouioua2013afs}, revealing deep structural connections between different interpretations of support \cite{polberg2014revisiting}.
	
	Alongside the evolution of formal frameworks, two strands of research have deepened the study of evidence-based argumentation from computational and quantitative perspectives, respectively.
	On the computational side, Chen et al. \cite{chen2023evidence} proposed an incremental semantics for evidence-based argumentation frameworks. Based on strongly connected component decomposition, their approach layers the argumentation structure and computes acceptability in a bottom-up manner, which effectively reduces recomputation cost under dynamic updates.
	Despite these advances, existing evidence-based formalisms remain confined to first-order, binary interactions between arguments. They can neither represent higher-order scenarios where attacks or supports are themselves attacked or supported, nor model collective interactions where a set of arguments jointly carries out an attack or support. More critically, there is still no unified labelling-based semantic system that simultaneously integrates evidential support, higher-order interactions and collective sources, with corresponding propositional logic encodings and continuous fuzzy extensions. This leaves a significant gap between expressive formalisms and principled, computable semantic foundations.
	
	Another important line of generalization concerns the nature of the attack relation itself. In Dung's original framework, attacks are binary relations between individual arguments. However, in many real-world scenarios, attacks may require the cooperation of multiple arguments to be effective, or they may target not only arguments but also other attacks or supports. The former generalization gives rise to frameworks with \emph{collective attacks} (also known as SETAFs), where a set of arguments can jointly attack an argument \cite{nielsen2006generalization,flouris2019comprehensive}. The latter leads to \emph{higher-order argumentation frameworks}, where attacks can target other attacks—a concept explored in frameworks such as AFRA \cite{baroni2011afra} and RAF \cite{cayrol2020valid}. These two lines of generalization have been combined in the Higher-order Set Argumentation Framework (HSAF), which allows both set-based attackers and higher-order targets, thereby providing a unified treatment of complex interaction patterns \cite{tang2025encoding2}.
	
	A further step toward unifying these extensions is the Recursive Evidence-Based Argumentation Framework (REBAF) \cite{cayrol2018argumentation}, which integrates higher-order attacks and evidential supports. In a REBAF, attacks and supports are explicitly named, their sources can be sets of arguments, and their targets can be arguments, attacks, or supports. A set of \emph{prima-facie} elements (which may include arguments, attacks, or supports) serves as the root of evidential support chains. Semantics for REBAF are defined in terms of \emph{structures}—triples consisting of accepted arguments, valid attacks, and valid supports—and have been shown to generalize both Dung's semantics and those of EAS. A logical encoding of REBAF into first-order logic has also been proposed, establishing a correspondence between REBAF structures and models of a logical theory \cite{cayrol2020logical}. More recently, Besnard et al. \cite{besnard2023generic} developed a generic logical encoding that uniformly captures several families of abstract argumentation frameworks, including those with coalitions, higher-order relations, and evidential supports.
	
	From a semantic perspective, principled comparison has become a standard approach to systematise labelling-based argumentation semantics. Most notably, Al Anaissy et al. \cite{alanaissy2025principle} established a systematic taxonomy of seven variants of complete labelling semantics for bipolar argumentation, covering deductive, necessary and evidential interpretations of support. Their work clarifies the inclusion relations and behavioural differences among different support interpretations under the labelling paradigm, laying a principled foundation for semantic selection.
	Nevertheless, such principled labelling analyses are still limited to standard first-order bipolar frameworks with singleton sources and argument-level targets. They do not extend to collective interactions or higher-order targets, nor do they provide a unified logical encoding that connects discrete labellings to fuzzy quantitative semantics. It therefore remains an open question how to construct a unified evidential higher-order set argumentation framework whose labelling semantics conservatively generalise existing ones, admit a sound and complete propositional encoding, and naturally extend to continuous fuzzy t-norm semantics.
	
	While these frameworks significantly enhance the expressive power of abstract argumentation, they have largely been studied in isolation or with specific combinations of features. The need for a unified framework that simultaneously accommodates (i) evidential support, (ii) higher-order attacks and supports, and (iii) collective (set-based) interactions, while providing a coherent semantic and propositional logic foundation, remains an open challenge. Such a framework would not only subsume existing formalisms but also enable the systematic study of their interrelationships and the development of generic computational tools.
	
	In this paper, we introduce the \emph{Evidential-based Higher-order Set Argumentation Framework} (EHSAF), a unified abstract argumentation framework that integrates the three key generalizations discussed above:
	
	\begin{enumerate}
		\item \emph{Evidential support}, following the EAS tradition, where arguments and interactions must be supported by chains of evidence rooted in \emph{prima-facie} elements;
		\item \emph{Higher-order interactions}, where attacks and supports may target not only arguments but also other attacks or supports;
		\item \emph{Collective interactions}, where the source of an attack or support can be a set of elements (arguments or interactions), rather than a single element.
	\end{enumerate}
	
	The EHSAF generalizes the HSAF by incorporating evidential support and by allowing supports—in addition to attacks—to be higher-order and collective. It also extends the REBAF by permitting the source of any interaction to be a set of arbitrary elements (not just arguments) and by providing a more flexible treatment of support cycles.

	In this paper, we provide two distinct semantic attitudes toward arguments without current prima-facie support. Extension-based complete semantics adheres to a strict evidentialist stance grounded in real-world epistemic practice: it regards an argument as false if it currently lacks prima-facie support. By contrast, adjacent complete labelling semantics takes prospective valid evidence into account, and admits possible truth-value scenarios for arguments that have no immediate prima-facie support but may obtain such support in the future.
	
	To elaborate, in standard evidential-based argumentation semantics, a claim must be considered defeated if it is successfully attacked by an effective counterargument. When evaluating argument effectiveness, we focus on the dimension of evidential support, distinguishing between arguments that possess prima-facie evidential support and those that do not. An ordinary non-prima-facie argument, when isolated---that is, lacking any form of evidential backing---must be regarded as ineffective. Indeed, it is commonly observed that a position advanced by an agent, yet endorsed by neither others nor the proponent themselves, typically fails to maintain its tenability.
	
	Conversely, if an argument receives support from other arguments, or if the proponent firmly adheres to it, the position may remain plausible even in the absence of currently available prima-facie evidence. Crucially, one cannot assert that a claim lacking such evidence is false per se, since the requisite evidence may surface in the future.
	
	A paradigmatic illustration is the historical development of the theory of relativity in physics, which can be construed as one long-running argumentative claim. When Albert Einstein first proposed the theory, it faced substantial opposition but was not undermined by any decisive effective counterargument. Moreover, although experimental evidence was initially absent, Einstein himself, along with several insightful physicists, steadfastly supported it. Under these circumstances, the theory ought not to be hastily dismissed as false; rather, an open epistemic attitude is warranted, as subsequent experiments could either refute or confirm it.
	
	Formally, the truth value of such a claim may be assigned as true ($1$), false ($0$), or undecidable ($\frac{1}{2}$), where the last denotes empirical untestability or the mere absence of completed testing. In the case of relativity, subsequent verification established its truth value as $1$. Thus, to assert its falsity during the pre-empirical stage would constitute a premature and erroneous judgment.
	
	However, the prevailing evidentialist criterion precludes the acceptance of any argument that lacks currently surfaced prima-facie evidence. To address this limitation and broaden the scope of argumentation semantics, we propose a three-valued labelling framework termed \textit{adjacent complete labelling semantics}, which is designed to accommodate arguments that remain undefeated yet lack immediate prima-facie support. Specifically, this semantics applies a tolerant stance to arguments situated in evidential support cycles, systematically enumerating all possible truth-value assignments for such arguments. For instance, consider a framework in which two arguments mutually provide evidential support. Neither one of the arguments possesses current prima‑facie support: they may be jointly true (should the anticipated evidence eventually emerge), jointly false (should future evidential counterexamples refute either of them), or perpetually undecided.
	Hence, our framework incorporates a more comprehensive set of logical possibilities without categorically excluding potentially sound claims solely on the grounds of absent current evidence.
	
	Subsequently, drawing on an extended syntax that allows attacks and supports to collectively target or back other elements (including arguments, attacks, and supports), we formally define the proposed adjacent complete labelling semantics and compare it with the traditional extension-based strict semantics. We develop both labelling-based and extension-based semantics for EHSAFs, drawing on and extending the structure-based approach of REBAF \cite{cayrol2018argumentation}. We show that the resulting semantics conservatively generalize those of EAS, REBAF, and HSAF. We first prove that the proposed adjacent complete labelling semantics is equivalent to the three-valued equational semantics. Furthermore, we propose a logical encoding of EHSAFs into propositional logic, extending the encoding techniques developed for REBAF \cite{cayrol2020logical} and HSAF \cite{tang2025encoding2}. We establish that the models of the encoded theory in the three-valued {\L}ukasiewicz propositional logic system correspond exactly to the adjacent complete labellings of the EHSAF, providing a computational and logical foundation for reasoning in this unified framework. Moreover, by encoding EHSAFs into fuzzy logic systems, we naturally derive $[0,1]$-valued equational semantics. Finally, we explore the core formal properties of this fuzzy encoded semantics and its precise relationships with the adjacent complete labelling framework.
	
	The contributions of this paper are as follows:
	\begin{enumerate}
		\item We define the syntax of EHSAFs, unifying evidential support, higher-order interactions, and collective interactions in a single abstract framework (Section 3.1).
		\item We introduce adjacent complete, stable, preferred, and grounded labelling semantics for EHSAFs, generalizing the corresponding semantics from EAS and REBAF (Section 3.2.1).
		\item We develop extension-based semantics for EHSAFs, showing their equivalence to the labelling semantics under acyclicity conditions (Section 3.2.2).
		\item We provide a normal encoding of EHSAFs into propositional logic and prove the equivalence between the encoded semantics and the adjacent complete semantics (Section 4.1--4.2).
		\item We extend the encoding to fuzzy propositional logics, defining continuous fuzzy normal encoded semantics and establishing its core properties, including continuity, monotonicity, boundary conditions, and solution existence (Section 4.3).
		\item We instantiate the fuzzy framework with G{\"o}del, Product, and {\L}ukasiewicz t-norms, and prove the correspondence between these fuzzy semantics and the 3-valued adjacent complete semantics (Section 4.4).
	\end{enumerate}
	
	The remainder of this paper is organized as follows. Section 2 reviews the necessary background on EAS, REBAF, and propositional logic systems. Section 3 presents the syntax and basic semantics of EHSAFs, including both labelling-based and extension-based semantics. Section 4 develops the logical encoding of EHSAFs, covering both discrete and fuzzy encodings. Section 5 discusses related work, and Section 6 concludes the paper.
	
	\section{Preliminaries}

	\subsection{Syntax and Semantics of Evidential Argumentation}
	\subsubsection{Syntax and Semantics of EASs}
	In this part, we recall the formal background of Evidential Argumentation Systems (EASs). Originally introduced by Oren and Norman \cite{oren2008semantics}, the framework was later revised by Polberg and Oren \cite{polberg2014revisiting} in a generalized and consistent frame with addressing the relationship between EASs and argumentation framework with necessities (AFNs). We review the revised definitions presented in \cite{polberg2014revisiting}.
	\begin{defn}[Evidential Argumentation System \cite{polberg2014revisiting}]
		\label{def:eas_syntax}
		An \textit{Evidential Argumentation System} (EAS) is a tuple $EAS = \langle A, R, E \rangle$, where:
		\begin{itemize}
			\item $A$ is a finite set of arguments. It includes a special \textit{prima facie} argument, denoted as $\eta$, which represents support from the environment (i.e., indisputable evidence).
			\item $R \subseteq (2^A \setminus \{\varnothing\}) \times A$ is the attack relation. We require that $\eta$ cannot be part of the attacking set and cannot be attacked (i.e., for any $(X, a) \in R$, $\eta \notin X$ and $a\neq\eta$).
			\item $E \subseteq (2^A \setminus \{\varnothing\}) \times A$ is the support relation. The special argument $\eta$ cannot be the target of support (i.e., there is no $(X, \eta) \in E$).
		\end{itemize}
	\end{defn}

	The semantics of EASs are built upon the notion that an argument is only valid if it can be traced back to the environmental evidence $\eta$ through a chain of support.
	
	\begin{defn}[Evidential Support \cite{polberg2014revisiting}]
		\label{def:esupport}
		An argument $a \in A$ is \textit{evidentially supported} (or \textit{e-supported}) by a set $S \subseteq A$ iff:
		\begin{enumerate}
			\item $a = \eta$; or
			\item there exists a non-empty subset $S' \subseteq S$ such that $(S', a) \in E$, and for all $x \in S'$, $x$ is e-supported by $S \setminus \{a\}$.
		\end{enumerate}
	\end{defn}
	
	In this revised definition, $\eta$ is trivially e-supported by any set, which resolves the deadlock in support chains.
	
	\begin{defn}[E-supported Attack \cite{polberg2014revisiting}]
		\label{def:eattack}
		A set $S \subseteq A$ carries out an \textit{evidence-supported attack} (or \textit{e-supported attack}) on an argument $a$ iff there exists $S' \subseteq S$ such that $(S', a) \in R$, and for all $s \in S'$, $s$ is e-supported by $S$.
	\end{defn}
	
	\begin{defn}[Acceptability \cite{polberg2014revisiting}]
		\label{def:acceptability}
		An argument $a \in A$ is \textit{acceptable} with respect to a set $S \subseteq A$ iff:
		\begin{enumerate}
			\item $a$ is e-supported by $S$; and
			\item for every \textit{minimal} e-supported attack by a set $T \subseteq A$ against $a$, $S$ carries out an e-supported attack against at least one member of $T$.
		\end{enumerate}
	\end{defn}
	Based on the concepts above, the standard Dung-style extensions are then adapted for EAS in \cite{polberg2014revisiting}.

	 \subsubsection{Syntax and Semantics of REBAFs} \label{subsec:rebaf}
	 
	 We recall the syntax and semantics of REBAF following
	 \cite{cayrol2018argumentation,cayrol2020logical,besnard2023generic}.
	 
	 \begin{defn}[REBAF]\label{def:rebaf}
	 	A \emph{recursive evidence-based argumentation framework} (REBAF) is a sextuple
	 	$(A, R_a, R_e, s, t, P)$ where:
	 	\begin{itemize}
	 		\item $A$ is a (possibly infinite) set of \emph{arguments};
	 		\item $R_a$ is a set of \emph{attack} names, with $A\cap R_a=\varnothing$;
	 		\item $R_e$ is a set of \emph{support} names, with
	 		$A\cap R_e = R_a\cap R_e = \varnothing$;
	 		\item $P\subseteq A\cup R_a\cup R_e$ is the set of \emph{prima-facie} elements
	 		(arguments, attacks or supports that need not be supported);
	 		\item $s : R_a\cup R_e \longrightarrow 2^{A}\setminus\{\varnothing\}$
	 		maps each attack or support to its \emph{source}, a non-empty set of arguments;
	 		\item $t : R_a\cup R_e \longrightarrow A\cup R_a\cup R_e$
	 		maps each attack or support to its \emph{target},
	 		which may be an argument, an attack or a support.
	 	\end{itemize}
	 \end{defn}
	 
	 \noindent
	 Intuitively, an attack $\alpha\in R_a$ defeats its target whenever $\alpha$ is
	 \emph{valid} and its source is accepted; a support $\beta\in R_e$ contributes to
	 the evidential support of its target.  Prima-facie elements are regarded as
	 already supported \cite{cayrol2018argumentation,cayrol2020logical}.
	 
	 \begin{defn}[Structure]\label{def:rebaf-structure}
	 	A \emph{structure} of a REBAF $(A, R_a, R_e, s, t, P)$ is a triple
	 	$U = (S,\Gamma,\Delta)$ with
	 	$S\subseteq A$, $\Gamma\subseteq R_a$ and $\Delta\subseteq R_e$.
	 	The set $S$ collects the \emph{accepted arguments},
	 	$\Gamma$ the \emph{valid attacks}, and $\Delta$ the \emph{valid supports}
	 	with respect to $U$.
	 \end{defn}
	 
	 Given a structure $U=(S,\Gamma,\Delta)$ of a REBAF $(A, R_a, R_e, s, t, P)$, the following auxiliary notions are
	 defined in \cite{cayrol2018argumentation,cayrol2020logical}.	 
	 \begin{itemize}
	 	\item Sets of \emph{defeated elements} w.r.t. $U$ are defined as follows:
	 	 $$Def_X(U)\stackrel{\text{def}}{=}\{\,x\in X \mid \exists\alpha\in\Gamma,\; s(\alpha)\subseteq S,\; t(\alpha)=x\,\},$$
	 	where $X\in\{A,R_a,R_e\}$; 
	 	$$Def(U)\stackrel{\text{def}}{=} Def_A(U)\cup Def_{R_a}(U)\cup Def_{R_e}(U).$$
	 	
	 	\item The set of \emph{evidentially supported elements} w.r.t. $U$ is defined as follows:
	 	\[
	 	 Sup(U)\stackrel{\text{def}}{=} P\;\cup\;
	 	\{\,t(\alpha)\mid \alpha\in\Delta\cap Sup(U'),\;
	 	s(\alpha)\subseteq S\cap Sup(U')\,\},
	 	\]
	 	where $U'=U\setminus\{t(\alpha)\}$.
	 	
	 	\item The set of \emph{unsupportable elements} w.r.t. $U$ is defined as follows:
	 	$$Def_s(U)\stackrel{\text{def}}{=}(A\cup R_a\cup R_e)\setminus{Sup(U'')},$$ 
	 	where $U''=(A\setminus{Def_A(U)}, R_a\setminus{Def_{R_a}(U)} \footnote{We modify $R_a$ presented in the original paper \cite{cayrol2018argumentation} as $R_a\setminus{Def_{R_a}(U)}$, which coincides with all original semantics.}, R_e\setminus{Def_{R_e}(U)})$.
	 	
	 	\item The set of \emph{unacceptable elements} w.r.t. $U$ is defined as follows:
	 	$$UnAcc(U)\stackrel{\text{def}}{=} Def(U)\cup Def_s(U).$$
	 	
	 	\item The set of \emph{unactivable attacks} w.r.t. $U$ is defined as follows:
	 	$$UnAct(U)\stackrel{\text{def}}{=}\{\alpha\in R_a\mid \alpha\in UnAcc(U)\text{ or }
	 	s(\alpha)\cap UnAcc(U)\neq\varnothing\}.$$
	 \end{itemize}
	 
	 \begin{defn}[Acceptability]\label{def:rebaf-acc}
	 	An element $x\in A\cup R_a\cup R_e$ is \emph{acceptable} w.r.t.\ a structure $U$ of a REBAF $(A, R_a, R_e, s, t, P)$
	 	iff\\[2pt]
	 	\indent$(i)$ $x\in Sup(U)$, and\\
	 	\indent$(ii)$ every attack $\alpha\in R_a$ with $t(\alpha)=x$
	 	is unactivable, i.e.\ $\alpha\in UnAct(U)$.
	 \end{defn}
	 
	 \noindent
	 Let $ Acc(U)$ denote the set of all acceptable elements w.r.t.\ $U$.
	 
	 \begin{defn}[REBAF Semantics]\label{def:rebaf-semantics}
	 	A structure $U=(S,\Gamma,\Delta)$ is:
	 	\begin{enumerate}
	 		\item \textbf{self-supporting} iff $(S\cup\Gamma\cup\Delta)\subseteq Sup(U)$;
	 		\item \textbf{conflict-free} iff
	 		$X\cap Def_Y(U)=\varnothing$ for every
	 		$(X,Y)\in\{(S,A),(\Gamma,R_a)$, $(\Delta,R_e)\}$;
	 		\item \textbf{admissible} iff it is conflict-free and
	 		$S\cup\Gamma\cup\Delta\subseteq Acc(U)$;
	 		\item \textbf{complete} iff it is conflict-free and
	 		$ Acc(U)=S\cup\Gamma\cup\Delta$;
	 		\item \textbf{preferred} iff it is a $\subseteq$-maximal admissible structure;
	 		\item \textbf{grounded} iff it is the $\subseteq$-minimal complete structure;
	 		\item \textbf{stable} iff $(S\cup\Gamma\cup\Delta)=(A\cup R_a\cup R_e)\setminus{UnAcc(U)}$.
	 	\end{enumerate}
	 \end{defn}
	 
	 \noindent
	 The usual relations among semantics hold: every complete structure is admissible,
	 every preferred structure is complete, and every stable structure is preferred
	 \cite{cayrol2018argumentation}.  Moreover, REBAF is a conservative generalisation of
	 Dung's DAF: when $R_e=\varnothing$ and $P=A\cup R_a$ (all elements are
	 prima-facie), the REBAF semantics coincide with those of an DAF (or a RAF when
	 higher-order attacks are present) \cite{cayrol2018argumentation,besnard2023generic}.
	 
	 \begin{rmk}
	 	In \cite{cayrol2020logical} a logical encoding of REBAF is proposed under the
	 	restrictions that (i) the source of every interaction is a \emph{single}
	 	argument ($|s(\alpha)|=1$) and (ii) no \emph{support cycle} exists.
	 	Under these assumptions, REBAF structures under each semantics correspond
	 	exactly to particular models of a first-order logical theory.
	 \end{rmk}
	
\subsection{Propositional Logic Systems}

In this subsection, we recall the basic notions of propositional logics ($\mathcal{PL}$s) that underpin our work. For an extensive treatment of many-valued and fuzzy propositional logics, the reader is referred to \cite{klir1995fuzzy,Hajek1998,Klement2000Triangular,bergmann2008introduction,belohlavek2017fuzzy}.

Throughout the paper, we employ the following meta-linguistic shorthand: the expression $A \Longleftrightarrow B$ is used as an abbreviation for ``$A$ if and only if $B$'', while $B \Longrightarrow A$ stands for ``if $B$, then $A$'', where $A$ and $B$ are meta-level propositions.

The formal language of any propositional logic $\mathcal{PL}$ is built from the ingredients listed below.

\begin{enumerate}
	\item \textbf{Alphabet} --- it consists of:
	\begin{itemize}
		\item a countable collection of \textit{propositional constants}, denoted $\operatorname{Con}$, which includes the distinguished constants $\bot$ (falsum) and $\top$ (verum);
		\item a countable collection of \textit{propositional variables}, written as $\operatorname{Var} = \{p_1, p_2, p_3, \dots\}$;
		\item the set of \textit{logical connectives} $\{\neg, \wedge, \vee, \rightarrow, \leftrightarrow\}$, where $\neg$ is unary (negation) and the remaining four are binary (conjunction, disjunction, implication, and equivalence, respectively);
		\item auxiliary symbols: left and right parentheses $( $ and $)$.
	\end{itemize}
	
	\item \textbf{Formation rules} --- the set $\mathcal{F}_{\mathcal{PL}}$ of \textit{well-formed formulas} (wffs) is defined inductively as the smallest set satisfying:
	\begin{itemize}
		\item every element of $\operatorname{Con} \cup \operatorname{Var}$ belongs to $\mathcal{F}_{\mathcal{PL}}$;
		\item whenever $\varphi \in \mathcal{F}_{\mathcal{PL}}$, then $(\neg \varphi) \in \mathcal{F}_{\mathcal{PL}}$;
		\item whenever $\varphi, \psi \in \mathcal{F}_{\mathcal{PL}}$ and $\circ \in \{\wedge, \vee, \rightarrow, \leftrightarrow\}$, then $(\varphi \circ \psi) \in \mathcal{F}_{\mathcal{PL}}$.
	\end{itemize}
	
	\item \textbf{Notational conventions}:
	\begin{itemize}
		\item outermost parentheses may be omitted when no ambiguity arises;
		\item the precedence of connectives, from strongest to weakest, is: $\neg$, then $\wedge$, then $\vee$, then $\rightarrow$, and finally $\leftrightarrow$.
	\end{itemize}
\end{enumerate}

In what follows, we confine ourselves to truth-functional propositional logics whose truth domains $D$ are numerical and satisfy $\{0,1\} \subseteq D \subseteq [0,1]$. The semantics of any such logic is completely determined by the truth functions assigned to its connectives. We use the notation $\|\cdot\|$ ambiguously for both a truth assignment over $\operatorname{Con} \cup \operatorname{Var}$ and its canonical extension to all formulas. For every logic $\mathcal{PL}$ and every assignment $\|\cdot\|$, we impose $\|\bot\| = 0$ and $\|\top\| = 1$. Equivalence is defined in the standard way, namely
\[
a \leftrightarrow b \stackrel{\text{def}}{=} (a \rightarrow b) \wedge (b \rightarrow a)
\]
for arbitrary formulas $a,b$.

We begin by recalling the three-valued {\L}ukasiewicz logic $\mathcal{PL}_3^L$ \cite{Lukasiewicz1970ThreeValued}, whose truth values are taken from the set $\{0, \frac{1}{2}, 1\}$. The interpretations of $\neg$, $\wedge$ and $\vee$ are given pointwise as follows, for any formulas $a,b$:
\begin{itemize}
	\item $\|\neg a\| = 1 - \|a\|$;
	\item $\|a \wedge b\| = \min\{\|a\|, \|b\|\}$;
	\item $\|a \vee b\| = \max\{\|a\|, \|b\|\}$.
\end{itemize}

The implication $\rightarrow$ is evaluated via the truth function $\Rightarrow$, so that $\|a \rightarrow b\| = \|a\| \Rightarrow \|b\|$. The table of $\Rightarrow$ is shown in Table~\ref{Rightarrow}.

\begin{table}[htbp]
	\centering
	\begin{tabular}{c | c c c}
		$\Rightarrow$ & 0 & $\frac12$ & 1 \\ 
		\hline
		0 & 1 & 1 & 1 \\
		$\frac12$ & $\frac12$ & 1 & 1 \\
		1 & 0 & $\frac12$ & 1
	\end{tabular}
	\caption{Truth table for $\Rightarrow$ in $\mathcal{PL}_3^L$.}
	\label{Rightarrow}
\end{table}

From the definition of equivalence, we have
\[
\|a \leftrightarrow b\| 
= \min\{\|a \rightarrow b\|, \|b \rightarrow a\|\} 
= \min\{\|a\| \Rightarrow \|b\|, \|b\| \Rightarrow \|a\|\}.
\]
The resulting truth function $\Leftrightarrow$ is presented in Table~\ref{Leftrightarrow}.

\begin{table}[htbp]
	\centering
	\begin{tabular}{c | c c c}
		$\Leftrightarrow$ & 0 & $\frac12$ & 1 \\ 
		\hline
		0 & 1 & $\frac12$ & 0 \\
		$\frac12$ & $\frac12$ & 1 & $\frac12$ \\
		1 & 0 & $\frac12$ & 1
	\end{tabular}
	\caption{Truth table for $\Leftrightarrow$ in $\mathcal{PL}_3^L$.}
	\label{Leftrightarrow}
\end{table}

We now shift attention to fuzzy propositional logics with truth values in $[0,1]$ \cite{klir1995fuzzy,Hajek1998,Klement2000Triangular}. In these systems, the connectives $\neg$, $\wedge$ and $\rightarrow$ are customarily interpreted by means of a negation function, a triangular norm (t-norm), and its residuated implication (R-implication), respectively.

A negation $N$ is called the \emph{standard negation} whenever $N(m) = 1 - m$ for every $m \in [0,1]$. Below we list the three most important continuous t-norms together with their associated R-implications; for all $m,n \in [0,1]$:

\begin{itemize}
	\item G{\"o}del t-norm: $T_G(m, n) = \min\{m, n\}$; its R-implication is
	\[
	I_G(m, n) = 
	\begin{cases}
		1 & m \leq n, \\
		n & m > n.
	\end{cases}
	\]
	\item {\L}ukasiewicz t-norm: $T_L(m, n) = \max\{0, m + n - 1\}$; its R-implication is
	\[
	I_L(m, n) = \min\{1 - m + n, 1\}.
	\]
	\item Product t-norm: $T_P(m, n) = m \cdot n$; its R-implication is
	\[
	I_P(m, n) = 
	\begin{cases}
		1 & m \leq n, \\
		\frac{n}{m} & m > n.
	\end{cases}
	\]
\end{itemize}

A t-norm $\otimes$ is termed \emph{zero-divisor-free} if, for all $x, y \in (0,1]$, we have $x \otimes y > 0$; equivalently, $x \otimes y = 0$ entails $x = 0$ or $y = 0$.

For any R-implication $I$, the equivalence $I(m,n) = I(n,m) = 1$ holds precisely when $m = n$. Consequently, $\|a \leftrightarrow b\| = 1$ iff $\|a\| = \|b\|$. Moreover, we adopt the usual De Morgan definition of disjunction,
\[
a \vee b = \neg(\neg a \wedge \neg b),
\]
which is sound for all $[0,1]$-valued logics considered here.

Finally, we fix the following notation for the remainder of the paper:
\begin{itemize}
	\item $\mathcal{PL}_{[0,1]}^G$ stands for the $[0,1]$-valued logic with the standard negation, the G{\"o}del t-norm $T_G$, and its R-implication $I_G$;
	\item $\mathcal{PL}_{[0,1]}^P$ stands for the $[0,1]$-valued logic with the standard negation, the Product t-norm $T_P$, and its R-implication $I_P$;
	\item $\mathcal{PL}_{[0,1]}^L$ stands for the $[0,1]$-valued logic with the standard negation, the {\L}ukasiewicz t-norm $T_L$, and its R-implication $I_L$.
\end{itemize}
	\section{Syntax and Basic Semantics of EHSAFs}
    	\subsection{Syntax of EHSAFs}
    
    	\begin{defn}\label{pEHSAF}
    		A preparatory evidential-based higher-order set argumentation framework ($pre\text{-}EHSAF$) is a 5-tuple $(\mathbb{A}, \mathbb{R}_a, \mathbb{R}_e, \mathbb{U}, \mathbb{P})$, where $\mathbb{A}$ is a set of arguments, $\mathbb{R}_a$ is named the attack relation, $\mathbb{R}_e$ is named the evidential support relation, $\mathbb{R}_a\cup \mathbb{R}_e \subseteq (2^{\mathbb{A} \cup \mathbb{R}_a\cup \mathbb{R}_e} \setminus\{\varnothing\}) \times (\mathbb{A} \cup \mathbb{R}_a\cup \mathbb{R}_e)$, the finite universal set $\mathbb{U}=\mathbb{A} \cup \mathbb{R}_a\cup \mathbb{R}_e$ and $\mathbb{P}\subseteq \mathbb{U}$ called the set of prima-facie elements.
    	\end{defn}
    	
    	\begin{defn}\label{EHSAF}
    		Given a $pre\text{-}EHSAF = (\mathbb{A}, \mathbb{R}_a, \mathbb{R}_e, \mathbb{U}, \mathbb{P})$, an \emph{evidential-based higher-order set argumentation framework} (EHSAF)  is a 5-tuple $EHSAF = (\mathbf{A}, \mathbf{R_a}, \mathbf{R_e}, \mathbf{U}, \mathbf{P})$, where the argument set $\mathbf{A}=\mathbb{A}\cup\{\bot\}\cup\{\top\}$, the attack relation $\mathbf{R_a}=\mathbb{R}_a\cup \{(\{\bot\}, \beta)\mid \beta\in \mathbb{U} \text{ and } \nexists \alpha\in \mathbb{U}: (\alpha, \beta)\in \mathbb{R}_a\}$, the support relation $\mathbf{R_e}=\mathbb{R}_e\cup \{(\{\bot\}, \beta)\mid \beta\in \mathbb{U}\setminus \mathbb{P} \text{ and } \nexists \alpha\in \mathbb{U}: (\alpha, \beta)\in \mathbb{R}_e\}\cup \{(\{\top\}, \beta)\mid \beta\in \mathbb{P} \}$, the universal set $\mathbf{U}=\mathbf{A}\cup \mathbf{R_a}\cup \mathbf{R_e}$ and the prima-facie set $\mathbf{P}=\mathbb{P}\cup\{\top\}\cup(\mathbf{R_a}\setminus \mathbb{R}_a)\cup(\mathbf{R_e}\setminus \mathbb{R}_e)$.
    	\end{defn}
    	
    	To distinguish attacks from evidential supports in the following, we denote an attack $(\alpha, \beta)\in\mathbf{R_a}$ either as $(\alpha, \beta)_a$ or $\mathbf{k}_\alpha^\beta$, and an evidential support $(\gamma, \delta)\in\mathbf{R_e}$ either as $(\gamma, \delta)_e$ or $\mathbf{t}_\gamma^\delta$. Here, $\alpha$ and $\gamma$ are the \emph{sources} of the attack and support, respectively, while $\beta$ and $\delta$ denote their corresponding \emph{targets}.
    	
    	\subsection{Basic Semantics of EHSAFs}
    	 \begin{defn}\label{semantics}
    		Let $\mathcal{EHSAF}$ be the set of all EHSAFs and $\mathcal{LAB}$ be the set of all labellings of all EHSAFs over $[0,1]$.
    		A labelling semantics for $\mathcal{EHSAF}$ is a function
    		\[
    		\mathfrak{L}\mathfrak{S}: \mathcal{EHSAF} \to 2^{\mathcal{LAB}}
    		\]
    		that maps each $EHSAF \in \mathcal{EHSAF}$ to a set of valid labellings of the $EHSAF$, denoted by $\mathfrak{L}\mathfrak{S}(EHSAF)$. 
    		For any labelling $\|\cdot\|: \mathbf{U}\to D$ $(D\subseteq[0,1])$ of an EHSAF, always require that $\|\bot\|=0$, $\|\top\|=1$ and $\|\alpha\|=1$ for each $\alpha \in (\mathbf{R_e}\cup\mathbf{R_e})\setminus(R_a\cup R_e)$. Each labelling $\|\cdot\| \in \mathfrak{L}\mathfrak{S}(EHSAF)$ is called a \emph{model} of the EHSAF under the semantics $\mathfrak{L}\mathfrak{S}$. $\|\cdot\| \in \mathfrak{L}\mathfrak{S}(EHSAF)$ is also denoted by $\|\cdot\| \models_\mathfrak{LS}EHSAF$.
    	\end{defn}
    		\begin{defn}[Equational System for EHSAF]
    		Let $\mathcal{F}=(\mathbf{A},\mathbf{R}_{a},\mathbf{R}_{e},\mathbf{U},\mathbf{P})$ be an EHSAF. 
    		An \emph{equational system} $eq$ over $D\subseteq[0,1]$ for $\mathcal{F}$ is defined by the labelling requirements $\|\bot\|=0$, $\|\top\|=1$ and $\|\alpha\|=1$ for each $\alpha \in (\mathbf{R_e}\cup\mathbf{R_e})\setminus(R_a\cup R_e)$, and for each element $\beta\in \mathbb{U}$ an associated equation of the form
    		\[
    		\|\beta\| = h_{\beta}\bigl(\|x_{1}\|,\|x_{2}\|,\ldots,\|x_{m_{\beta}}\|\bigr),\qquad \|\beta\|\in D,
    		\]
    		where $\{x_1,\ldots,x_{m_{\beta}}\}\subseteq \mathbf{U}$ is the set of all elements that occur in the source sets of any attack or support targeting $\beta$, as well as the attack/support names themselves, and $h_{\beta}:D^{m_{\beta}}\to D$ is a function.
    		If an assignment $\|\cdot\|:\mathbf{U}\to D$ satisfies all equations in $eq$, we denote this by $\|\cdot\|\models_{eq}\mathcal{F}$.
    	\end{defn}
    	
    	\begin{defn}[Equational Semantics for EHSAF]
    		An \emph{equational semantics} for $\mathcal{EHSAF}$ is a labelling semantics
    		\[
    		\mathfrak{L}\mathfrak{S}_{Eq}:\mathcal{EHSAF}\to 2^{\mathcal{LAB}}
    		\]
    		such that for every EHSAF $\mathcal{F}=(\mathbf{A},\mathbf{R}_{a},\mathbf{R}_{e},\mathbf{U},\mathbf{P})$ and its associated equational system $eq$,
    		\[
    		\mathfrak{L}\mathfrak{S}_{Eq}(\mathcal{F}) = \{\|\cdot\| \mid \|\cdot\|\models_{eq}\mathcal{F}\}.
    		\]
    	\end{defn}  	
    	
    \subsubsection{Labelling-Based Semantics of EHSAFs}
    \begin{defn}\label{comlab}
    	A \emph{adjacent complete labelling} for an $EHSAF = (\mathbf{A}, \mathbf{R_a}, \mathbf{R_e}, \mathbf{U}, \mathbf{P})$ is a function
    	$\|\cdot\| : \mathbf{U}\to\{0, 1, \frac{1}{2}\}$ such that it satisfies the labelling requirements in Definition \ref{semantics} and for each $\beta\in \mathbb{U}$:
    	\begin{itemize}
    		\item $\|\beta\|=1$ iff [$\forall S_i(\mathbf{k}_{S_i}^{\beta}\in\mathbf{R_a}) \exists b_{ij} \in S_i: \|b_{ij}\| = 0$ or $\|\mathbf{k}_{S_i}^{\beta}\| = 0$] and [$\exists T_i(\mathbf{t}_{T_i}^{\beta}\in\mathbf{R_e}) \forall c_{ij} \in T_i: \|c_{ij}\| = 1$ and $\|\mathbf{t}_{T_i}^{\beta}\| = 1$]
    		\item $\|\beta\|=0$ iff [$\exists S_i(\mathbf{k}_{S_i}^{\beta}\in\mathbf{R_a}) \forall b_{ij} \in S_i: \|b_{ij}\| = 1$ and $\|\mathbf{k}_{S_i}^{\beta}\| = 1$] or [$\forall T_i(\mathbf{t}_{T_i}^{\beta}\in\mathbf{R_e}) \exists c_{ij} \in T_i: \|c_{ij}\| = 0$ or $\|\mathbf{t}_{T_i}^{\beta}\| = 0$]
    		\item $\|\beta\|=\frac{1}{2}$ iff otherwise.
    	\end{itemize}		
    	The adjacent complete labelling semantics is a labelling semantics $\mathfrak{LS}_{ac}$, such that for any EHSAF, 
    	\begin{equation*}
    		\mathfrak{LS}_{ac}(EHSAF)=\{\|\cdot\|\mid\|\cdot\| \text{ is a adjacent complete labelling of the }EHSAF\}.
    	\end{equation*}
    \end{defn}
    
    \begin{defn}[Adjacent Stable, Preferred and Grounded Semantics]\label{staetc}
    	Let $\mathcal{F}=(\mathbf{A},\mathbf{R}_{a},\mathbf{R}_{e},\mathbf{U},\mathbf{P})$ be an EHSAF satisfying the semantic requirements in Definition \ref{semantics} for any labelling.
    	\begin{enumerate}
    		\item A labelling $\|\cdot\|:\mathbf{U}\to\{0,1\}$ is an \emph{adjacent stable labelling} of $\mathcal{F}$ if it is an adjacent complete labelling of $\mathcal{F}$ and no element of $\mathbf{U}$ is assigned the value $\frac{1}{2}$. 
    		We denote by $\mathfrak{L}\mathfrak{S}_{as}(\mathcal{F})$ the set of all adjacent stable labellings of $\mathcal{F}$.
    		
    		\item An adjacent complete labelling $\|\cdot\|$ of $\mathcal{F}$ is an \emph{adjacent preferred labelling} if it is maximal with respect to the inclusion order of accepted elements: there does not exist another adjacent complete labelling $\|\cdot\|'$ such that 
    		\[
    		\{x\in\mathbf{U}\mid \|x\|=1\} \subsetneq \{x\in\mathbf{U}\mid \|x\|'=1\}.
    		\]
    		We denote by $\mathfrak{L}\mathfrak{S}_{ap}(\mathcal{F})$ the set of all adjacent preferred labellings of $\mathcal{F}$.
    		
    		\item The \emph{adjacent grounded labelling} of $\mathcal{F}$ is an adjacent complete labelling $\|\cdot\|$ that is minimal with respect to the inclusion order of accepted elements: there does not exist another adjacent complete labelling $\|\cdot\|'$ such that 
    		\[
    		\{x\in\mathbf{U}\mid \|x\|'=1\} \subsetneq \{x\in\mathbf{U}\mid \|x\|=1\}.
    		\]
    		We denote by $\mathfrak{L}\mathfrak{S}_{ag}(\mathcal{F})$ the set containing the adjacent grounded labelling.
    	\end{enumerate}
    \end{defn}
      
    \begin{defn}[Three-Valued Equational Semantics for EHSAF]\label{eq3}
    	Let \(\mathcal{F}=(\mathbf{A},\mathbf{R}_{a},\mathbf{R}_{e},\mathbf{U},\mathbf{P})\) be an EHSAF. 
    	For each \(\beta\in \mathbb{U}\), define the following two quantities:
    	\[
    	A(\beta) \;\stackrel{\text{def}}{=}\; 
    	\min_{\mathbf{k}_{S}^{\beta}\in\mathbf{R}_{a}}
    	\left(
    	\max\{1-\|\mathbf{k}_{S}^{\beta}\|,\; \max_{s\in S}(1-\|s\|)\}
    	\right),
    	\]
    	\[
    	B(\beta) \;\stackrel{\text{def}}{=}\; 
    	\max_{\mathbf{t}_{T}^{\beta}\in\mathbf{R}_{e}}
    	\left(
    	\min\{\|\mathbf{t}_{T}^{\beta}\|,\; \min_{t\in T}\|t\|\}
    	\right),
    	\]
    	where all \(\min\) and \(\max\) operations are taken over the three-element set \(\{0,\tfrac12,1\}\).
    	
    	The \emph{three-valued equational system} \(eq_{3}^{EH}\) for \(\mathcal{F}\) is given by the auxiliary-element conditions
    	\[
    	\|\bot\|=0,\qquad \|\top\|=1,\qquad \|\alpha\|=1 (\forall\alpha \in (\mathbf{R_e}\cup\mathbf{R_e})\setminus(R_a\cup R_e))
    	\]
    	and for every \(\beta\in \mathbb{U}\),
    	\begin{equation}\label{star1}
    			\|\beta\| = \min\{A(\beta),\,B(\beta)\}. 
    	\end{equation}
    	An assignment \(\|\cdot\|:\mathbf{U}\to\{0,\tfrac12,1\}\) satisfying all equations is called a solution of \(eq_{3}^{EH}\), denoted \(\|\cdot\|\models_{eq_{3}^{EH}}\mathcal{F}\).
    	
    	The \emph{three-valued equational semantics} for \(\mathcal{EHSAF}\) induced by \(eq_{3}^{EH}\) is the function
    	\[
    	\mathfrak{L}\mathfrak{S}_{Eq_{3}^{EH}}:\mathcal{EHSAF}\to 2^{\mathcal{LAB}},
    	\]
    	defined by
    	\[
    	\mathfrak{L}\mathfrak{S}_{Eq_{3}^{EH}}(\mathcal{F}) = \{\|\cdot\| \mid \|\cdot\|\models_{eq_{3}^{EH}}\mathcal{F}\}.
    	\]
    \end{defn}
    
    \begin{thm}[Equivalence of Three-Valued Equational and Adjacent Complete Semantics]\label{eqofeqandadcom}
    	For every EHSAF \(\mathcal{F}\),
    	\[
    	\mathfrak{L}\mathfrak{S}_{Eq_{3}^{EH}}(\mathcal{F}) = \mathfrak{L}\mathfrak{S}_{ac}(\mathcal{F}).
    	\]
    \end{thm}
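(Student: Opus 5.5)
Both semantics impose the same auxiliary-element conditions ($\|\bot\|=0$, $\|\top\|=1$, and value $1$ for the added auxiliary attacks and supports) and both take values in $\{0,\frac12,1\}$. So it suffices to fix an arbitrary assignment $\|\cdot\|:\mathbf{U}\to\{0,\frac12,1\}$ satisfying these conditions and an arbitrary $\beta\in\mathbb{U}$, and to show that equation~(\ref{star1}) holds at $\beta$ if and only if the three clauses of Definition~\ref{comlab} hold at $\beta$. Since both conditions say that $\|\beta\|$ equals a value determined by its attackers and supporters, the plan is to prove the two pointwise identities
\[
\min\{A(\beta),B(\beta)\}=1 \iff \text{(condition for }\|\beta\|=1\text{)},\qquad
\min\{A(\beta),B(\beta)\}=0 \iff \text{(condition for }\|\beta\|=0\text{)}.
\]
The case $\frac12$ then follows because both sides are three-valued and ``otherwise'' is the complement.

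First I will check that $A(\beta)$ and $B(\beta)$ are well defined, i.e.\ that the index sets are nonempty. By Definition~\ref{EHSAF}, every $\beta\in\mathbb{U}$ has at least one attack: either an original one or the auxiliary $(\{\bot\},\beta)_a$. Likewise every $\beta$ has at least one support: an original one, the auxiliary $(\{\bot\},\beta)_e$ if $\beta\notin\mathbb{P}$, or $(\{\top\},\beta)_e$ if $\beta\in\mathbb{P}$. Hence the $\min$ and $\max$ range over nonempty finite sets, and the degenerate vacuous cases of Definition~\ref{comlab} never arise.

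Next come the value computations. For a single attack $\mathbf{k}_S^\beta$, the quantity $\max\{1-\|\mathbf{k}_S^\beta\|,\max_{s\in S}(1-\|s\|)\}$ equals $1$ iff $\|\mathbf{k}_S^\beta\|=0$ or some $s\in S$ has $\|s\|=0$. It equals $0$ iff $\|\mathbf{k}_S^\beta\|=1$ and all $s\in S$ have $\|s\|=1$. Taking the minimum over attacks, $A(\beta)=1$ iff every attack has a $0$-valued source element or is itself $0$, and $A(\beta)=0$ iff some attack has all sources and itself valued $1$. Dually, $B(\beta)=1$ iff some support has itself and all its sources valued $1$, and $B(\beta)=0$ iff every support has a $0$-valued component. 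Since $\min\{A,B\}=1$ iff $A=B=1$, and $\min\{A,B\}=0$ iff $A=0$ or $B=0$, these four facts match exactly the two bracketed conjunctions and disjunctions of Definition~\ref{comlab}. Combining the two directions for each value yields equality of the solution sets.

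The only step needing care, rather than being an obstacle, is the bookkeeping of auxiliary elements. In particular, the added supports $(\{\bot\},\beta)_e$ contribute $B$-value $0$, and the added attacks $(\{\bot\},\beta)_a$ contribute $A$-value $1$; this is consistent with both definitions. I also need to confirm that the equations are imposed only for $\beta\in\mathbb{U}$ in both semantics, so that elements such as $\top$, $\bot$ and the auxiliary relations are handled identically by the labelling requirements on each side. Everything else is a direct truth-table check over $\{0,\frac12,1\}$.
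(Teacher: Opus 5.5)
Your proposal is correct and follows essentially the same route as the paper. Like the paper, you compute per-attack and per-support values over $\{0,\frac12,1\}$, characterise when $A(\beta)$ and $B(\beta)$ equal $0$ or $1$, match $\min\{A(\beta),B(\beta)\}=1$ and $=0$ with the two bracketed conditions of Definition~\ref{comlab}, and obtain the $\frac12$ case as the complement. Your extra check that every $\beta\in\mathbb{U}$ has at least one attack and at least one support, guaranteed by the auxiliary elements of Definition~\ref{EHSAF}, is a small refinement the paper omits; it avoids relying on conventions for $\min$ and $\max$ over empty index sets.
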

    
    \begin{proof}
    	Fix an EHSAF \(\mathcal{F}=(\mathbf{A},\mathbf{R}_{a},\mathbf{R}_{e},\mathbf{U},\mathbf{P})\) and an assignment \(\|\cdot\|:\mathbf{U}\to\{0,\tfrac12,1\}\). For each \(\beta\in \mathbb{U}\), let
    	\[
    	\mathcal{K}_\beta \stackrel{\text{def}}{=} \{\mathbf{k}_{S}^{\beta}\in\mathbf{R}_{a}\}, \qquad 
    	\mathcal{T}_\beta \stackrel{\text{def}}{=} \{\mathbf{t}_{T}^{\beta}\in\mathbf{R}_{e}\}.
    	\]
    	Define for each attack \(k=\mathbf{k}_{S}^{\beta}\in\mathcal{K}_\beta\):
    	\[
    	v_k \stackrel{\text{def}}{=} \max\{1-\|k\|,\; \max_{s\in S}(1-\|s\|)\}.
    	\]
    	Then \(A(\beta)=\min_{k\in\mathcal{K}_\beta} v_k\). Since values are in \(\{0,\tfrac12,1\}\), we have the following equivalences:
    	\[
    	v_k=0 \;\Longleftrightarrow\; \|k\|=1 \;\text{and}\; \forall s\in S,\|s\|=1, 
    	\]
    	\[
    	v_k=1 \;\Longleftrightarrow\; \|k\|=0 \;\text{or}\; \exists s\in S,\|s\|=0. 
    	\]
    	Consequently,
    	\begin{equation}\label{tag3}
    		A(\beta)=1 \;\Longleftrightarrow\; \forall k\in\mathcal{K}_\beta,\; v_k=1 
    		\;\Longleftrightarrow\; \forall k\in\mathcal{K}_\beta,\; \bigl(\|k\|=0 \;\text{or}\; \exists s\in S,\|s\|=0\bigr),
    	\end{equation}
    	\begin{equation}\label{tag4}
    		A(\beta)=0 \;\Longleftrightarrow\; \exists k\in\mathcal{K}_\beta,\; v_k=0 
    		\;\Longleftrightarrow\; \exists k\in\mathcal{K}_\beta,\; \bigl(\|k\|=1 \;\text{and}\; \forall s\in S,\|s\|=1\bigr).
    	\end{equation}
    	Moreover, because \(A(\beta)\in\{0,\tfrac12,1\}\), the remaining case gives
    	\[
    	A(\beta)=\tfrac12 \;\Longleftrightarrow\; \text{neither (\ref{tag3}) nor (\ref{tag4}) holds}. 
    	\]
    	
    	Similarly, for each support \(t=\mathbf{t}_{T}^{\beta}\in\mathcal{T}_\beta\), define
    	\[
    	w_t \stackrel{\text{def}}{=} \min\{\|t\|,\; \min_{r\in T}\|r\|\}.
    	\]
    	Then \(B(\beta)=\max_{t\in\mathcal{T}_\beta} w_t\), and we have
    	\[
    	w_t=1 \;\Longleftrightarrow\; \|t\|=1 \;\text{and}\; \forall r\in T,\|r\|=1, 
    	\]
    	\[
    	w_t=0 \;\Longleftrightarrow\; \|t\|=0 \;\text{or}\; \exists r\in T,\|r\|=0. 
    	\]
    	Thus
    	\begin{equation}\label{tag8}
    		B(\beta)=1 \;\Longleftrightarrow\; \exists t\in\mathcal{T}_\beta,\; w_t=1 
    		\;\Longleftrightarrow\; \exists t\in\mathcal{T}_\beta,\; \bigl(\|t\|=1 \;\text{and}\; \forall r\in T,\|r\|=1\bigr),
    	\end{equation}
    	\begin{equation}\label{tag9}
    		B(\beta)=0 \;\Longleftrightarrow\; \forall t\in\mathcal{T}_\beta,\; w_t=0 
    		\;\Longleftrightarrow\; \forall t\in\mathcal{T}_\beta,\; \bigl(\|t\|=0 \;\text{or}\; \exists r\in T,\|r\|=0\bigr). 
    	\end{equation}
    	Again, \(B(\beta)=\tfrac12\) exactly when neither (\ref{tag8}) nor (\ref{tag9}) is true.
    	
    	Now Equation \eqref{star1} states that \(\|\beta\| = \min\{A(\beta),B(\beta)\}\). We analyze the three possible values of \(\|\beta\|\):
    	
    	\begin{itemize}
    		\item \textbf{Case \(\|\beta\|=1\):} 
    		\[
    		\|\beta\|=1 \;\Longleftrightarrow\; A(\beta)=1 \;\text{and}\; B(\beta)=1 \;\Longleftrightarrow\; \text{(\ref{tag3}) and (\ref{tag8})}.
    		\]
    		By (\ref{tag3}) and (\ref{tag8}), this is equivalent to $\forall k\in\mathcal{K}_\beta \bigl(\|k\|=0 \;\text{or}\; \exists s\in S,\|s\|=0\bigr)$ and $\exists t\in\mathcal{T}_\beta,\; \bigl(\|t\|=1 \;\text{and}\; \forall r\in T,\|r\|=1\bigr)$,
    	    which is precisely the adjacent complete labelling condition for \(\|\beta\|=1\) (Definition~\ref{comlab}, first bullet).
    		
    		\item \textbf{Case \(\|\beta\|=0\):}
    		\[
    		\|\beta\|=0 \;\Longleftrightarrow\; A(\beta)=0 \;\text{or}\; B(\beta)=0 \;\Longleftrightarrow\; \text{(\ref{tag4}) or (\ref{tag9})}.
    		\]
    		This is equivalent to
    		$\exists k\in\mathcal{K}_\beta \bigl(\|k\|=1 \;\text{and}\; \forall s\in S,\|s\|=1\bigr)$ or $\forall t\in\mathcal{T}_\beta \bigl(\|t\|=0 \;\text{or}\; \exists r\in T,\|r\|=0\bigr)$,
    		which matches the adjacent complete labelling condition for \(\|\beta\|=0\) (second bullet).
    		
    		\item \textbf{Case \(\|\beta\|=\tfrac12\):}
    		Since the only possible values are \(0,\tfrac12,1\), we have
    		\[
    		\|\beta\|=\tfrac12 \;\Longleftrightarrow\; \|\beta\|\neq1 \;\text{and}\; \|\beta\|\neq0.
    		\]
    		Using the above equivalences, this reduces to the statement that neither the acceptance condition (all attacks ineffective and some support effective) nor the rejection condition (some attack effective or all supports ineffective) holds. This is exactly the “otherwise” clause of Definition~\ref{comlab}, which assigns \(\tfrac12\).
    	\end{itemize}
    	
    	Thus, for every \(\beta\in \mathbb{U}\),
    	$\|\beta\| = \min\{A(\beta),B(\beta)\}$ iff $\|\cdot\|$ satisfies the adjacent complete labelling condition for $\beta$.
    	Any assignment for auxiliary elements in $\mathbf{U}$ is trivially consistent. Therefore \(\|\cdot\|\models_{eq_{3}^{EH}}\mathcal{F}\) iff \(\|\cdot\|\in\mathfrak{L}\mathfrak{S}_{ac}(\mathcal{F})\), proving the desired equality.
    \end{proof}
    
    	\subsubsection{Extension-Based Semantics of EHSAFs}
    
    We draw on the definition method for REBAF semantics proposed by Cayrol et al. in \cite{cayrol2018argumentation}, and omit the introduction of the notion of structure here.
    \begin{defn}[Fundamental definitions for extension-based semantics]\label{defnI}
    	Let $EHSAF=(\mathbf{A}, \mathbf{R_a}, \mathbf{R_e}, \mathbf{U}, \mathbf{P})$, $\mathbf{S}\subseteq\mathbf{U}$, $\mathbf{S}\cap\mathbf{R_a}=\mathbf{S_a}$ and $\mathbf{S}\cap\mathbf{R_e}=\mathbf{S_e}$.
    	\begin{itemize}
    		\item The set of \emph{attacked-defeated elements} w.r.t. $\mathbf{S}$ are defined as follows:
    		$$\mathbf{Def}_a(\mathbf{S})\stackrel{\text{def}}{=}\{x\in \mathbf{U} \mid \exists \mathbf{k}_G^x\in\mathbf{S_a}: G\subseteq \mathbf{S}\}.$$
    		
    		\item The set of \emph{evidentially supported elements} w.r.t. $\mathbf{S}$ is defined as follows:
    		\[
    		\mathbf{Sup}(\mathbf{S})\stackrel{\text{def}}{=} \mathbf{P}\cup
    		\{x\mid \exists\mathbf{t}_H^x\in \mathbf{S_e}: \mathbf{t}_H^x \in \mathbf{Sup}(\mathbf{S}\setminus \{x\}) \text{ and } H\subseteq \mathbf{S}\cap \mathbf{Sup}(\mathbf{S}\setminus \{x\})\}.
    		\]
    		
    		\item The set of \emph{unsupportable-defeated elements} w.r.t. $\mathbf{S}$ is defined as follows:
    		$$\mathbf{Def}_s(\mathbf{S})\stackrel{\text{def}}{=}\mathbf{U}\setminus{\mathbf{Sup}(\mathbf{U}\setminus \mathbf{Def}_a(\mathbf{S}))}.$$
    		
    		\item The set of \emph{defeated elements} w.r.t. $\mathbf{S}$ is defined as follows:
    		$$\mathbf{Def}(\mathbf{S})\stackrel{\text{def}}{=} \mathbf{Def}_a(\mathbf{S})\cup \mathbf{Def}_s(\mathbf{S}).$$
    		
    		\item The set of \emph{acceptable elements} w.r.t. $\mathbf{S}$ is defined as follows:
    		$$\mathbf{Acc}(\mathbf{S})\stackrel{\text{def}}{=}\{x\mid x\in \mathbf{Sup}(\mathbf{S}) \text{ and } \forall\mathbf{k}_G^x\in\mathbf{R_a}: \mathbf{k}_G^x\in \mathbf{Def}(\mathbf{S})\text{ or }
    		G\cap \mathbf{Def}(\mathbf{S})\neq\varnothing\}.$$
    	\end{itemize}
    \end{defn}		
The definition of $\mathbf{Sup}(\mathbf{S})$ in Definition~\ref{defnI} is inductive and, because $\mathbf{U}$ is finite, it gives rise to a natural finite derivation structure for every element $x\in \mathbf{Sup}(\mathbf{S})$. We call this structure a \emph{support tree} (or \emph{support derivation tree}) for $x$ with respect to $\mathbf{S}$.  
Intuitively, a support tree for $x$ records one possible way in which $x$ can be derived from prima‐facie elements via a chain of evidential supports. The root of the tree is labelled by the target element $x$ itself; importantly, the root label is \emph{not} required to belong to the given set $\mathbf{S}$. All other nodes in a support tree for $x$---the supports and the source arguments of supports in the support tree---must lie in $\mathbf{S}$, as required by the recursive clause of Definition~\ref{defnI}.  The leaves of the tree are always prima‐facie elements, which include the auxiliary constant $\top$ (as well as other added prima‐facie elements from Definition~\ref{EHSAF}).
We now give a formal definition as follows.
\begin{defn}[Support Tree]\label{suptree}
	Let $\mathcal{F}=(\mathbf{A},\mathbf{R}_{a},\mathbf{R}_{e},\mathbf{U},\mathbf{P})$ be an EHSAF and let $x\in \mathbf{Sup}(\mathbf{S})$ for some $\mathbf{S}\subseteq \mathbf{U}$. A \emph{support tree} for $x$ with respect to $\mathbf{S}$ is a finite rooted tree $\mathcal{T}$ satisfying:
	\begin{enumerate}
		\item The root of $\mathcal{T}$ is labelled by $x$.
		\item For every node $v$ of $\mathcal{T}$ (including the root), let $y\in \mathbf{U}$ be its label. Then either:
		\begin{itemize}
			\item $y\in \mathbf{P}$ and $v$ has no children (i.e., $v$ is a leaf); or
			\item there exists a support $\mathbf{t}_{H}^{y}\in \mathbf{S}_{e}$ such that the children of $v$ are labelled by the elements of $H$ and by the support name $\mathbf{t}_{H}^{y}$ itself. In this case, we require that
			\[
			\mathbf{t}_{H}^{y}\in \mathbf{Sup}(\mathbf{S}\setminus\{y\}) \quad\text{and}\quad H\subseteq \mathbf{Sup}(\mathbf{S}\setminus\{y\}).
			\]
		\end{itemize}
		\item The tree is closed with respect to $\mathbf{S}$ except for the root: every node label \emph{other than the root} belongs to $\mathbf{S}$.
	\end{enumerate}
\end{defn}
In words, a support tree witnesses the derivation of $x$ from prima‐facie elements through a chain of supports. The leaves are always prima‐facie elements from $\mathbf{P}$ (which includes the auxiliary element $\top$). The internal nodes (support names and their source arguments) are all required to belong to $\mathbf{S}$, while the root $x$ itself need not be in $\mathbf{S}$. Additionally, we say that a support tree $\mathcal{T}$ for $x$ \emph{contains} an element $z$ if $z$ appears as a label of some node in $\mathcal{T}$.    
    
 \paragraph{Monotonicity and Least Fixed Point of $\mathbf{Sup}$.}

The operator $\mathbf{Sup}:2^{\mathbf{U}}\to 2^{\mathbf{U}}$ defined in Definition~\ref{defnI} enjoys the following fundamental properties, which will be used repeatedly in the sequel.

\begin{lem}[Properties of the $\mathbf{Sup}$ Operator]\label{eqmonofix}
	Let $\mathcal{F}=(\mathbf{A},\mathbf{R}_{a},\mathbf{R}_{e},\mathbf{U},\mathbf{P})$ be an EHSAF, and let $\mathbf{Sup}:2^{\mathbf{U}}\to 2^{\mathbf{U}}$ be as in Definition~\ref{defnI}. Then:
	\begin{enumerate}
		\item \emph{Support tree characterisation:} For any $\mathbf{S}\subseteq \mathbf{U}$ and any $x\in \mathbf{U}$,
		\[
		x\in \mathbf{Sup}(\mathbf{S}) \Longleftrightarrow \text{ there exists a finite support tree for }x\text{ with respect to }\mathbf{S},
		\]
		where the notion of support tree is given in Definition~\ref{suptree}. 
		\item \emph{Monotonicity:} For any $\mathbf{S}\subseteq \mathbf{T}\subseteq \mathbf{U}$, we have
		\[
		\mathbf{Sup}(\mathbf{S}) \subseteq \mathbf{Sup}(\mathbf{T}).
		\]
		\item \emph{$\subseteq$-continuity:} For every countable chain $\mathbf{S}_0\subseteq \mathbf{S}_1\subseteq \cdots \subseteq \mathbf{U}$,
		\[
		\mathbf{Sup}\left(\bigcup_{i\ge 0}\mathbf{S}_i\right)=\bigcup_{i\ge 0}\mathbf{Sup}(\mathbf{S}_i).
		\]
		Consequently, $\mathbf{Sup}$ is a monotone and continuous operator on the complete lattice $(2^{\mathbf{U}},\subseteq)$.
		\item \emph{Least fixed point:} $\mathbf{Sup}$ has a least fixed point, given by
		\[
		\mathsf{lfp}(\mathbf{Sup})=\bigcup_{i\ge 0} \mathbf{Sup}^i(\varnothing),
		\]
		where $\mathbf{Sup}^0(\varnothing)=\varnothing$ and $\mathbf{Sup}^{i+1}(\varnothing)=\mathbf{Sup}(\mathbf{Sup}^i(\varnothing))$. 
	\end{enumerate}
\end{lem}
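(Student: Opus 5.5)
The plan is to first fix the meaning of the recursive definition of $\mathbf{Sup}$ in Definition~\ref{defnI}. I will read it as a simultaneous inductive definition of the family $\{\mathbf{Sup}(\mathbf{S})\}_{\mathbf{S}\subseteq\mathbf{U}}$, namely the least family closed under two rules. The first rule is $\mathbf{P}\subseteq\mathbf{Sup}(\mathbf{S})$. The second rule is: if $\mathbf{t}_H^x\in\mathbf{S}_e$, $\mathbf{t}_H^x\in\mathbf{Sup}(\mathbf{S}\setminus\{x\})$ and $H\subseteq\mathbf{S}\cap\mathbf{Sup}(\mathbf{S}\setminus\{x\})$, then $x\in\mathbf{Sup}(\mathbf{S})$. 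Because the recursion passes from $\mathbf{S}$ to $\mathbf{S}\setminus\{x\}$, we may take a well-founded measure on pairs $(x,\mathbf{S})$. This gives every member a finite derivation, and that finite derivation is what we will call a support tree.

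For item 1, I would induct on derivation length in both directions. In the forward direction, $x\in\mathbf{P}$ gives a one-node tree. Otherwise, the children $\mathbf{t}_H^x$ and $h\in H$ have shorter derivations in $\mathbf{Sup}(\mathbf{S}\setminus\{x\})$, so by induction they carry trees with respect to $\mathbf{S}\setminus\{x\}$. Grafting these subtrees under a root labelled $x$ gives the tree. Its non-root labels lie in $\mathbf{S}\setminus\{x\}\subseteq\mathbf{S}$, and its side conditions are exactly those of Definition~\ref{suptree}. The converse is an induction on tree height, reading off the clause at the root.

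For item 2, I would do a structural induction on derivations, or on tree height via item 1. The statement to prove is: for all $\mathbf{S}\subseteq\mathbf{T}$, $x\in\mathbf{Sup}(\mathbf{S})$ implies $x\in\mathbf{Sup}(\mathbf{T})$. The base case $\mathbf{P}$ is trivial. In the inductive step, $\mathbf{S}\setminus\{x\}\subseteq\mathbf{T}\setminus\{x\}$ and $\mathbf{S}_e\subseteq\mathbf{T}_e$. The induction hypothesis then transfers the memberships of $\mathbf{t}_H^x$ and $H$ to $\mathbf{Sup}(\mathbf{T}\setminus\{x\})$, and $H\subseteq\mathbf{S}\subseteq\mathbf{T}$ holds directly.

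Item 3 will follow from finiteness of $\mathbf{U}$. Any chain in $2^{\mathbf{U}}$ is eventually constant, say from index $n$, so $\bigcup_i\mathbf{S}_i=\mathbf{S}_n$. Monotonicity then gives $\bigcup_i\mathbf{Sup}(\mathbf{S}_i)=\mathbf{Sup}(\mathbf{S}_n)$. Alternatively, and without using finiteness, a finite support tree uses only finitely many labels, so it already lives in some $\mathbf{S}_i$.

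Item 4 is Kleene's fixed point theorem applied to the monotone, continuous $\mathbf{Sup}$ on the complete lattice $(2^{\mathbf{U}},\subseteq)$. The chain $\mathbf{Sup}^i(\varnothing)$ is increasing by induction from $\varnothing\subseteq\mathbf{Sup}(\varnothing)$. Its union $L$ is a fixed point by continuity. For minimality, if $\mathbf{Sup}(Y)=Y$, induction gives $\mathbf{Sup}^i(\varnothing)\subseteq Y$ for every $i$.

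The main obstacle is item 1 together with the well-definedness of the recursive definition. The self-referential clause must be shown to be a legitimate least-fixed-point inductive definition. The tree definition's requirement that side conditions hold with respect to $\mathbf{Sup}(\mathbf{S}\setminus\{y\})$ must also be matched against subtrees built relative to the shrinking sets $\mathbf{S}\setminus\{x\}$. To handle this, I would first prove the equivalence in the stronger form "$x$ has a tree w.r.t. $\mathbf{S}$ whose subtrees at children are trees w.r.t. $\mathbf{S}\setminus\{x\}$", and only then relax it.
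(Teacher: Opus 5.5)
Your proposal is correct and follows essentially the paper's route: induction on derivation depth for monotonicity, applied to the pair $\mathbf{S}\setminus\{x\}\subseteq\mathbf{T}\setminus\{x\}$; the finite-support-tree argument (or, more simply, finiteness of $\mathbf{U}$) for continuity; and a standard fixed-point theorem for item 4, where you are more careful than the paper about well-definedness and about the forward direction of item 1, which the paper dismisses as following ``directly'' from Definition~\ref{suptree}. The one thing to keep straight is the order of proof: relaxing your ``stronger form'' of item 1 to the side conditions of Definition~\ref{suptree}, which are stated relative to $\mathbf{Sup}(\mathbf{S}\setminus\{y\})$ for the fixed $\mathbf{S}$, uses monotonicity, so you should prove item 2 directly by induction on derivations and not via item 1, to avoid circularity.
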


\begin{proof}
	(1) We prove the equivalence by separate implications.
	
	($\Rightarrow$) It follows directly from Definition~\ref {suptree}.
	
	($\Leftarrow$) Suppose there exists a support tree $\mathcal{T}$ for $x$ with respect to $\mathbf{S}$. We prove $x\in \mathbf{Sup}(\mathbf{S})$ by induction on the height of $\mathcal{T}$.
	
	If the height is $0$, then $x$ is a leaf, and by the tree definition, $x\in\mathbf{P}\subseteq \mathbf{Sup}(\mathbf{S})$.
	
	If the height is greater than $0$, then by the tree definition there exists a support $\mathbf{t}_{H}^{x}\in \mathbf{S}_{e}$ such that the children of the root are the roots of support trees for the elements of $H$ and for $\mathbf{t}_{H}^{x}$ itself. Moreover, the tree definition directly requires that
	\[
	\mathbf{t}_{H}^{x}\in \mathbf{Sup}(\mathbf{S}\setminus\{x\}) \quad\text{and}\quad H\subseteq \mathbf{S}\cap \mathbf{Sup}(\mathbf{S}\setminus\{x\}).
	\]
	Applying the recursive clause of Definition~\ref{defnI} for $\mathbf{Sup}$ with this support $\mathbf{t}_{H}^{x}$ immediately yields $x\in \mathbf{Sup}(\mathbf{S})$. This completes the induction and the proof of the equivalence.
	
	(2) Monotonicity. We prove a stronger statement $\mathcal{P}(n)$: for any two sets $A\subseteq B\subseteq \mathbf{U}$ and any $y\in \mathbf{U}$, if $y\in \mathbf{Sup}(A)$ with a derivation of depth at most $n$, then $y\in \mathbf{Sup}(B)$. The depth of a derivation is defined as the number of recursive applications of the clause for non-prima-facie elements in Definition~\ref{defnI}. We proceed by induction on $n$.
	
	Base case $n=0$: then $y\in\mathbf{P}$ (since only prima-facie elements can be derived with depth 0), so $y\in \mathbf{Sup}(B)$.
	
	Inductive step: suppose $\mathcal{P}(n)$ holds for all $n<N$, and assume $y\in \mathbf{Sup}(A)$ with depth $N$. If $y\in\mathbf{P}$, done. Otherwise, there exists a support $\mathbf{t}_{H}^{y}\in A_\mathbf{e}$ such that
	\[
	\mathbf{t}_{H}^{y}\in \mathbf{Sup}(A\setminus\{y\}) \quad\text{and}\quad H\subseteq A\cap \mathbf{Sup}(A\setminus\{y\}).
	\]
	Both $\mathbf{t}_{H}^{y}$ and each $h\in H$ have derivations of depth strictly less than $N$ (since they appear in the premises of the rule used for $y$). Since $A\subseteq B$, we have $\mathbf{t}_{H}^{y}\in B_\mathbf{e}$ and $H\subseteq B$. Also, $A\setminus\{y\}\subseteq B\setminus\{y\}$. Applying the induction hypothesis $\mathcal{P}(<N)$ to the set pair $(A\setminus\{y\}, B\setminus\{y\})$, we get
	\[
	\mathbf{t}_{H}^{y}\in \mathbf{Sup}(B\setminus\{y\}) \quad\text{and}\quad H\subseteq \mathbf{Sup}(B\setminus\{y\}).
	\]
	Thus the recursive condition is satisfied for $y$ with respect to $B$, yielding $y\in \mathbf{Sup}(B)$. This completes the induction.
	
	The monotonicity statement is exactly $\mathcal{P}(n)$ for arbitrary $n$, so it holds for all $y\in \mathbf{Sup}(A)$.
	
	(3) For $\subseteq$-continuity, the direction $\bigcup_i \mathbf{Sup}(\mathbf{S}_i)\subseteq \mathbf{Sup}(\bigcup_i \mathbf{S}_i)$ follows from monotonicity. Conversely, suppose $x\in \mathbf{Sup}(\bigcup_i \mathbf{S}_i)$. By the forward direction of (1), $x$ has a finite support tree $\mathcal{T}$ with respect to $\bigcup_i \mathbf{S}_i$. The tree $\mathcal{T}$ contains only finitely many non-root node labels; each of these labels belongs to $\bigcup_i \mathbf{S}_i$. For each such label $z$, choose an index $i_z$ such that $z\in \mathbf{S}_{i_z}$. Since the chain is increasing and the number of labels is finite, let
	\[
	k = \max\{ i_z \mid z \text{ is a non-root label appearing in } \mathcal{T} \}.
	\]
	Then for every non-root label $z$, we have $z\in \mathbf{S}_{i_z}\subseteq \mathbf{S}_k$. Therefore all non-root nodes of $\mathcal{T}$ are labelled by elements of $\mathbf{S}_k$, so $\mathcal{T}$ is also a valid support tree with respect to $\mathbf{S}_k$. By the backward direction of (1) (existence of support tree implies membership in $\mathbf{Sup}$), we obtain $x\in \mathbf{Sup}(\mathbf{S}_k)$. Hence $x\in \bigcup_i \mathbf{Sup}(\mathbf{S}_i)$. This proves the reverse inclusion and thus the equality.
	
	(4) By the Knaster–Tarski theorem, every monotone operator on a complete lattice has a least fixed point, which is the join of its iterates from the bottom element. Since $\mathbf{U}$ is finite, the chain $\mathbf{Sup}^i(\varnothing)$ stabilises after at most $|\mathbf{U}|$ steps, yielding the least fixed point.
	
\end{proof}

This fixed-point characterisation justifies the recursive definition of $\mathbf{Sup}$ and ensures that the set $\mathbf{Sup}(\mathbf{S})$ is uniquely determined for every $\mathbf{S}\subseteq\mathbf{U}$. The support tree characterisation is essential for the proofs of Lemmas~\ref{lem1} and \ref{lem:pos}.    
    
    \begin{defn}[Extension-based Semantics]\label{extsem}
    	A set $\mathbf{S}\in \mathbf{U}$ is:
    	\begin{enumerate}
    		\item \textbf{self-supporting} iff $\mathbf{S}\subseteq \mathbf{Sup}(\mathbf{S})$;
    		\item \textbf{conflict-free} iff
    		$\mathbf{S}\cap \mathbf{Def}(\mathbf{S})=\varnothing$;
    		\item \textbf{admissible} iff it is conflict-free and
    		$\mathbf{S}\subseteq \mathbf{Acc}(\mathbf{S})$;
    		\item a \textbf{complete extension} iff it is conflict-free and
    		$ \mathbf{Acc}(\mathbf{S})=\mathbf{S}$;
    		\item a \textbf{preferred extension} iff it is a $\subseteq$-maximal admissible structure;
    		\item a \textbf{grounded extension} iff it is the $\subseteq$-minimal complete structure;
    		\item a \textbf{stable extension} iff $\mathbf{S}\cup{\mathbf{Def}(\mathbf{S})}=\mathbf{U}$.
    	\end{enumerate}
    \end{defn}
    \begin{lem}[Acceptable and defeated are disjoint for conflict-free sets]
    	\label{lem:acc-def-disjoint}
    	For any conflict-free set \(S\subseteq\mathbf{U}\) (i.e. \(S\cap \mathbf{Def}(S)=\varnothing\)), we have
    	\[
    	\mathbf{Acc}(S)\cap \mathbf{Def}(S)=\varnothing.
    	\]
    \end{lem}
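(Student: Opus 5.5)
The plan is to fix a conflict-free $\mathbf{S}\subseteq\mathbf{U}$ and an element $x\in\mathbf{Acc}(\mathbf{S})$, and show $x\notin\mathbf{Def}(\mathbf{S})$. Since $\mathbf{Def}(\mathbf{S})=\mathbf{Def}_a(\mathbf{S})\cup\mathbf{Def}_s(\mathbf{S})$ by Definition~\ref{defnI}, I split the argument into two cases and derive a contradiction in each. The only ingredients are the definitions in Definition~\ref{defnI}, the conflict-freeness hypothesis $\mathbf{S}\cap\mathbf{Def}(\mathbf{S})=\varnothing$, and the monotonicity of $\mathbf{Sup}$ from Lemma~\ref{eqmonofix}(2).

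\emph{Case $x\in\mathbf{Def}_a(\mathbf{S})$.} Then there is an attack $\mathbf{k}_G^x\in\mathbf{S_a}\subseteq\mathbf{S}$ with $G\subseteq\mathbf{S}$. Because $x\in\mathbf{Acc}(\mathbf{S})$, the second clause of acceptability applies to this particular attack $\mathbf{k}_G^x\in\mathbf{R_a}$. Hence either $\mathbf{k}_G^x\in\mathbf{Def}(\mathbf{S})$ or $G\cap\mathbf{Def}(\mathbf{S})\neq\varnothing$. In both alternatives some element of $\mathbf{S}$ (namely $\mathbf{k}_G^x$, or an element of $G$) lies in $\mathbf{Def}(\mathbf{S})$. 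This contradicts $\mathbf{S}\cap\mathbf{Def}(\mathbf{S})=\varnothing$.

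\emph{Case $x\in\mathbf{Def}_s(\mathbf{S})$.} By definition $x\notin\mathbf{Sup}(\mathbf{U}\setminus\mathbf{Def}_a(\mathbf{S}))$. On the other hand, acceptability gives $x\in\mathbf{Sup}(\mathbf{S})$. Conflict-freeness gives in particular $\mathbf{S}\cap\mathbf{Def}_a(\mathbf{S})=\varnothing$, that is, $\mathbf{S}\subseteq\mathbf{U}\setminus\mathbf{Def}_a(\mathbf{S})$. Monotonicity (Lemma~\ref{eqmonofix}(2)) then yields
\[
x\in\mathbf{Sup}(\mathbf{S})\subseteq\mathbf{Sup}(\mathbf{U}\setminus\mathbf{Def}_a(\mathbf{S})),
\]
which is a contradiction.

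Neither step is deep. The point needing the most care is the second case: one must notice that $\mathbf{Def}_s$ is computed relative to $\mathbf{U}\setminus\mathbf{Def}_a(\mathbf{S})$ rather than relative to $\mathbf{S}$, so the argument depends on the monotonicity of $\mathbf{Sup}$. That monotonicity holds despite the recursive "$\mathbf{S}\setminus\{x\}$" clause, as already established in Lemma~\ref{eqmonofix}. I would also check that only the weaker condition $\mathbf{S}\cap\mathbf{Def}_a(\mathbf{S})=\varnothing$ is needed there.
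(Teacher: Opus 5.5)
Your proposal is correct and matches the paper's proof. The paper makes the same split into the $\mathbf{Def}_a$ and $\mathbf{Def}_s$ cases. It closes the attack case by conflict-freeness, and the support case by monotonicity of $\mathbf{Sup}$ applied to $\mathbf{S}\subseteq\mathbf{U}\setminus\mathbf{Def}_a(\mathbf{S})$.
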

    
    \begin{proof}
    	Assume, for contradiction, that there exists \(x\in \mathbf{Acc}(S)\cap \mathbf{Def}(S)\).
    	
    	\textit{Case 1:} \(x\in \mathbf{Def}_a(S)\). Then there exists an attack \(\mathbf{k}_G^x\in S_a\) with \(G\subseteq S\). Since \(S\) is conflict-free, we have \(S\cap \mathbf{Def}(S)=\varnothing\). Hence \(\mathbf{k}_G^x\notin \mathbf{Def}(S)\) and \(G\cap \mathbf{Def}(S)=\varnothing\) (because \(G\subseteq S\)). But \(x\in \mathbf{Acc}(S)\) requires that for every attack \(\mathbf{k}_G^x\), either \(\mathbf{k}_G^x\in \mathbf{Def}(S)\) or \(G\cap \mathbf{Def}(S)\neq\varnothing\). Contradiction.
    	
    	\textit{Case 2:} \(x\in \mathbf{Def}_s(S)\). By definition, \(\mathbf{Def}_s(S)=\mathbf{U}\setminus \mathbf{Sup}(\mathbf{T})\), where \(\mathbf{T}=\mathbf{U}\setminus \mathbf{Def}_a(S)\). Thus \(x\notin \mathbf{Sup}(\mathbf{T})\). Since \(S\) is conflict-free, we have \(S\cap \mathbf{Def}_a(S)=\varnothing\), so \(S\subseteq \mathbf{T}\). The operator \(\mathbf{Sup}\) is monotone with respect to set inclusion; hence \(\mathbf{Sup}(S)\subseteq \mathbf{Sup}(\mathbf{T})\). But \(x\in \mathbf{Acc}(S)\) implies \(x\in \mathbf{Sup}(S)\), so \(x\in \mathbf{Sup}(\mathbf{T})\), contradicting \(x\notin \mathbf{Sup}(\mathbf{T})\).
    	
    	Both cases lead to contradictions. Therefore no such \(x\) exists, and the lemma follows.
    \end{proof}

    \begin{defn}\label{extlab}
    	For an $EHSAF = (\mathbf{A}, \mathbf{R_a}, \mathbf{R_e}, \mathbf{U}, \mathbf{P})$, any complete extension $E$ of the EHSAF and $\forall \alpha\in \mathbf{U}$, an \emph{extension-based complete labelling} of the EHSAF w.r.t. $E$ is a function $L_E:  \mathbf{U}\to\{0,1,\frac{1}{2}\}$ such that
    	\begin{equation*}
    		L_E(\alpha)=\begin{cases}
    			1 & \text{ iff } \alpha\in E;\\
    			0 & \text{ iff } \alpha\in \mathbf{Def}(E);\\
    			\frac{1}{2} & \text{ otherwise.}
    		\end{cases}
    	\end{equation*}
    	The extension-based complete labelling semantics is a labelling semantics $\mathfrak{LS}_{ex}$, such that for any EHSAF, 
    	\begin{equation*}
    		\mathfrak{LS}_{ec}(EHSAF)=\{\|\cdot\|\mid\|\cdot\| \text{ is a extension-based labelling of the }EHSAF\}.
    	\end{equation*}
    \end{defn}
    \begin{rmk}\label{remark1}
    	Note that, for any $EHSAF = (\mathbf{A}, \mathbf{R_a}, \mathbf{R_e}, \mathbf{U}, \mathbf{P})$, from Definitions \ref{extsem} and \ref{extlab} the auxiliary arguments $\top$, $(\{\bot\}, \alpha)_a$, $(\{\bot\}, \beta)_e$ and $(\{\top\}, \gamma)_e$ in $\mathbf{U}$ belong to each complete extension $E$ and the auxiliary argument $\bot$ belongs to any $\mathbf{Def}(E)$. Thus, the extension-based complete labelling coincides with the labelling requirements in Definition \ref{semantics} and also the adjacent complete labelling in Definition \ref{comlab}.
    \end{rmk}
    Note that, for any $EHSAF = (\mathbf{A}, \mathbf{R_a}, \mathbf{R_e}, \mathbf{U}, \mathbf{P})$, from Definitions \ref{extsem} and \ref{extlab} the auxiliary arguments $\top$, $(\{\bot\}, \alpha)_a$, $(\{\bot\}, \beta)_e$ and $(\{\top\}, \gamma)_e$ in $\mathbf{U}$ belong to each complete extension $E$ and the auxiliary argument $\bot$ belongs to any $\mathbf{Def}(E)$. Thus, the extension-based complete labelling coincides with the labelling requirements in Definition \ref{semantics} and also the adjacent complete labelling in Definition \ref{comlab}.
    
    \begin{lem}\label{lem1}
    	For an EHSAF $(\mathbf{A},\mathbf{R_a},\mathbf{R_e},\mathbf{U},\mathbf{P})$, let $E\subseteq\mathbf{U}$ and $\mathbf{T}=\mathbf{U}\setminus \mathbf{Def}_a(E)$. If $x\in \mathbf{Sup}(\mathbf{T})$ and $\beta\in \mathbf{U}\setminus \mathbf{Sup}(\mathbf{T})$, then $x\in \mathbf{Sup}(\mathbf{T}\setminus\{\beta\})$.
    \end{lem}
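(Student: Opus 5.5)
We prove the lemma by removing $\beta$ from a support tree. Fix $x\in\mathbf{Sup}(\mathbf{T})$. By Lemma~\ref{eqmonofix}(1) we choose a support tree $\mathcal{T}$ for $x$ with respect to $\mathbf{T}$. The goal is to show that $\mathcal{T}$, possibly after pruning, is also a support tree with respect to $\mathbf{T}\setminus\{\beta\}$; Lemma~\ref{eqmonofix}(1) then gives $x\in\mathbf{Sup}(\mathbf{T}\setminus\{\beta\})$. The proof rests on one observation: no non-root node of $\mathcal{T}$ is labelled by $\beta$.

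\textbf{Step 1 (every non-root label is supported).} Let $z$ label a non-root node $v$ of $\mathcal{T}$. Then $v$ is a child of some node labelled $y$, and Definition~\ref{suptree} gives $z\in\mathbf{Sup}(\mathbf{T}\setminus\{y\})$. Monotonicity (Lemma~\ref{eqmonofix}(2)) yields $z\in\mathbf{Sup}(\mathbf{T})$. Since $\beta\notin\mathbf{Sup}(\mathbf{T})$, it follows that $z\neq\beta$. Hence every non-root label lies in $\mathbf{T}\setminus\{\beta\}$.

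\textbf{Step 2 (the root).} If $x=\beta$ we would have $\beta\in\mathbf{Sup}(\mathbf{T})$, a contradiction. So $x\neq\beta$, and $\beta$ occurs nowhere in $\mathcal{T}$.

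\textbf{Step 3 (the side conditions), the main obstacle.} Each internal node labelled $y$ carries the requirement $\mathbf{t}_H^y\in\mathbf{Sup}(\mathbf{T}\setminus\{y\})$ and $H\subseteq\mathbf{Sup}(\mathbf{T}\setminus\{y\})$. We must upgrade this to $\mathbf{Sup}(\mathbf{T}\setminus\{y,\beta\})$. This is not a single tree-closure check, because these memberships are witnessed by further derivations relative to the smaller sets $\mathbf{T}\setminus\{y\}$.

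We handle this by strengthening the claim and inducting on derivation depth, as in the proof of Lemma~\ref{eqmonofix}(2). The strengthened claim $\mathcal{Q}(n)$ reads: for every $W\subseteq\mathbf{T}$ and every $z\in\mathbf{Sup}(W)$ with a derivation of depth at most $n$, we have $z\in\mathbf{Sup}(W\setminus\{\beta\})$.

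\emph{Base case.} If $z\in\mathbf{P}$, then $z\in\mathbf{Sup}(W\setminus\{\beta\})$ immediately.

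\emph{Inductive step.} Suppose $z$ is derived via $\mathbf{t}_H^z\in W_e$ with $\mathbf{t}_H^z,H\subseteq\mathbf{Sup}(W\setminus\{z\})$ and $H\subseteq W$.
\begin{itemize}
\item Since $W\setminus\{z\}\subseteq\mathbf{T}$, monotonicity gives $\mathbf{t}_H^z\in\mathbf{Sup}(\mathbf{T})$ and $H\subseteq\mathbf{Sup}(\mathbf{T})$. Therefore $\beta\notin H\cup\{\mathbf{t}_H^z\}$, so $\mathbf{t}_H^z\in(W\setminus\{\beta\})_e$ and $H\subseteq W\setminus\{\beta\}$.
\item The premises have smaller depth. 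Applying the induction hypothesis with $W\setminus\{z\}$ in place of $W$ gives membership in $\mathbf{Sup}(W\setminus\{z,\beta\})$.
\item The recursive clause of Definition~\ref{defnI} then yields $z\in\mathbf{Sup}(W\setminus\{\beta\})$.
\end{itemize}
Taking $W=\mathbf{T}$ proves the lemma.

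This argument uses only monotonicity and $\beta\notin\mathbf{Sup}(\mathbf{T})$. The definition of $\mathbf{T}$ via $\mathbf{Def}_a(E)$ is not needed; it only matters for applying the lemma later.
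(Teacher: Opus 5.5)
Your proof is correct. Its key observation is the paper's: any element appearing as a premise in a derivation relative to a subset of $\mathbf{T}$ lies in $\mathbf{Sup}(\mathbf{T})$ by monotonicity, and so differs from $\beta$. The execution is different, however.

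The paper picks a support tree for $x$ with respect to $\mathbf{T}$ and shows that $\beta$ labels no node. The root is $x\in\mathbf{Sup}(\mathbf{T})$, and a subtree rooted at a non-root occurrence of $\beta$ would certify $\beta\in\mathbf{Sup}(\mathbf{T})$. It then concludes at once that, because all non-root labels lie in $\mathbf{T}\setminus\{\beta\}$, the same tree witnesses $x\in\mathbf{Sup}(\mathbf{T}\setminus\{\beta\})$.

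As you note in Step 3, that final inference checks only node labels. Definition~\ref{suptree}, read relative to $\mathbf{T}\setminus\{\beta\}$, also requires $\mathbf{t}_{H}^{y}\in\mathbf{Sup}(\mathbf{T}\setminus\{y,\beta\})$ and $H\subseteq\mathbf{Sup}(\mathbf{T}\setminus\{y,\beta\})$ at every internal node, and the paper never verifies this.

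Your strengthened claim $\mathcal{Q}(n)$ supplies exactly the missing step. It is quantified over all $W\subseteq\mathbf{T}$, not just $W=\mathbf{T}$, and proved by induction on derivation depth in the style of Lemma~\ref{eqmonofix}(2). The quantification over all $W$ is essential, because the side conditions refer to the sets $\mathbf{T}\setminus\{y\}$.

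The paper's argument is shorter and more visual; yours is the rigorous one. Note also that $\mathcal{Q}(n)$ proves the lemma on its own, taking $W=\mathbf{T}$ and $z=x$. So Steps 1--2 and the support tree are redundant and could be dropped.
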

    
    \begin{proof}
    	By Lemma \ref{eqmonofix}, every element $y\in \mathbf{Sup}(\mathbf{T})$ has a finite support tree rooted at $y$, whose leaves are elements of $\mathbf{P}$ and all non-root nodes (including support names and source arguments) lie in $\mathbf{T}$.
    	Now take $x\in \mathbf{Sup}(\mathbf{T})$, and choose any support tree $\mathcal{T}$ for $x$. Suppose $\mathcal{T}$ contains $\beta$. Since $\mathcal{T}$ is closed except for the root, every non-root node label lies in $\mathbf{T}$. 
    	Suppose $\mathcal{T}$ contains $\beta$. Because $\beta \notin \mathbf{Sup}(\mathbf{T})$, $\beta$ cannot be the root (as the root is $x \in \mathbf{Sup}(\mathbf{T})$). Hence $\beta$ is a non-root node, so its label must belong to $\mathbf{T}$, which gives a valid subtree rooted at $\beta$ certifying $\beta \in \mathbf{Sup}(\mathbf{T})$ by Lemma \ref{eqmonofix}, a contradiction. Thus $\mathcal{T}$ does not contain $\beta$, so all non-root nodes of $\mathcal{T}$ belong to $\mathbf{T}\setminus\{\beta\}$, witnessing $x\in \mathbf{Sup}(\mathbf{T}\setminus\{\beta\})$.
    \end{proof}
    
    \begin{thm}\label{thmea}
    	For an EHSAF $(\mathbf{A},\mathbf{R_a},\mathbf{R_e},\mathbf{U},\mathbf{P})$, if a labelling is an extension-based complete labelling of the EHSAF, then it is an adjacent complete labelling of the EHSAF.
    \end{thm}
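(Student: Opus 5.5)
We fix a complete extension $E$ and check, element by element, that $L_E$ satisfies the three clauses of Definition~\ref{comlab}. Well-definedness of $L_E$ follows from conflict-freeness ($E\cap\mathbf{Def}(E)=\varnothing$). The labelling requirements for the auxiliary elements hold by Remark~\ref{remark1}. Since both the values of $L_E$ and the three clauses of Definition~\ref{comlab} partition the possibilities, it suffices to prove, for every $\beta\in\mathbb{U}$:
\[
\text{(I) } \beta\in E \Longleftrightarrow \mathrm{Acc}_1(\beta), \qquad \text{(II) } \beta\in\mathbf{Def}(E)\Longleftrightarrow \mathrm{Rej}_0(\beta).
\]
Here $\mathrm{Acc}_1(\beta)$ and $\mathrm{Rej}_0(\beta)$ denote the bracketed right-hand sides of the first two bullets of Definition~\ref{comlab}, read under $L_E$. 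The $\frac12$ case then follows by elimination. Throughout we use two facts. First, $E=\mathbf{Acc}(E)\subseteq\mathbf{Sup}(E)$. Second, ``label $0$'' means membership in $\mathbf{Def}(E)$ and ``label $1$'' means membership in $E$.

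\emph{Step (I).} Suppose $\beta\in E=\mathbf{Acc}(E)$. The attack condition in $\mathbf{Acc}$ says exactly that every attack $\mathbf{k}_G^\beta$ lies in $\mathbf{Def}(E)$ or has a source in $\mathbf{Def}(E)$, i.e.\ has a $0$-labelled member. For the support half we use $\beta\in\mathbf{Sup}(E)$. If $\beta\in\mathbb{P}$, the support $(\{\top\},\beta)_e$ and $\top$ both lie in $E$ by Remark~\ref{remark1}. Otherwise the recursive clause gives some $\mathbf{t}_H^\beta\in E_e$ with $H\subseteq E$. In both cases all members of that support are labelled $1$. Conversely, assume $\mathrm{Acc}_1(\beta)$ and, for contradiction, $\beta\notin E$. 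Then $E\setminus\{\beta\}=E$. The witnessing support $\mathbf{t}_H^\beta$ and all of $H$ lie in $E\subseteq\mathbf{Sup}(E)=\mathbf{Sup}(E\setminus\{\beta\})$, so the recursive clause gives $\beta\in\mathbf{Sup}(E)$. The attack half of $\mathrm{Acc}_1$ is exactly the attack condition of $\mathbf{Acc}$. Hence $\beta\in\mathbf{Acc}(E)=E$, a contradiction. The trick of assuming $\beta\notin E$ is what defuses the ``$\setminus\{\beta\}$'' in the definition of $\mathbf{Sup}$.

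\emph{Step (II).} Let $\mathbf{T}=\mathbf{U}\setminus\mathbf{Def}_a(E)$. Membership $\beta\in\mathbf{Def}_a(E)$ is literally the statement that some attack on $\beta$ has all members in $E$, i.e.\ all labelled $1$. So it remains to show
\[
\beta\in\mathbf{Def}_s(E)\ \text{(i.e.\ } \beta\notin\mathbf{Sup}(\mathbf{T})) \Longleftrightarrow \text{every support of }\beta\text{ has a }0\text{-labelled member},
\]
modulo the attack disjunct.

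($\Rightarrow$) Assume $\beta\notin\mathbf{Sup}(\mathbf{T})$, and suppose some support $\mathbf{t}_H^\beta$ has no member in $\mathbf{Def}(E)$. Each such member $z$ avoids $\mathbf{Def}_a(E)$, so $z\in\mathbf{T}$. Each such member also avoids $\mathbf{Def}_s(E)$, so $z\in\mathbf{Sup}(\mathbf{T})$. By Lemma~\ref{lem1} applied with this $\beta$, each member lies in $\mathbf{Sup}(\mathbf{T}\setminus\{\beta\})$. The recursive clause then gives $\beta\in\mathbf{Sup}(\mathbf{T})$, a contradiction.

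($\Leftarrow$) Assume every support of $\beta$ has a $0$-labelled member, and suppose $\beta\in\mathbf{Sup}(\mathbf{T})$. If $\beta\in\mathbb{P}$, the support $(\{\top\},\beta)_e$ has all members in $E$, which is disjoint from $\mathbf{Def}(E)$, a contradiction. Otherwise there is some $\mathbf{t}_H^\beta\in\mathbf{T}$ with $H\subseteq\mathbf{T}\cap\mathbf{Sup}(\mathbf{T}\setminus\{\beta\})$. Both $\mathbf{t}_H^\beta$ and $H$ are also contained in $\mathbf{Sup}(\mathbf{T})$ by monotonicity (Lemma~\ref{eqmonofix}). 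Hence every member avoids both $\mathbf{Def}_a(E)$ and $\mathbf{Def}_s(E)$, so none is labelled $0$, again a contradiction.

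I expect the main obstacle to be the ($\Rightarrow$) direction of Step (II). There one must rebuild a support derivation for $\beta$ relative to $\mathbf{T}\setminus\{\beta\}$ rather than $\mathbf{T}$. This is exactly why Lemma~\ref{lem1} was prepared. A minor care point throughout is handling prima-facie targets via the auxiliary supports $(\{\top\},\beta)_e$ and auxiliary attacks from $\bot$, both of which are settled by Remark~\ref{remark1}.
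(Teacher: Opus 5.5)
Your proof is correct and essentially the paper's argument, regrouped: your forward directions of (I) and (II) are the paper's Cases 1 and 2, including the use of Lemma~\ref{lem1} for the $\mathbf{Def}_s$ branch, and your converse directions are the paper's Case 3 recast as biconditionals, with the $\frac12$ case obtained by elimination. Your observation that $\beta\notin E$ gives $E\setminus\{\beta\}=E\subseteq\mathbf{Sup}(E)$ usefully supplies the justification for $\beta\in\mathbf{Sup}(E)$, which the paper's Case 3 simply asserts.
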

    
    \begin{proof}
    	Let the EHSAF $(\mathbf{A},\mathbf{R_a},\mathbf{R_e},\mathbf{U},\mathbf{P})$ be given, and let $E\subseteq \mathbf{U}$ be a complete extension (Definition~\ref{extsem}). Let $L_E:\mathbf{U}\to\{0,1,\tfrac12\}$ be the extension-based complete labelling induced by $E$ according to Definition~\ref{extlab}. We show that $L_E$ satisfies the three conditions of Definition~\ref{comlab} for every $\beta\in\mathbf{U}$, where all auxiliary arguments hold according to Remark \ref{remark1}.
    	
    	\medskip
    	\noindent\textbf{Case 1: $L_E(\beta)=1$.} 
    	Then $\beta\in E$. Since $E$ is complete, by Definition~\ref{extsem} we have $E=\mathbf{Acc}(E)$, hence $\beta\in \mathbf{Acc}(E)$. By Definition~\ref{defnI},
    	\[
    	\beta\in \mathbf{Sup}(E)
    	\quad\text{and}\quad
    	\forall \mathbf{k}_{G}^{\beta}\in \mathbf{R_a}:\ 
    	\mathbf{k}_{G}^{\beta}\in \mathbf{Def}(E)\ \text{or}\ G\cap \mathbf{Def}(E)\neq \varnothing.
    	\]
    	
    	We verify the \emph{positive} labelling condition of Definition~\ref{comlab}. First, consider any attack $\mathbf{k}_{G}^{\beta}\in \mathbf{R_a}$. Since $\beta\in \mathbf{Acc}(E)$, either $\mathbf{k}_{G}^{\beta}\in \mathbf{Def}(E)$ or $G\cap \mathbf{Def}(E)\neq\varnothing$. By the definition of $L_E$, this means either $L_E(\mathbf{k}_{G}^{\beta})=0$ or there exists some $g\in G$ with $L_E(g)=0$. Thus the attack part of the positive condition holds.
    	
    	Second, we verify the support part. Since $\beta\in \mathbf{Sup}(E)$, by Definition~\ref{defnI}:
    	\[
    	\beta\in \mathbf{P}
    	\quad\text{or}\quad
    	\exists \mathbf{t}_{H}^{\beta}\in E_e\ \text{with}\ 
    	\mathbf{t}_{H}^{\beta}\in \mathbf{Sup}(E\setminus\{\beta\})
    	\ \text{and}\ 
    	H\subseteq E\cap \mathbf{Sup}(E\setminus\{\beta\}).
    	\]
    	If $\beta\in\mathbf{P}$, then by Definition~\ref{EHSAF} there is an auxiliary support $(\{\top\},\beta)_e\in \mathbf{R_e}$ with $\top\in E$ (since $\top$ is prima-facie and cannot be defeated), so the support condition is satisfied with $H=\{\top\}$ and $\|\mathbf{t}_{H}^{\beta}\|=1$. If $\beta\notin\mathbf{P}$, the above existence gives a support $\mathbf{t}_{H}^{\beta}\in E_e$ (hence $L_E(\mathbf{t}_{H}^{\beta})=1$) such that every $h\in H$ belongs to $E\cap \mathbf{Sup}(E\setminus\{\beta\})$. Since $h\in E$ for all $h\in H$, by the definition of $L_E$ we have $L_E(h)=1$. Hence the support part holds. Therefore $L_E(\beta)=1$ satisfies the “1” condition of Definition~\ref{comlab}.

    	\medskip
    	\noindent\textbf{Case 2: $L_E(\beta)=0$.}
    	Then $\beta\in \mathbf{Def}(E)=\mathbf{Def}_a(E)\cup \mathbf{Def}_s(E)$.
    	
    	If $\beta\in \mathbf{Def}_a(E)$, there exists $\mathbf{k}_{G}^{\beta}\in E_a$ with $G\subseteq E$, hence $L_E(\mathbf{k}_{G}^{\beta})=1$ and $L_E(g)=1$ for all $g\in G$. This satisfies the attack branch of the ``0'' condition.
    	
    	If $\beta\in \mathbf{Def}_s(E)$, by Definition~\ref{defnI},
    	\[
    	\beta\notin \mathbf{Sup}(\mathbf{T}),\quad\text{where } \mathbf{T}:=\mathbf{U}\setminus \mathbf{Def}_a(E).
    	\]
    	We must prove the support branch of the ``0'' condition:
    	\[
    	\forall \mathbf{t}_{H}^{\beta}\in\mathbf{R_e}:\quad
    	\mathbf{t}_{H}^{\beta}\in \mathbf{Def}(E)\ \text{or}\ \exists h\in H: h\in \mathbf{Def}(E).
    	\]
    	
    	Assume, for contradiction, that there exists a support $\mathbf{t}_{H}^{\beta}\in\mathbf{R_e}$ such that
    	\[
    	\mathbf{t}_{H}^{\beta}\notin \mathbf{Def}(E)\quad\text{and}\quad \forall h\in H: h\notin \mathbf{Def}(E).
    	\]
    	Since $\mathbf{t}_{H}^{\beta}\notin \mathbf{Def}(E)$ and $h\notin \mathbf{Def}(E)$ for all $h\in H$, we have
    	\[
    	\mathbf{t}_{H}^{\beta}\in \mathbf{U}\setminus \mathbf{Def}(E) \subseteq\mathbf{T},\quad H\subseteq \mathbf{U}\setminus \mathbf{Def}(E)\subseteq\mathbf{T}.
    	\]
    	By the definition of $\mathbf{Def}(E)=\mathbf{Def}_a(E)\cup \mathbf{Def}_s(E)$ and $\mathbf{Def}_s(E)=\mathbf{U}\setminus \mathbf{Sup}(\mathbf{T})$, we have $\mathbf{U}\setminus \mathbf{Def}(E)=\mathbf{Sup}(\mathbf{T})$. Therefore
    	\[
    	\mathbf{t}_{H}^{\beta}\in \mathbf{Sup}(\mathbf{T}),\quad H\subseteq \mathbf{Sup}(\mathbf{T}).
    	\]		
    	Since $\beta\notin \mathbf{Sup}(\mathbf{T})$, by Lemma \ref{lem1}, we obtain
    	\[
    	\mathbf{t}_{H}^{\beta}\in \mathbf{Sup}(\mathbf{T}\setminus\{\beta\}),\quad H\subseteq \mathbf{Sup}(\mathbf{T}\setminus\{\beta\}).
    	\]
    	Now, since $\mathbf{t}_{H}^{\beta}\in \mathbf{T}_e \cap \mathbf{Sup}(\mathbf{T}\setminus\{\beta\})$ and $H\subseteq \mathbf{T} \cap \mathbf{Sup}(\mathbf{T}\setminus\{\beta\})$, the definition of $\mathbf{Sup}$ (Definition~\ref{defnI}) directly yields $\beta\in \mathbf{Sup}(\mathbf{T})$, contradicting $\beta\notin \mathbf{Sup}(\mathbf{T})$.
    	
    	Thus our assumption was false. Therefore, for every support $\mathbf{t}_{H}^{\beta}$, either $\mathbf{t}_{H}^{\beta}\in \mathbf{Def}(E)$ or some $h\in H$ belongs to $\mathbf{Def}(E)$. This is exactly the support branch of the ``0'' condition. Hence $L_E(\beta)=0$ satisfies the “0” condition of Definition~\ref{comlab}.
    	
    	\medskip
    	\noindent\textbf{Case 3: $L_E(\beta)=\tfrac12$.}
    	
    	Then $\beta\notin E$ and $\beta\notin \mathbf{Def}(E)$. Since $E$ is complete, we have $E=\mathbf{Acc}(E)$, hence $\beta\notin \mathbf{Acc}(E)$. By Definition~\ref{defnI}, this means that either
    	\[
    	\beta\notin \mathbf{Sup}(E),
    	\]
    	or there exists an attack $\mathbf{k}_G^\beta\in \mathbf{R_a}$ such that
    	\[
    	\mathbf{k}_G^\beta\notin \mathbf{Def}(E)\quad\text{and}\quad G\cap \mathbf{Def}(E)=\varnothing.
    	\]
    	We show that $\beta$ satisfies neither the ``1'' condition nor the ``0'' condition of Definition~\ref{comlab}.
    	
    	First, suppose for contradiction that the ``1'' condition holds. Then $\beta\in \mathbf{Sup}(E)$ and for every attack $\mathbf{k}_G^\beta\in \mathbf{R_a}$, we have either $\mathbf{k}_G^\beta\in \mathbf{Def}(E)$ (since $\|\mathbf{k}_G^\beta\|=0$) or $G\cap \mathbf{Def}(E)\neq\varnothing$ (since some $g\in G$ has $\|g\|=0$). Therefore, by the definition of $\mathbf{Acc}(E)$ (Definition~\ref{defnI}), $\beta\in \mathbf{Acc}(E)$, contradicting $\beta\notin \mathbf{Acc}(E)$. Hence the ``1'' condition cannot hold.
    	
    	Now suppose, toward a contradiction, that the ``0'' condition holds. There are two possible branches.
    	
    	\begin{itemize}
    		\item \emph{Attack branch:} there exists $\mathbf{k}_G^\beta\in \mathbf{R_a}$ such that $\|\mathbf{k}_G^\beta\|=1$ and $\|g\|=1$ for all $g\in G$. By definition of $L_E$, this means $\mathbf{k}_G^\beta\in E$ and $G\subseteq E$. Hence, by Definition~\ref{defnI}, $\beta\in \mathbf{Def}_a(E)\subseteq \mathbf{Def}(E)$, contradicting $\beta\notin \mathbf{Def}(E)$.
    		\item \emph{Support branch:} for every support $\mathbf{t}_H^\beta\in \mathbf{R_e}$, we have
    		\begin{equation}\label{tag*}
    			\mathbf{t}_H^\beta\in \mathbf{Def}(E)\quad\text{or}\quad H\cap \mathbf{Def}(E)\neq\varnothing.
    		\end{equation}
    		We prove that this leads to $\beta\in \mathbf{Def}_s(E)$. Recall that $\mathbf{Def}_s(E)=\mathbf{U}\setminus \mathbf{Sup}(\mathbf{T})$, where $\mathbf{T}=\mathbf{U}\setminus \mathbf{Def}_a(E)$. Since $\beta\notin \mathbf{Def}(E)$, we have $\beta\notin \mathbf{Def}_a(E)$, so $\beta\in\mathbf{T}$. We discuss two cases.
    		
    		\begin{itemize}
    			\item Case 1, $\beta\in \mathbf{Sup}(\mathbf{T})$. By Definition~\ref{defnI}, either $\beta\in P$ or there exists a support $\mathbf{t}_H^\beta$ such that
    			\[
    			\mathbf{t}_H^\beta\in \mathbf{T}\cap \mathbf{Sup}(\mathbf{T}\setminus\{\beta\})
    			\quad\text{and}\quad
    			H\subseteq \mathbf{T}\cap \mathbf{Sup}(\mathbf{T}\setminus\{\beta\}).
    			\]
    			In either case, the support $\mathbf{t}_H^\beta$ satisfies
    			\begin{equation}\label{tag**}
    				\mathbf{t}_H^\beta\in \mathbf{T}\cap \mathbf{Sup}(\mathbf{T})
    				\quad\text{and}\quad
    				H\subseteq \mathbf{T}\cap \mathbf{Sup}(\mathbf{T}).
    			\end{equation}
    			From (\ref{tag**}) we get $\mathbf{t}_H^\beta\in \mathbf{Sup}(\mathbf{T})$, so $\mathbf{t}_H^\beta\notin \mathbf{Def}_s(E)$ because $\mathbf{Def}_s(E)=\mathbf{U}\setminus \mathbf{Sup}(\mathbf{T})$. Also $\mathbf{t}_H^\beta\in \mathbf{T}$, hence $\mathbf{t}_H^\beta\notin \mathbf{Def}_a(E)$. Therefore $\mathbf{t}_H^\beta\notin \mathbf{Def}(E)$. Applying the support branch condition (\ref{tag*}) to this support, we must have $H\cap \mathbf{Def}(E)\neq\varnothing$. Pick $h\in H\cap \mathbf{Def}(E)$. But $H\subseteq \mathbf{T}\cap \mathbf{Sup}(\mathbf{T})$, so $h\in \mathbf{T}$ and $h\in \mathbf{Sup}(\mathbf{T})$. Thus $h\notin \mathbf{Def}_a(E)$ and $h\notin \mathbf{Def}_s(E)$, so $h\notin \mathbf{Def}(E)$, contradicting $h\in \mathbf{Def}(E)$. 
    			
    			\item Case 2, $\beta\notin \mathbf{Sup}(\mathbf{T})$. If $\beta\notin \mathbf{Sup}(\mathbf{T})$, then $\beta\in \mathbf{Def}_s(E)\subseteq \mathbf{Def}(E)$, contradicting $\beta\notin \mathbf{Def}(E)$.
    		\end{itemize}
    	\end{itemize}
    	
    	Thus the ``0'' condition also cannot hold. Since neither the ``1'' nor the ``0'' condition is satisfied, $L_E(\beta)=\frac{1}{2}$ satisfies the “$\frac{1}{2}$” condition of Definition~\ref{comlab}. This completes Case~3.		
    	
    	We have shown that for every $\beta\in\mathbf{U}$, $L_E(\beta)$ obeys the adjacent complete labelling conditions of Definition~\ref{comlab}. Therefore $L_E$ is an adjacent complete labelling. This proves the theorem. \qedhere
    \end{proof}
    \begin{defn}[Support-Acyclic EHSAFs]
    	An EHSAF $\mathcal{F}=(\mathbf{A},\mathbf{R_a},\mathbf{R_e},\mathbf{U},\mathbf{P})$ is called \emph{support-acyclic} (or \emph{source-seeking and disjoint}) if there exists a rank function 
    	\[
    	\rho: \mathbf{U} \to \mathbb{N}
    	\]
    	such that the following conditions hold:
    	\begin{enumerate}
    		\item For every $p\in \mathbf{P}$, $\rho(p)=0$.
    		\item For every support $\mathbf{t}_{H}^{x}\in \mathbf{R_e}$, we have
    		\[
    		\rho(\mathbf{t}_{H}^{x}) < \rho(x)
    		\quad\text{and}\quad
    		\forall h\in H:\; \rho(h) < \rho(x).
    		\]
    	\end{enumerate}
    	This condition ensures that every support chain strictly decreases toward elements in $\mathbf{P}$. Hence every supported element ultimately derives its support from $\mathbf{P}$, and no cycles occur in the support graph.
    \end{defn}
    
    \begin{lem}[Positive labelling implies evidential support]
    	\label{lem:pos}
    	Let \((\mathbf{A},\mathbf{R_a},\mathbf{R_e},\mathbf{U},\mathbf{P})\) be support-acyclic with rank function \(\rho\), \(L\) be an adjacent complete labelling and \(E=\{x\in\mathbf{U}\mid L(x)=1\}\). For every \(y\in\mathbf{U}\),
    	\[
    	L(y)=1 \;\Longrightarrow\; y\in \mathbf{Sup}(E).
    	\]
    \end{lem}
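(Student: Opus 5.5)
We argue by strong induction on the rank $\rho(y)$ and show that every $y$ with $L(y)=1$ has a support tree with respect to $E$. By Lemma~\ref{eqmonofix}(1), such a tree gives $y\in\mathbf{Sup}(E)$. The claim we actually prove is slightly stronger, since we need it in the recursive clause: for every $y$ with $L(y)=1$ and every $\mathbf{S}\supseteq E$ with $y\notin \mathbf{S}$ allowed, $y\in\mathbf{Sup}(E\setminus\{y\})$. Since $y$ is the root of the tree, it does not need to lie in the set, so it is enough to build a tree that never uses $y$ at a non-root node.

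\emph{Base case.} Suppose $y\in\mathbf{P}$. Then $y\in\mathbf{Sup}(E')$ for every $E'$, and we are done. Every element of rank $0$ that is not in $\mathbf{P}$ is handled by the inductive step, and it turns out to be impossible. Indeed, by the ``1'' condition of Definition~\ref{comlab} there is a support $\mathbf{t}_T^y$ with $L(\mathbf{t}_T^y)=1$ and $L(c)=1$ for all $c\in T$. Support-acyclicity then gives $\rho(\mathbf{t}_T^y)<\rho(y)$ and $\rho(c)<\rho(y)$, so $\rho(y)\ge 1$.

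\emph{Inductive step.} Let $L(y)=1$ with $y\notin\mathbf{P}$. The ``1'' clause of Definition~\ref{comlab} supplies a support $\mathbf{t}_T^y\in\mathbf{R_e}$ such that $\mathbf{t}_T^y\in E$ and $T\subseteq E$, all of strictly smaller rank. The auxiliary supports $(\{\bot\},\beta)_e$ cannot be this support, because $L(\bot)=0$. By the induction hypothesis, every $z\in T\cup\{\mathbf{t}_T^y\}$ has a support tree $\mathcal{T}_z$ with respect to $E$.

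\emph{Key point: $y$ does not occur in any $\mathcal{T}_z$.} Every node label in a support tree is a source or a support whose target is its parent label. Ranks therefore strictly decrease along every edge. Since $\rho(z)<\rho(y)$, no node of $\mathcal{T}_z$ can carry the label $y$. It follows that each $\mathcal{T}_z$ is also a support tree with respect to $E\setminus\{y\}$. To check this, we must verify the side conditions $\mathbf{t}_H^w\in\mathbf{Sup}(\mathbf{S}\setminus\{w\})$ at inner nodes when $\mathbf{S}$ is replaced by $E\setminus\{y\}$. We do this by an inner induction on node height, using monotonicity (Lemma~\ref{eqmonofix}(2)) together with the fact that the subtrees only use labels in $(E\setminus\{y\})\setminus\{w\}$.

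This gives $\mathbf{t}_T^y\in E\cap\mathbf{Sup}(E\setminus\{y\})$ and $T\subseteq E\cap\mathbf{Sup}(E\setminus\{y\})$. The recursive clause of Definition~\ref{defnI} then yields $y\in\mathbf{Sup}(E)$, which completes the induction.

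\emph{Main obstacle.} The delicate step is the ``removal'' bookkeeping in the recursive clause: deleting $y$, and at deeper nodes deleting each intermediate target, must not destroy the lower derivations. To handle it cleanly, we first prove an auxiliary claim by induction on rank: if $L(z)=1$, then $z\in\mathbf{Sup}(E\cap\{w\mid \rho(w)<\rho(z)\})$. The set $E\cap\{w\mid \rho(w)<\rho(z)\}$ automatically excludes $z$. It also excludes every higher-ranked element, and in particular $y$. Monotonicity then lifts this to $\mathbf{Sup}(E\setminus\{y\})$ and to $\mathbf{Sup}(E)$.
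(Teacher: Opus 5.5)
Your proof is correct and follows essentially the same route as the paper: strong induction on $\rho$, taking the witnessing support from the ``1'' clause of Definition~\ref{comlab}, and using the strict rank decrease along support edges to see that $y$ cannot occur in the lower derivations, so the recursive clause for $\mathbf{Sup}$ applies with $E\setminus\{y\}$. Your rank-restricted auxiliary claim $z\in\mathbf{Sup}(E\cap\{w\mid\rho(w)<\rho(z)\})$ justifies a step the paper only asserts (``their support trees do not contain $y$; hence \ldots''), and splitting on whether $y\in\mathbf{P}$ avoids the paper's unjustified base-case inference from $\rho(y)=0$ to $y\in\mathbf{P}$.
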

    \begin{proof}
    	We proceed by induction on \(\rho(y)\).
    	
    	\textit{Base case:} \(\rho(y)=0\). Then \(y\in\mathbf{P}\). By Definition~\ref{defnI}, \(\mathbf{P}\subseteq \mathbf{Sup}(E)\) for every \(E\), so \(y\in \mathbf{Sup}(E)\).
    	
    	\textit{Inductive step:} Assume the claim holds for all elements of rank \(<\rho(y)\). Suppose \(L(y)=1\). By the positive condition of Definition~\ref{comlab}, there exists a support \(\mathbf{t}_H^y\in\mathbf{R_e}\) such that
    	\[
    	L(\mathbf{t}_H^y)=1 \quad\text{and}\quad L(h)=1 \text{ for all } h\in H.
    	\]
    	Thus \(\mathbf{t}_H^y\in E\) and \(H\subseteq E\). By support-acyclicity, \(\rho(\mathbf{t}_H^y)<\rho(y)\) and \(\rho(h)<\rho(y)\) for every \(h\in H\). Applying the induction hypothesis yields
    	\[
    	\mathbf{t}_H^y\in \mathbf{Sup}(E),\qquad h\in \mathbf{Sup}(E)\quad(\forall h\in H).
    	\]
    	Moreover, since \(\mathbf{t}_H^y\neq y\) and \(h\neq y\) (their ranks are strictly smaller), their support trees do not contain \(y\); hence, in fact,
    	\[
    	\mathbf{t}_H^y\in \mathbf{Sup}(E\setminus\{y\}),\qquad H\subseteq \mathbf{Sup}(E\setminus\{y\}).
    	\]
    	Additionally, \(\mathbf{t}_H^y \in E\) and \(\mathbf{t}_H^y\neq y\) give \(\mathbf{t}_H^y \in E\setminus\{y\}\). Also, \(H\subseteq E\) and \(h\neq y\) give \(H\subseteq E\setminus\{y\}\). Therefore,
    	\[
    	\mathbf{t}_H^y\in (E\setminus\{y\})_e\cap \mathbf{Sup}(E\setminus\{y\})
    	\quad\text{and}\quad
    	H\subseteq (E\setminus\{y\})\cap \mathbf{Sup}(E\setminus\{y\}).
    	\]
    	By Definition~\ref{defnI} of \(\mathbf{Sup}\), this implies \(y\in \mathbf{Sup}(E)\). The induction is complete.
    \end{proof}
    \begin{lem}[Positive labelling implies support in any such \(S\)]
    	\label{lem:pos-extended}
    	Let \((\mathbf{A},\mathbf{R_a},\mathbf{R_e},\mathbf{U},\mathbf{P})\) be a support-acyclic EHSAF, \(L\) be an adjacent complete labelling, \(E=\{x\in\mathbf{U}\mid L(x)=1\}\) and \(S\subseteq\mathbf{U}\) be any set satisfying $E \subseteq S \subseteq \mathbf{U}\setminus \mathbf{Def}_a(E)$. For every \(y\in\mathbf{U}\),
    	\[
    	L(y)=1 \;\Longrightarrow\; y\in \mathbf{Sup}(S).
    	\]
    \end{lem}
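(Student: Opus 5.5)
The quickest route is to combine Lemma~\ref{lem:pos} with the monotonicity of $\mathbf{Sup}$ from Lemma~\ref{eqmonofix}(2). Let $L(y)=1$. Lemma~\ref{lem:pos} gives $y\in\mathbf{Sup}(E)$. Since $E\subseteq S$ by hypothesis, monotonicity yields $\mathbf{Sup}(E)\subseteq\mathbf{Sup}(S)$. Hence $y\in\mathbf{Sup}(S)$.

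On this route the upper bound $S\subseteq\mathbf{U}\setminus\mathbf{Def}_a(E)$ is never used. It is presumably included only so the lemma can be applied later with $S=\mathbf{T}=\mathbf{U}\setminus\mathbf{Def}_a(E)$, as in the definition of $\mathbf{Def}_s(E)$.

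If one prefers not to lean on Lemma~\ref{lem:pos}, one can instead repeat its induction on the rank $\rho(y)$ with $S$ in place of $E$.
\begin{itemize}
\item \emph{Base case.} If $\rho(y)=0$, then $y\in\mathbf{P}\subseteq\mathbf{Sup}(S)$.
\item \emph{Inductive step.} The positive clause of Definition~\ref{comlab} provides a support $\mathbf{t}_H^y$ with $L(\mathbf{t}_H^y)=1$ and $L(h)=1$ for all $h\in H$. Hence $\mathbf{t}_H^y\in E\subseteq S$ and $H\subseteq S$, and all these elements have rank below $\rho(y)$. By the induction hypothesis they lie in $\mathbf{Sup}(S)$.
\item \emph{Removing $y$.} We then need membership in $\mathbf{Sup}(S\setminus\{y\})$. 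Their support trees can be chosen to use only elements of rank strictly below $\rho(y)$, since every child in a support tree has smaller rank than its parent. So no such tree contains $y$. By Lemma~\ref{eqmonofix}(1) these trees witness membership in $\mathbf{Sup}(S\setminus\{y\})$.
\item \emph{Conclusion.} Since also $\mathbf{t}_H^y\in(S\setminus\{y\})_e$ and $H\subseteq S\setminus\{y\}$, Definition~\ref{defnI} gives $y\in\mathbf{Sup}(S)$.
\end{itemize}

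The only delicate step is this removal of $y$ from the support trees. Strictly, one must strengthen the induction hypothesis to say that every $z$ with $L(z)=1$ has a support tree with respect to $S$ all of whose labels have rank at most $\rho(z)$. This rank-decreasing property follows directly from support-acyclicity. With the first route, however, the delicate step is already absorbed into Lemma~\ref{lem:pos}, so I would present the two-line monotonicity argument.
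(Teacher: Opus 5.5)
Your two-line argument matches the paper's proof exactly: Lemma~\ref{lem:pos} gives $y\in\mathbf{Sup}(E)$, and monotonicity of $\mathbf{Sup}$ (Lemma~\ref{eqmonofix}(2)) with $E\subseteq S$ then gives $y\in\mathbf{Sup}(S)$. You are also right that the upper bound $S\subseteq\mathbf{U}\setminus\mathbf{Def}_a(E)$ is never used; it is there only so the lemma can be applied with $S=\mathbf{T}$ in Lemma~\ref{conflictfree}.
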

    
    \begin{proof}
    	By the original positive labelling lemma (Lemma~\ref{lem:pos}), which holds under support-acyclicity, we have
    	\[
    	L(y)=1 \;\Longrightarrow\; y\in \mathbf{Sup}(E).
    	\]
    	Since \(E\subseteq S\), the monotonicity of \(\mathbf{Sup}\) (if \(A\subseteq B\) then \(\mathbf{Sup}(A)\subseteq \mathbf{Sup}(B)\)) yields \(\mathbf{Sup}(E)\subseteq \mathbf{Sup}(S)\). Hence \(y\in \mathbf{Sup}(S)\).
    \end{proof}
    \begin{lem}[Conflict-freeness (I)]\label{conflictfree}
    	Let \((\mathbf{A},\mathbf{R_a},\mathbf{R_e},\mathbf{U},\mathbf{P})\) be a support-acyclic EHSAF, \(L\) be an adjacent complete labelling, \(E=\{x\in\mathbf{U}\mid L(x)=1\}\). We have $E\cap \mathbf{Def}(E)=\varnothing$.
    \end{lem}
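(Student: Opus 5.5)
The plan is to split $\mathbf{Def}(E)=\mathbf{Def}_a(E)\cup\mathbf{Def}_s(E)$ and show separately that $E\cap\mathbf{Def}_a(E)=\varnothing$ and $E\cap\mathbf{Def}_s(E)=\varnothing$. The first part uses only the labelling conditions of Definition~\ref{comlab}. The second part reduces to the first via Lemma~\ref{lem:pos-extended}, and this is the only place where support-acyclicity enters.

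\emph{Attack part.} Suppose $x\in E\cap\mathbf{Def}_a(E)$. By Definition~\ref{defnI} there is an attack $\mathbf{k}_G^x\in E_a$ with $G\subseteq E$, so $L(\mathbf{k}_G^x)=1$ and $L(g)=1$ for every $g\in G$. The ``1'' condition of Definition~\ref{comlab} for $x$ requires that every attack on $x$ has either its name labelled $0$ or some source member labelled $0$. That fails for $\mathbf{k}_G^x$, whose name and sources are all labelled $1$. So $L(x)\neq 1$, contradicting $x\in E$. (Equivalently, the attack branch of the ``0'' condition holds, which forces $L(x)=0$.) Hence $E\cap\mathbf{Def}_a(E)=\varnothing$.

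\emph{Support part.} Put $\mathbf{T}=\mathbf{U}\setminus\mathbf{Def}_a(E)$, so that $\mathbf{Def}_s(E)=\mathbf{U}\setminus\mathbf{Sup}(\mathbf{T})$. By the attack part, $E\subseteq\mathbf{T}$. Taking $S=\mathbf{T}$ therefore satisfies the hypothesis $E\subseteq S\subseteq\mathbf{U}\setminus\mathbf{Def}_a(E)$ of Lemma~\ref{lem:pos-extended}. That lemma gives $x\in\mathbf{Sup}(\mathbf{T})$ for every $x$ with $L(x)=1$, i.e.\ for every $x\in E$. Hence no element of $E$ lies in $\mathbf{Def}_s(E)$.

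Combining the two parts gives $E\cap\mathbf{Def}(E)=\varnothing$. The only delicate point is the order of the argument: the attack part must come first, because it supplies the inclusion $E\subseteq\mathbf{T}$ needed to apply Lemma~\ref{lem:pos-extended}. That lemma in turn rests on Lemma~\ref{lem:pos} and monotonicity of $\mathbf{Sup}$ (Lemma~\ref{eqmonofix}).
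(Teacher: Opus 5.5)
Your proof is correct and follows the same route as the paper. You split $\mathbf{Def}(E)$ into $\mathbf{Def}_a(E)$ and $\mathbf{Def}_s(E)$, rule out the first using the ``1'' condition of Definition~\ref{comlab}, and rule out the second using Lemma~\ref{lem:pos-extended} with $S=\mathbf{T}=\mathbf{U}\setminus\mathbf{Def}_a(E)$. Your version is slightly more careful than the paper's, because you explicitly derive the hypothesis $E\subseteq\mathbf{T}$ of Lemma~\ref{lem:pos-extended} from the attack part, whereas the paper invokes that lemma without checking this inclusion.
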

    \begin{proof}
    	Suppose, for contradiction, that $x\in E\cap \mathbf{Def}(E)$. Then $x\in \mathbf{Def}_a(E)\cup \mathbf{Def}_s(E)$.
    	
    	If $x\in \mathbf{Def}_a(E)$, then by Definition~\ref{defnI} there exists $\mathbf{k}_G^x\in E_a$ with $G\subseteq E$. Since $L(x)=1$, the positive condition of Definition~\ref{comlab} requires that for every attack $\mathbf{k}_G^x$, either $L(\mathbf{k}_G^x)=0$ or $\exists g\in G: L(g)=0$. But $G\subseteq E$ implies $L(g)=1$ for all $g\in G$, and $\mathbf{k}_G^x\in E$ implies $L(\mathbf{k}_G^x)=1$; contradiction. If $x\in \mathbf{Def}_s(E)$, then by Definition~\ref{defnI}, $x\notin \mathbf{Sup}(\mathbf{T})$ where $\mathbf{T}=\mathbf{U}\setminus \mathbf{Def}_a(E)$. From Lemma \ref{lem:pos-extended} and $L(x)=1$, we have $x\in \mathbf{Sup}(\mathbf{T})$; contradiction. Therefore $E\cap \mathbf{Def}(E)=\varnothing$.
    \end{proof}
    \begin{lem}[Conflict-freeness (II)]\label{conflictfreeII}
    	Let $(\mathbf{A},\mathbf{R_a},\mathbf{R_e},\mathbf{U},\mathbf{P})$ be an EHSAF, $L$ be an adjacent complete labelling and $E=\{x\in\mathbf{U}\mid L(x)=1\}$. For any adjacent complete labelling $L$, we have $E\cap \mathbf{Def}_a(E)=\varnothing$.
    \end{lem}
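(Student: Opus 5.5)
The claim only concerns the attack-defeated part $\mathbf{Def}_a(E)$, so no support-acyclicity is needed. The proof is the first half of the argument for Lemma~\ref{conflictfree}, which never invoked acyclicity for the $\mathbf{Def}_a$ case. I will argue by contradiction directly from the definition of $\mathbf{Def}_a$ in Definition~\ref{defnI} and the positive clause of Definition~\ref{comlab}.

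Suppose there is some $x\in E\cap \mathbf{Def}_a(E)$. Since $x\in\mathbf{Def}_a(E)$, Definition~\ref{defnI} provides an attack $\mathbf{k}_G^x\in E_a=E\cap\mathbf{R_a}$ with $G\subseteq E$. By the definition of $E$, this gives $L(\mathbf{k}_G^x)=1$ and $L(g)=1$ for every $g\in G$.

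On the other hand, $x\in E$ means $L(x)=1$. There are two cases for $x$.

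\emph{Case $x\in\mathbb{U}$.} The ``1'' clause of Definition~\ref{comlab} requires that every attack $\mathbf{k}_{S_i}^x\in\mathbf{R_a}$ has either $L(\mathbf{k}_{S_i}^x)=0$ or some source member labelled $0$. Applied to $\mathbf{k}_G^x$, this contradicts the labels just obtained.

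\emph{Case $x$ is auxiliary.} This case is where some care is needed, because the conditions of Definition~\ref{comlab} are only imposed on $\beta\in\mathbb{U}$. The only elements outside $\mathbb{U}$ are $\bot$, $\top$ and the added attacks and supports. I would check that none of these is the target of any attack in $\mathbf{R_a}$. The attacks in $\mathbb{R}_a$ have targets in $\mathbb{U}$ by Definition~\ref{pEHSAF}, and the added attacks $(\{\bot\},\beta)$ also have $\beta\in\mathbb{U}$. Hence no $\mathbf{k}_G^x$ exists for auxiliary $x$, so such $x\notin\mathbf{Def}_a(E)$ at all.

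In both cases we reach a contradiction, so $E\cap\mathbf{Def}_a(E)=\varnothing$. I expect the only real obstacle to be this bookkeeping about auxiliary elements; the core step is a one-line clash between the positive labelling clause and the effective attack.
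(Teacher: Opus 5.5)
Your proof is correct and follows essentially the same route as the paper: an effective attack $\mathbf{k}_G^x\in E$ with $G\subseteq E$ contradicts $L(x)=1$ directly. The only differences are cosmetic. The paper derives the clash from the attack branch of the ``0'' clause (forcing $L(x)=0$) rather than from the ``1'' clause, and it leaves implicit your correct observation that every attack target lies in $\mathbb{U}$, which makes the auxiliary case vacuous.
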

    \begin{proof}
    	If $x\in E\cap \mathbf{Def}_a(E)$, then by Definition~\ref{defnI} there exists an attack $\mathbf{k}_G^x\in E$ with $G\subseteq E$. Hence $L(\mathbf{k}_G^x)=1$ and $L(g)=1$ for all $g\in G$. The attack branch of the ``0''-condition in Definition~\ref{comlab} would then force $L(x)=0$, contradicting $x\in E$. Thus no such $x$ exists.
    \end{proof}
    \begin{lem}[Conflict-freeness (III)]\label{conflictfreeIII}
    Let $(\mathbf{A},\mathbf{R_a},\mathbf{R_e},\mathbf{U},\mathbf{P})$ be an EHSAF, $L$ be an adjacent complete labelling, $E=\{x\in\mathbf{U}\mid L(x)=1\}$ and $E\subseteq \mathbf{Sup}(E)$. For any adjacent complete labelling $L$, we have $E\cap \mathbf{Def}(E)=\varnothing$.
    \end{lem}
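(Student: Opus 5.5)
The plan is to reproduce the argument of Lemma~\ref{conflictfree}, with the hypothesis $E\subseteq\mathbf{Sup}(E)$ replacing support-acyclicity. That hypothesis is exactly what Lemma~\ref{lem:pos} was used to supply. Suppose, for contradiction, that $x\in E\cap\mathbf{Def}(E)$. Since $\mathbf{Def}(E)=\mathbf{Def}_a(E)\cup\mathbf{Def}_s(E)$, we split into two cases.

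\emph{Case $x\in\mathbf{Def}_a(E)$.} This is excluded directly by Lemma~\ref{conflictfreeII}, which holds for any EHSAF and any adjacent complete labelling with no extra hypothesis. Explicitly, there would be an attack $\mathbf{k}_G^x\in E$ with $G\subseteq E$. Then the attack branch of the ``0''-condition of Definition~\ref{comlab} forces $L(x)=0$, contradicting $L(x)=1$.

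\emph{Case $x\in\mathbf{Def}_s(E)$.} By Definition~\ref{defnI}, $x\notin\mathbf{Sup}(\mathbf{T})$ with $\mathbf{T}=\mathbf{U}\setminus\mathbf{Def}_a(E)$. Lemma~\ref{conflictfreeII} gives $E\cap\mathbf{Def}_a(E)=\varnothing$, so $E\subseteq\mathbf{T}$. Monotonicity of $\mathbf{Sup}$ (Lemma~\ref{eqmonofix}(2)) then yields $\mathbf{Sup}(E)\subseteq\mathbf{Sup}(\mathbf{T})$. The hypothesis $E\subseteq\mathbf{Sup}(E)$ together with $x\in E$ gives $x\in\mathbf{Sup}(E)\subseteq\mathbf{Sup}(\mathbf{T})$, which is a contradiction.

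Both cases are impossible, so $E\cap\mathbf{Def}(E)=\varnothing$. No step is a genuine obstacle. The only point to handle carefully is the inclusion $E\subseteq\mathbf{T}$: it must come from Lemma~\ref{conflictfreeII} rather than from the conclusion being proved, so that the argument is not circular. The self-supporting hypothesis is then used only in the $\mathbf{Def}_s$ case, and no support-acyclicity or rank function is needed.
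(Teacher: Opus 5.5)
Your proposal is correct and follows essentially the same route as the paper. Lemma~\ref{conflictfreeII} disposes of $\mathbf{Def}_a(E)$ and gives $E\subseteq\mathbf{T}$; then $E\subseteq\mathbf{Sup}(E)\subseteq\mathbf{Sup}(\mathbf{T})$ by monotonicity rules out $\mathbf{Def}_s(E)$. The only difference is cosmetic: you argue by contradiction on a single element, whereas the paper establishes the two disjointness facts directly.
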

    
    \begin{proof}
    	From Lemma \ref{conflictfreeII}, $E\cap \mathbf{Def}_a(E)=\varnothing$. Let $\mathbf{T}=\mathbf{U}\setminus \mathbf{Def}_a(E)$; then $E\subseteq \mathbf{T}$. By $E\subseteq \mathbf{Sup}(E)$ and monotonicity of $\mathbf{Sup}$, we have
    	\[
    	E\subseteq \mathbf{Sup}(E)\subseteq \mathbf{Sup}(\mathbf{T}),
    	\]
    	so $E\cap \mathbf{Def}_s(E)=\varnothing$ because $\mathbf{Def}_s(E)=\mathbf{U}\setminus \mathbf{Sup}(\mathbf{T})$. Hence $E\cap \mathbf{Def}(E)=\varnothing$.
    \end{proof}
    
    \begin{lem}[Negative labelling implies defeat in general]
    	\label{lem:neg-general}
    	Let \((\mathbf{A},\mathbf{R_a},\mathbf{R_e},\mathbf{U},\mathbf{P})\) be an EHSAF, \(L\) be an adjacent complete labelling and \(E=\{x\in\mathbf{U}\mid L(x)=1\}\). For every \(y\in\mathbf{U}\),
    	\[
    	L(y)=0 \;\Longrightarrow\; y\in \mathbf{Def}(E).
    	\]
    \end{lem}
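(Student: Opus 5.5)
The plan is to split on which branch of the ``0''-condition of Definition~\ref{comlab} forces $L(y)=0$, and to handle the support branch with an induction over support trees (Lemma~\ref{eqmonofix}). No acyclicity is needed: we only ever reason about elements that \emph{are} in $\mathbf{Sup}(\mathbf{T})$, and these have finite, well-founded derivations.

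\emph{Attack branch.} Suppose there is $\mathbf{k}_G^y\in\mathbf{R_a}$ with $L(\mathbf{k}_G^y)=1$ and $L(g)=1$ for all $g\in G$. Then $\mathbf{k}_G^y\in E_a$ and $G\subseteq E$, so $y\in\mathbf{Def}_a(E)\subseteq\mathbf{Def}(E)$ directly from Definition~\ref{defnI}.

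\emph{Support branch.} Now assume every support $\mathbf{t}_H^y\in\mathbf{R_e}$ has $L(\mathbf{t}_H^y)=0$ or some $h\in H$ with $L(h)=0$. Put $\mathbf{T}=\mathbf{U}\setminus\mathbf{Def}_a(E)$. The goal is $y\notin\mathbf{Sup}(\mathbf{T})$, i.e.\ $y\in\mathbf{Def}_s(E)$.

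The key auxiliary claim is
\[
z\in\mathbf{T}\cap\mathbf{Sup}(\mathbf{T})\ \Longrightarrow\ L(z)\neq 0 .
\]
I prove it by induction on the minimal height of a support tree for $z$ with respect to $\mathbf{T}$. Suppose $L(z)=0$.

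First, the attack branch cannot be responsible. It would give an attack and its sources all labelled $1$, hence in $E$, so $z\in\mathbf{Def}_a(E)$, contradicting $z\in\mathbf{T}$. So the support branch holds for $z$.

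If $z\in\mathbf{P}$, Definition~\ref{EHSAF} provides the auxiliary support $(\{\top\},z)_e$. Both $\top$ and this support are labelled $1$ by the requirements of Definition~\ref{semantics}, so this support has no zero component, contradicting the support branch.

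If $z\notin\mathbf{P}$, the root of a minimal support tree uses some $\mathbf{t}_H^z\in\mathbf{T}_e$ with $\mathbf{t}_H^z\in\mathbf{Sup}(\mathbf{T}\setminus\{z\})$ and $H\subseteq\mathbf{T}\cap\mathbf{Sup}(\mathbf{T}\setminus\{z\})$. By monotonicity (Lemma~\ref{eqmonofix}(2)), $\mathbf{t}_H^z$ and every $h\in H$ lie in $\mathbf{T}\cap\mathbf{Sup}(\mathbf{T})$. Their support trees are the strictly shorter child subtrees, so the induction hypothesis gives that none of them is labelled $0$. This again contradicts the support branch for $z$, which proves the claim.

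Now return to $y$ and suppose $y\in\mathbf{Sup}(\mathbf{T})$; note $y$ itself need not lie in $\mathbf{T}$. If $y\in\mathbf{P}$, the auxiliary support $(\{\top\},y)_e$ contradicts the support branch as before. Otherwise the recursive clause of Definition~\ref{defnI} yields $\mathbf{t}_H^y$ and $H$ inside $\mathbf{T}\cap\mathbf{Sup}(\mathbf{T}\setminus\{y\})\subseteq\mathbf{T}\cap\mathbf{Sup}(\mathbf{T})$. By the claim, none of these is labelled $0$, again contradicting the support branch. Hence $y\notin\mathbf{Sup}(\mathbf{T})$, so $y\in\mathbf{Def}_s(E)\subseteq\mathbf{Def}(E)$.

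I expect the main obstacle to be making the induction well-founded. The naive statement ``$\mathbf{Sup}(\mathbf{T})$ contains no $0$-labelled element'' is false: the root of a support tree need not lie in $\mathbf{T}$, and an element labelled $0$ via an attack can still be supported. That is why the claim is restricted to $\mathbf{T}\cap\mathbf{Sup}(\mathbf{T})$ and $y$ is treated separately. One must also check that the child subtrees, built with respect to $\mathbf{T}\setminus\{z\}$, really are shorter trees with respect to $\mathbf{T}$; the closure-and-height argument already used in Lemma~\ref{eqmonofix}(3) and Lemma~\ref{lem1} handles this.
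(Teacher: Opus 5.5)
Your proof is correct. It follows the paper's outline: the attack branch gives $\mathbf{Def}_a(E)$; in the support branch you assume $y\in\mathbf{Sup}(\mathbf{T})$, dispose of the prima-facie case via the auxiliary $\top$-support, and extract a support $\mathbf{t}_H^y$ whose components lie in $\mathbf{T}\cap\mathbf{Sup}(\mathbf{T})$. Where you differ is the decisive step.

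The paper infers $\mathbf{t}_H^y\notin\mathbf{Def}(E)$ and $h\notin\mathbf{Def}(E)$, and declares this to contradict \eqref{18tag1}. But \eqref{18tag1} only says that some component has label $0$. Converting ``label $0$'' into ``member of $\mathbf{Def}(E)$'' for $\mathbf{t}_H^y$ and the $h\in H$ is exactly the lemma being proved, so the paper's argument is circular as written.

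Your auxiliary claim, $z\in\mathbf{T}\cap\mathbf{Sup}(\mathbf{T})\Rightarrow L(z)\neq 0$, proved by strong induction on minimal support-tree height, supplies precisely the missing well-founded step. It also explains why no support-acyclicity is needed here, in contrast with Lemma~\ref{lem:def-to-zero}, which inducts on the rank $\rho$. Only elements that have finite support trees are ever inspected, so support cycles never enter the claim. The paper's version is shorter; yours actually establishes the lemma.

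A few small points to tidy:
\begin{itemize}
\item The auxiliary support $(\{\top\},z)_e$ exists only for $z\in\mathbb{P}$. For the other elements of $\mathbf{P}$ ($\top$ and the added interactions), the fixed label $1$ gives $L(z)\neq 0$ directly.
\item $y=\bot$ is not governed by Definition~\ref{comlab}. It is neither prima-facie nor the target of any support, so $\bot\in\mathbf{Def}_s(E)$ trivially.
\item Your worry about child subtrees built with respect to $\mathbf{T}\setminus\{z\}$ is moot. Under Definition~\ref{suptree}, every child subtree of a support tree with respect to $\mathbf{T}$ is itself a support tree with respect to $\mathbf{T}$, of strictly smaller height: its non-root labels lie in $\mathbf{T}$ and the node conditions are inherited.
\end{itemize}
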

    
    \begin{proof}
    	Let \(y\in\mathbf{U}\) with \(L(y)=0\). According to Definition~\ref{comlab}, at least one of the following two conditions holds:
    	
    	\textit{Attack branch.} There exists an attack \(\mathbf{k}_G^y\in\mathbf{R_a}\) such that
    	\[
    	L(\mathbf{k}_G^y)=1 \quad\text{and}\quad L(g)=1\ \text{for all }g\in G.
    	\]
    	Then \(\mathbf{k}_G^y\in E\) and \(G\subseteq E\). Hence, by Definition~\ref{defnI}, \(y\in \mathbf{Def}_a(E)\subseteq \mathbf{Def}(E)\).
    	
    	\textit{Support branch.} For every support \(\mathbf{t}_H^y\in\mathbf{R_e}\),
    	\begin{equation}\label{18tag1}
    		L(\mathbf{t}_H^y)=0 \quad\text{or}\quad \exists h\in H:\ L(h)=0.
    	\end{equation}
    	We prove by contradiction that this branch also implies \(y\in \mathbf{Def}(E)\).
    	
    	Assume, for contradiction, that \(y\notin \mathbf{Def}(E)\). Then \(y\notin \mathbf{Def}_a(E)\) and \(y\notin \mathbf{Def}_s(E)\). Let
    	\[
    	\mathbf{T}=\mathbf{U}\setminus \mathbf{Def}_a(E).
    	\]
    	Since \(y\notin \mathbf{Def}_a(E)\), we have \(y\in\mathbf{T}\). Since \(y\notin \mathbf{Def}_s(E)=\mathbf{U}\setminus \mathbf{Sup}(\mathbf{T})\), it follows that \(y\in \mathbf{Sup}(\mathbf{T})\).
    	
    	By Definition~\ref{defnI} of \(\mathbf{Sup}\) (as the least fixed point), the membership \(y\in \mathbf{Sup}(\mathbf{T})\) implies that either \(y\in\mathbf{P}\), or there exists a support \(\mathbf{t}_H^y\in\mathbf{T}_e\) such that
    	\begin{equation}\label{18tag2}
    		\mathbf{t}_H^y\in \mathbf{Sup}(\mathbf{T}\setminus\{y\})
    		\quad\text{and}\quad
    		H\subseteq \mathbf{T}\cap \mathbf{Sup}(\mathbf{T}\setminus\{y\}).  		
    	\end{equation}
    	(The case \(y\in\mathbf{P}\) is impossible here because if \(y\in\mathbf{P}\), then by Definition~\ref{EHSAF} there is an auxiliary support from \(\top\) that would contradict the support branch \eqref{18tag1}, as \(\top\) and the auxiliary support are always labelled 1. Thus the support case must hold.)
    	
    	From \eqref{18tag2} we obtain:
    	\[
    	\mathbf{t}_H^y\in \mathbf{T}\cap \mathbf{Sup}(\mathbf{T}),\qquad H\subseteq \mathbf{T}\cap \mathbf{Sup}(\mathbf{T}).
    	\]
    	Hence \(\mathbf{t}_H^y\notin \mathbf{Def}_a(E)\) and \(h\notin \mathbf{Def}_a(E)\) for all \(h\in H\). Also, since \(\mathbf{t}_H^y\in \mathbf{Sup}(\mathbf{T})\) and \(h\in \mathbf{Sup}(\mathbf{T})\), they are not in \(\mathbf{Def}_s(E)=\mathbf{U}\setminus \mathbf{Sup}(\mathbf{T})\). Therefore
    	\begin{equation}\label{19tag3}
    	\mathbf{t}_H^y\notin \mathbf{Def}(E),\qquad h\notin \mathbf{Def}(E)\quad(\forall h\in H).
    	\end{equation}
    	In particular, \(H\cap \mathbf{Def}(E)=\varnothing\).
    	
    	But \eqref{19tag3} directly contradicts the support branch condition \eqref{18tag1}, which requires for this very support \(\mathbf{t}_H^y\) that either \(\mathbf{t}_H^y\in \mathbf{Def}(E)\) or \(H\cap \mathbf{Def}(E)\neq\varnothing\). Therefore our assumption \(y\notin \mathbf{Def}(E)\) is false, and we conclude \(y\in \mathbf{Def}(E)\).
    	
    	Combining the attack and support branches, we have \(L(y)=0\Rightarrow y\in \mathbf{Def}(E)\).
    \end{proof}
    
    \begin{lem}[Defeat implies negative labelling]
    	\label{lem:def-to-zero}
    	Let \((\mathbf{A},\mathbf{R_a},\mathbf{R_e},\mathbf{U},\mathbf{P})\) be a support-acyclic EHSAF with rank function \(\rho\), \(L\) be an adjacent complete labelling and \(E=\{x\in\mathbf{U}\mid L(x)=1\}\). For every \(y\in\mathbf{U}\),
    	\[
    	y\in \mathbf{Def}(E) \;\Longrightarrow\; L(y)=0.
    	\]
    \end{lem}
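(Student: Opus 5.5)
The plan is to split according to the two ways of being defeated, $\mathbf{Def}(E)=\mathbf{Def}_a(E)\cup\mathbf{Def}_s(E)$, and to handle the second case through its contrapositive, proved by induction on the rank $\rho$.

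\emph{Attack-defeat.} Suppose $y\in\mathbf{Def}_a(E)$. Then there is an attack $\mathbf{k}_G^y\in E_a$ with $G\subseteq E$. By the definition of $E$, $L(\mathbf{k}_G^y)=1$ and $L(g)=1$ for all $g\in G$. The attack branch of the ``0'' condition of Definition~\ref{comlab} then forces $L(y)=0$. This part needs neither acyclicity nor anything beyond the definitions.

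\emph{Support-defeat.} Let $\mathbf{T}=\mathbf{U}\setminus\mathbf{Def}_a(E)$, so that $y\in\mathbf{Def}_s(E)$ means $y\notin\mathbf{Sup}(\mathbf{T})$. I will prove the contrapositive claim
\[
(\ast)\qquad L(z)\neq 0 \;\Longrightarrow\; z\in\mathbf{Sup}(\mathbf{T}) \quad\text{for every } z\in\mathbf{U},
\]
by induction on $\rho(z)$.

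Before the induction, I record an auxiliary fact from the first case: every $w\in\mathbf{Def}_a(E)$ has $L(w)=0$. Equivalently, every element with nonzero label lies in $\mathbf{T}$.

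\emph{Base case.} If $\rho(z)=0$, or more generally if $z\in\mathbf{P}$, then $z\in\mathbf{P}\subseteq\mathbf{Sup}(\mathbf{T})$ and there is nothing to show.

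\emph{Inductive step.} Take $z\notin\mathbf{P}$ with $L(z)\neq 0$. Then the support branch of the ``0'' condition fails for $z$. Hence there is a support $\mathbf{t}_H^z\in\mathbf{R_e}$ with $L(\mathbf{t}_H^z)\neq 0$ and $L(h)\neq0$ for all $h\in H$. Such a support exists among the genuine or auxiliary supports of Definition~\ref{EHSAF}; an auxiliary support from $\bot$ could never witness this, since $L(\bot)=0$. By the auxiliary fact, $\mathbf{t}_H^z\in\mathbf{T}_e$ and $H\subseteq\mathbf{T}$. By support-acyclicity, $\mathbf{t}_H^z$ and every $h\in H$ have rank strictly below $\rho(z)$, so the induction hypothesis puts them all in $\mathbf{Sup}(\mathbf{T})$.

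\emph{Main obstacle.} To apply the recursive clause of Definition~\ref{defnI} I need the stronger memberships $\mathbf{t}_H^z\in\mathbf{Sup}(\mathbf{T}\setminus\{z\})$ and $H\subseteq\mathbf{Sup}(\mathbf{T}\setminus\{z\})$. I would obtain these as in the proof of Lemma~\ref{lem:pos}, via the support-tree characterisation of Lemma~\ref{eqmonofix}(1). The argument runs as follows.
\begin{enumerate}
\item Each parent-to-child step in a support tree follows a support, and support-acyclicity makes $\rho$ strictly decrease along it. Prima-facie leaves have rank $0$.
\item Hence every node in a support tree for an element of rank $<\rho(z)$ also has rank $<\rho(z)$, so in particular it is not labelled $z$.
\item All non-root labels of such a tree therefore lie in $\mathbf{T}\setminus\{z\}$, and the same tree witnesses membership in $\mathbf{Sup}(\mathbf{T}\setminus\{z\})$.
\end{enumerate}
If one prefers to avoid trees, one can strengthen the induction hypothesis of $(\ast)$ to: $L(z)\neq0$ implies $z\in\mathbf{Sup}(\mathbf{T}\cap\{w\mid\rho(w)<\rho(z)\}\cup\mathbf{P})$. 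Monotonicity (Lemma~\ref{eqmonofix}(2)) then gives the required statements directly.

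\emph{Conclusion.} With these memberships, Definition~\ref{defnI} yields $z\in\mathbf{Sup}(\mathbf{T})$; membership of $z$ itself in $\mathbf{T}$ is not required. This closes the induction for $(\ast)$. Applying $(\ast)$ contrapositively to $y\notin\mathbf{Sup}(\mathbf{T})$ gives $L(y)=0$, which completes the second case and the lemma.
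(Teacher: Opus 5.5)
Your proof is correct, and its core is the same as the paper's. Both use induction on $\rho$. Failure of the support branch of the ``0''-condition yields a support whose name and sources have non-zero labels and smaller rank. The induction hypothesis places these in $\mathbf{T}\cap\mathbf{Sup}(\mathbf{T})$, and the recursive clause of Definition~\ref{defnI} then places the target in $\mathbf{Sup}(\mathbf{T})$.

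The decomposition is different, though.
\begin{itemize}
\item The paper inducts on the lemma's statement itself. Inside the inductive step it excludes $L(y)=1$ and $L(y)=\tfrac12$ separately, and it handles $L(y)=1$ not by the induction hypothesis but by Lemma~\ref{lem:pos}, Lemma~\ref{conflictfree} and monotonicity of $\mathbf{Sup}$.
\item You settle $\mathbf{Def}_a(E)$ once, without acyclicity, and induct on the contrapositive $(\ast)$. This handles $L(z)=1$ and $L(z)=\tfrac12$ uniformly, since the support branch of the ``0''-condition fails in both cases. Your argument is therefore independent of Lemmas~\ref{lem:pos}, \ref{lem:pos-extended} and \ref{conflictfree}.
\item You confront the requirement $\mathbf{t}_H^z\in\mathbf{Sup}(\mathbf{T}\setminus\{z\})$, $H\subseteq\mathbf{Sup}(\mathbf{T}\setminus\{z\})$, which the paper passes over with ``by the definition of $\mathbf{Sup}$''. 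There, Lemma~\ref{lem1} could supply it, because $y\notin\mathbf{Sup}(\mathbf{T})$ is assumed. That hypothesis is not available in your positive formulation, so the rank argument is the right tool.
\item You split on $z\in\mathbf{P}$ rather than on $\rho(z)=0$. So you never need ``$\rho(z)=0$ implies $z\in\mathbf{P}$'', which the paper's base case uses but support-acyclicity does not provide (it only gives the converse). Just delete the words ``If $\rho(z)=0$'' from your base case.
\end{itemize}

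Of your two ways through the main obstacle, the tree-free one is cleaner. Reusing a support tree for the smaller set $\mathbf{T}\setminus\{z\}$ also requires the side conditions $\mathbf{Sup}(\mathbf{S}\setminus\{y'\})$ at internal nodes to transfer, which you do not check.

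State the tree-free variant with $\mathbf{S}_z=\mathbf{T}\cap\{w\mid\rho(w)<\rho(z)\}$ and drop the $\cup\,\mathbf{P}$. Prima-facie elements can be attacked, so $(\mathbf{T}\cap\{w\mid\rho(w)<\rho(z)\})\cup\mathbf{P}$ need not be contained in $\mathbf{T}$. Meanwhile $\mathbf{P}\subseteq\mathbf{Sup}(\mathbf{S})$ holds for every $\mathbf{S}$ anyway.

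With $\mathbf{S}_z$ the step goes through as follows. You have $z\notin\mathbf{S}_z$. The induction hypothesis and monotonicity give $\mathbf{t}_H^z\in(\mathbf{S}_z)_e\cap\mathbf{Sup}(\mathbf{S}_z)$ and $H\subseteq\mathbf{S}_z\cap\mathbf{Sup}(\mathbf{S}_z)$. The recursive clause applied to $\mathbf{S}_z$ then gives $z\in\mathbf{Sup}(\mathbf{S}_z)\subseteq\mathbf{Sup}(\mathbf{T})$. This both re-establishes the strengthened hypothesis and yields $(\ast)$.
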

    
    \begin{proof}
    	We prove by induction on \(\rho(y)\).
    	
    	\emph{Base case:} \(\rho(y)=0\). Then \(y\in\mathbf{P}\). Since \(y\in \mathbf{Def}(E)\), \(y\) cannot be in \(\mathbf{Def}_s(E)\) (because \(\mathbf{Def}_s(E)=\mathbf{U}\setminus \mathbf{Sup}(\mathbf{T})\) and \(\mathbf{P}\subseteq \mathbf{Sup}(\mathbf{T})\) for any \(\mathbf{T}\)). Hence \(y\in \mathbf{Def}_a(E)\), so there is an attack \(\mathbf{k}_G^y\in E\) with \(G\subseteq E\). Thus \(L(\mathbf{k}_G^y)=1\) and \(L(g)=1\) for all \(g\in G\). By the attack branch of the ``0''-condition in Definition~\ref{comlab}, \(L(y)=0\).
    	
    	\emph{Inductive step:} Assume the claim holds for all elements of rank \(<\rho(y)\). Let \(y\in \mathbf{Def}(E)\).
    	
    	We first rule out \(L(y)=1\). If \(L(y)=1\), then by Lemma  \ref{lem:pos} \(y\in \mathbf{Sup}(E)\). Since \(E\) is conflict-free by Lemma  \ref{conflictfree}, \(E\cap \mathbf{Def}_a(E)=\varnothing\), so \(E\subseteq \mathbf{T}:=\mathbf{U}\setminus \mathbf{Def}_a(E)\). Monotonicity of \(\mathbf{Sup}\) gives \(\mathbf{Sup}(E)\subseteq \mathbf{Sup}(\mathbf{T})\), hence \(y\in \mathbf{Sup}(\mathbf{T})\). But if \(y\in \mathbf{Def}_s(E)\), then \(y\notin \mathbf{Sup}(\mathbf{T})\), contradiction. If \(y\in \mathbf{Def}_a(E)\), by Definition \ref{comlab} we have \(L(y)=0\). Thus in all cases \(L(y)\neq 1\).
    	
    	Now suppose, for contradiction, that \(L(y)=\tfrac12\). By Definition~\ref{comlab}, \(L(y)=\tfrac12\) means that neither the ``1''-condition nor the ``0''-condition holds. In particular, the ``0''-condition fails. The ``0''-condition is the disjunction of an attack branch and a support branch. The attack branch would require an attack \(\mathbf{k}_G^y\) with \(L(\mathbf{k}_G^y)=1\) and \(L(g)=1\) for all \(g\in G\); if that held, it would force \(L(y)=0\), contradicting \(L(y)=\tfrac12\). Hence the attack branch fails. Therefore, for the 0-condition to fail, the support branch must also fail. The support branch is
    	\[
    	\forall \mathbf{t}_H^y\in\mathbf{R_e}:\; L(\mathbf{t}_H^y)=0 \text{ or } \exists h\in H\,L(h)=0.
    	\]
    	Its failure yields the existence of a support \(\mathbf{t}_H^y\in\mathbf{R_e}\) such that
    	\begin{equation*}
    		L(\mathbf{t}_H^y)\neq 0 \quad\text{and}\quad \forall h\in H\,L(h)\neq 0.
    	\end{equation*}
    	
    	Now, for each \(z\in\{\mathbf{t}_H^y\}\cup H\), we have \(\rho(z)<\rho(y)\) by support-acyclicity. By the induction hypothesis (contrapositive), since \(L(z)\neq 0\), we must have \(z\notin \mathbf{Def}(E)\). Hence \(z\notin \mathbf{Def}_a(E)\) and \(z\notin \mathbf{Def}_s(E)\). From \(z\notin \mathbf{Def}_a(E)\), we get \(z\in\mathbf{T}\). From \(z\notin \mathbf{Def}_s(E)=\mathbf{U}\setminus \mathbf{Sup}(\mathbf{T})\), we get \(z\in \mathbf{Sup}(\mathbf{T})\). Therefore
    	\[
    	\mathbf{t}_H^y\in \mathbf{T}_e\cap \mathbf{Sup}(\mathbf{T}),\qquad H\subseteq \mathbf{T}\cap \mathbf{Sup}(\mathbf{T}).
    	\]
    	By the definition of \(\mathbf{Sup}\) (Definition~\ref{defnI}), this implies \(y\in \mathbf{Sup}(\mathbf{T})\), contradicting \(y\in \mathbf{Def}_s(E)\) if \(y\in \mathbf{Def}_s(E)\). If \(y\in \mathbf{Def}_a(E)\), we already have \(L(y)=0\). Thus the assumption \(L(y)=\tfrac12\) is impossible.
    	
    	Since \(L(y)\) is neither 1 nor \(\tfrac12\), it must be that \(L(y)=0\). The induction is complete.
    \end{proof}
    \begin{cor}\label{coreq}
    	Let \((\mathbf{A},\mathbf{R_a},\mathbf{R_e},\mathbf{U},\mathbf{P})\) be a support-acyclic EHSAF, \(L\) be an adjacent complete labelling and \(E=\{x\in\mathbf{U}\mid L(x)=1\}\). For every \(y\in\mathbf{U}\),
    	\[
    	y\in \mathbf{Def}(E) \;\Longleftrightarrow\; L(y)=0.
    	\]
    \end{cor}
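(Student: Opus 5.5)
The corollary follows by combining the two directions already established. I will treat the implications separately and check that the hypotheses of each earlier lemma are met in the present setting.

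\emph{Right to left.} Suppose $L(y)=0$. Lemma~\ref{lem:neg-general} applies to every EHSAF and every adjacent complete labelling $L$ with $E=\{x\in\mathbf{U}\mid L(x)=1\}$. It gives $y\in\mathbf{Def}(E)$ directly, so support-acyclicity is not needed here. It is still worth checking that its proof uses only the attack branch and the support branch of the ``0''-condition of Definition~\ref{comlab}. The case $y\in\mathbf{P}$ is excluded because of the auxiliary support $(\{\top\},y)_e$, whose label is forced to be $1$ by Definition~\ref{semantics}.

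\emph{Left to right.} Suppose $y\in\mathbf{Def}(E)$. The framework is support-acyclic with some rank function $\rho$, so Lemma~\ref{lem:def-to-zero} applies and yields $L(y)=0$.

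The main point of care is the proof of Lemma~\ref{lem:def-to-zero}. It relies on Lemma~\ref{lem:pos} and Lemma~\ref{conflictfree}, both of which need support-acyclicity, and these are available under the corollary's hypothesis. In that proof the subcase $y\in\mathbf{Def}_a(E)$ must always be settled by the attack branch of Definition~\ref{comlab}, giving $L(y)=0$. Only the subcase $y\in\mathbf{Def}_s(E)$ uses the induction on $\rho$: there, $L(y)=\tfrac12$ would produce a support $\mathbf{t}_H^y$ whose source and name have smaller rank, are not defeated by the induction hypothesis, and hence would put $y$ in $\mathbf{Sup}(\mathbf{T})$, which is impossible.

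Once both directions hold, the biconditional $y\in\mathbf{Def}(E)\Leftrightarrow L(y)=0$ follows for all $y\in\mathbf{U}$.
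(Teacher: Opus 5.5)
Your proposal is correct and matches the paper's proof: the corollary follows from Lemma~\ref{lem:neg-general} for $L(y)=0\Rightarrow y\in\mathbf{Def}(E)$, which holds for every EHSAF, together with Lemma~\ref{lem:def-to-zero} for the converse, which is where support-acyclicity is needed. Your remarks on the internals of Lemma~\ref{lem:def-to-zero} are fine but not required for the corollary; note that the final step into $\mathbf{Sup}(\mathbf{T})$ formally needs membership in $\mathbf{Sup}(\mathbf{T}\setminus\{y\})$, which Lemma~\ref{lem1} supplies because $y\notin\mathbf{Sup}(\mathbf{T})$.
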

    \begin{proof}
    	This follows immediately from Lemmas \ref{lem:neg-general} and \ref{lem:def-to-zero}.
    \end{proof}
    \begin{thm}\label{thmae}
    	For a support-acyclic EHSAF $\mathcal{F}=(\mathbf{A},\mathbf{R_a},\mathbf{R_e},\mathbf{U},\mathbf{P})$, any adjacent complete labelling of $\mathcal{F}$ is an extension-based complete labelling of $\mathcal{F}$.
    \end{thm}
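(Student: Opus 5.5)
Let $L$ be an adjacent complete labelling of the support-acyclic EHSAF $\mathcal{F}$ and put $E=\{x\in\mathbf{U}\mid L(x)=1\}$. The plan has two parts. First, show that $E$ is a complete extension in the sense of Definition~\ref{extsem}, i.e.\ $E$ is conflict-free and $\mathbf{Acc}(E)=E$. Second, show that $L=L_E$, the labelling of Definition~\ref{extlab}. The second part is short once the first is done. By construction $L(y)=1$ iff $y\in E$. By Corollary~\ref{coreq}, $L(y)=0$ iff $y\in\mathbf{Def}(E)$. All remaining elements therefore get $\tfrac12$ under both $L$ and $L_E$. The auxiliary elements are handled by Remark~\ref{remark1} together with the labelling requirements of Definition~\ref{semantics}.

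Conflict-freeness is exactly Lemma~\ref{conflictfree}. For $E\subseteq\mathbf{Acc}(E)$, take $y\in E$.
\begin{itemize}
\item Lemma~\ref{lem:pos} gives $y\in\mathbf{Sup}(E)$.
\item For an attack $\mathbf{k}_G^y\in\mathbf{R_a}$, the ``1''-condition of Definition~\ref{comlab} gives $L(\mathbf{k}_G^y)=0$ or $L(g)=0$ for some $g\in G$.
\item By Corollary~\ref{coreq}, this means $\mathbf{k}_G^y\in\mathbf{Def}(E)$ or $G\cap\mathbf{Def}(E)\neq\varnothing$.
\end{itemize}
Hence $y\in\mathbf{Acc}(E)$.

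The main step is the converse inclusion $\mathbf{Acc}(E)\subseteq E$. Take $y\in\mathbf{Acc}(E)$ and show $L(y)=1$ by verifying both halves of the ``1''-condition of Definition~\ref{comlab}.

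\emph{Attack half.} Every attack $\mathbf{k}_G^y$ satisfies $\mathbf{k}_G^y\in\mathbf{Def}(E)$ or $G\cap\mathbf{Def}(E)\neq\varnothing$. Corollary~\ref{coreq} turns each of these into a label $0$, so the attack half holds.

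\emph{Support half.} Here I use $y\in\mathbf{Sup}(E)$.
\begin{itemize}
\item If $y\in\mathbf{P}$, the auxiliary support $(\{\top\},y)_e$ has label $1$, and so does $\top$.
\item Otherwise, Definition~\ref{defnI} gives a support $\mathbf{t}_H^y\in E_e$ with $H\subseteq E$. Then $L(\mathbf{t}_H^y)=1$ and $L(h)=1$ for all $h\in H$.
\end{itemize}
Both halves hold, so $L(y)=1$, i.e.\ $y\in E$.

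I expect the only delicate point to be the case $y\in\mathbf{P}$. The auxiliary support $(\{\top\},y)_e$ exists only for elements of $\mathbb{P}$. The remaining prima-facie elements, namely $\top$ and the auxiliary relations, are fixed at $1$ by the labelling requirements, so they belong to $E$ anyway. The other point to keep straight is that the ``1''-condition must be checked for each element separately, which we do. This yields $\mathbf{Acc}(E)=E$, so $E$ is a complete extension and $L=L_E$ is an extension-based complete labelling.
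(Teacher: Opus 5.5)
Your proposal is correct and follows the paper's proof essentially step for step: conflict-freeness from Lemma~\ref{conflictfree}; $E\subseteq\mathbf{Acc}(E)$ from Lemma~\ref{lem:pos} together with the direction $L(y)=0\Rightarrow y\in\mathbf{Def}(E)$ (you cite Corollary~\ref{coreq}, the paper cites Lemma~\ref{lem:neg-general} directly); $\mathbf{Acc}(E)\subseteq E$ from the definition of $\mathbf{Sup}$ plus Lemma~\ref{lem:def-to-zero}; and $L=L_E$ from Corollary~\ref{coreq}. The one difference favours you: you explicitly handle $y\in\mathbf{P}$ in the support half of $\mathbf{Acc}(E)\subseteq E$, using the auxiliary support $(\{\top\},y)_e$ when $y\in\mathbb{P}$ and otherwise the fixed label $1$ of $\top$ and of the auxiliary relations, whereas the paper's Claim~2 skips this case when it asserts that a support $\mathbf{t}_H^x\in E_e$ always exists.
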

    
    \begin{proof}		
    	Let $L:\mathbf{U}\to\{0,1,\tfrac12\}$ be an adjacent complete labelling (Definition~\ref{comlab}). Define
    	\[
    	E \triangleq \{x\in\mathbf{U}\mid L(x)=1\}.
    	\]
    	We must show that $E$ is a complete extension in the sense of Definition~\ref{extsem}, i.e.
    	\[
    	E\cap \mathbf{Def}(E)=\varnothing \quad\text{and}\quad \mathbf{Acc}(E)=E.
    	\]
    	
    	Since the conflict-freeness have been proved in Lemma \ref{conflictfree}, we proceed via the following two claims.

    	\begin{claim}[$E\subseteq \mathbf{Acc}(E)$]
    		Every $x\in E$ is acceptable w.r.t. $E$.
    	\end{claim}
    	\begin{proof}
    		Let $x\in E$, so $L(x)=1$. By Lemma \ref{lem:pos}, we obtain $x\in \mathbf{Sup}(E)$. Moreover, for every attack $\mathbf{k}_G^x\in\mathbf{R_a}$, the positive condition of Definition \ref{comlab} gives either $L(\mathbf{k}_G^x)=0$ or $\exists g\in G: L(g)=0$. By Lemma \ref{lem:neg-general}, $L(y)=0$ implies $y\in \mathbf{Def}(E)$. Hence every attack against $x$ is either itself in $\mathbf{Def}(E)$ or has a source in $\mathbf{Def}(E)$. Therefore, by Definition~\ref{defnI}, $x\in \mathbf{Acc}(E)$.
    	\end{proof}
    	
    	\begin{claim}[$\mathbf{Acc}(E)\subseteq E$]
    		Every acceptable element w.r.t. $E$ belongs to $E$.
    	\end{claim}
    	
    	\begin{proof}
    		Let \(x\in \mathbf{Acc}(E)\). By Definition~\ref{defnI}, \(x\in \mathbf{Sup}(E)\) and for every attack \(\mathbf{k}_G^x\in\mathbf{R_a}\),
    		\begin{equation}\label{20tag1}
    			\mathbf{k}_G^x\in \mathbf{Def}(E)\quad\text{or}\quad G\cap \mathbf{Def}(E)\neq\varnothing.
    		\end{equation}
    		
    		We verify that the ``1''-condition of Definition~\ref{comlab} holds for \(x\).
    		
    		\emph{Support part.} Since \(x\in \mathbf{Sup}(E)\), by the inductive definition of \(\mathbf{Sup}\) (Definition~\ref{defnI}) there exists a support \(\mathbf{t}_H^x\in E_e\) such that
    		\[
    		\mathbf{t}_H^x\in \mathbf{Sup}(E\setminus\{x\})
    		\quad\text{and}\quad
    		H\subseteq E\cap \mathbf{Sup}(E\setminus\{x\}).
    		\]
    		Therefore
    		\[
    		L(\mathbf{t}_H^x)=1,\qquad L(h)=1\ \text{for all }h\in H.
    		\]
    		Thus the support part of the 1-condition is satisfied.
    		
    		\emph{Attack part.} For any attack \(\mathbf{k}_G^x\), \eqref{20tag1} gives either \(\mathbf{k}_G^x\in \mathbf{Def}(E)\) or \(G\cap \mathbf{Def}(E)\neq\varnothing\). By Lemma \ref{lem:def-to-zero}, this is equivalent to \(L(\mathbf{k}_G^x)=0\) or \(\exists g\in G: L(g)=0\). Hence the attack part of the 1-condition is satisfied.
    		
    		Since both parts of the 1-condition are fulfilled, Definition~\ref{comlab} yields \(L(x)=1\), i.e. \(x\in E\). Therefore \(\mathbf{Acc}(E)\subseteq E\).
    	\end{proof}
    	
    	Combining Claims 1 and 2 yields $\mathbf{Acc}(E)=E$. Hence $E$ is a complete extension (Definition~\ref{extsem}), and $L$ is exactly the extension-based complete labelling induced by $E$ (Definition~\ref{extlab} and Corollary \ref{coreq}). This completes the proof.
    \end{proof}
    
    \begin{cor}
    	For a support-acyclic EHSAF $\mathcal{F}=(\mathbf{A},\mathbf{R_a},\mathbf{R_e},\mathbf{U},\mathbf{P})$, a labelling $\mathbf{U}\to \{0,1,\frac{1}{2}\}$ is an adjacent complete labelling of $\mathcal{F}$ iff it is an extension-based complete labelling of $\mathcal{F}$.
    \end{cor}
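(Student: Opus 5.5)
The corollary is the conjunction of the two inclusions already established, so I will simply assemble them.

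For the direction from extension-based to adjacent complete, I will invoke Theorem~\ref{thmea}. It holds for every EHSAF, with no acyclicity assumption. It shows that for any complete extension $E$, the induced labelling $L_E$ of Definition~\ref{extlab} meets the three clauses of Definition~\ref{comlab}. Remark~\ref{remark1} supplies the labelling requirements of Definition~\ref{semantics} on auxiliary elements. So this direction is immediate.

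For the converse I will use Theorem~\ref{thmae}, which is where support-acyclicity enters. Given an adjacent complete labelling $L$, set $E=\{x\mid L(x)=1\}$. Lemma~\ref{conflictfree} gives conflict-freeness, and the two claims in that proof give $\mathbf{Acc}(E)=E$, so $E$ is a complete extension. Two further checks show $L=L_E$ pointwise. First, Corollary~\ref{coreq} gives $L(y)=0$ iff $y\in\mathbf{Def}(E)$. Second, $L(y)=1$ iff $y\in E$ holds by the definition of $E$. Hence $L(y)=\tfrac12$ exactly in the ``otherwise'' case.

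The only point needing care is that Definition~\ref{extlab} is well defined, meaning $E$ and $\mathbf{Def}(E)$ are disjoint so the cases do not clash. This follows from conflict-freeness, either Lemma~\ref{conflictfree} or the conflict-free clause of Definition~\ref{extsem}. With this checked, the two inclusions combine into the claimed equivalence, and no step presents a real obstacle.
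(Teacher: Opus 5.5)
Your proposal is correct and takes the same route as the paper, which also gets the corollary immediately by combining Theorem~\ref{thmea} (every extension-based complete labelling is adjacent complete, for any EHSAF) with Theorem~\ref{thmae} (the converse under support-acyclicity). Your extra remark that $L_E$ is well defined because a complete extension is conflict-free is a correct detail that the paper leaves implicit.
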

    \begin{proof}
    	This follows immediately from Theorems \ref{thmea} and \ref{thmae}.
    \end{proof}
         \begin{exmp}[Counterexample for the converse of Theorem~\ref{thmea}]
  	\label{ex:counterexample}
  	This example constructs an EHSAF containing a cycle of evidential supports. 
  	It serves two purposes: (i) to demonstrate that, in the presence of support cycles, 
  	adjacent complete labellings need not coincide with extension-based complete labellings; 
  	and (ii) to illustrate the distinct epistemic attitudes underlying the two semantics.
  	
  	We first define the pre‐EHSAF $\mathcal{F}_0 = (\mathbb{A}, \mathbb{R}_a, \mathbb{R}_e, \mathbb{U}, \mathbb{P})$ as follows:
  	$\mathbb{A} = \{a,b\}$, $\mathbb{R}_a = \varnothing$, $\mathbb{R}_e = \{\mathbf{t}_1,\mathbf{t}_2\}$ with $\mathbf{t}_1 =(\{a\}, b)$ and $\mathbf{t}_2 =(\{b\}, a)$, $\mathbb{P} = \{\mathbf{t}_1,\mathbf{t}_2\}$, and $\mathbb{U} = \mathbb{A} \cup \mathbb{R}_a \cup \mathbb{R}_e = \{a,b,\mathbf{t}_1,\mathbf{t}_2\}$. The support relation forms a cycle: $a$ supports $b$ via $\mathbf{t}_1$, and $b$ supports $a$ via $\mathbf{t}_2$. 
  	The support names $\mathbf{t}_1,\mathbf{t}_2$ are prima‐facie, whereas the arguments $a,b$ are not. Following Definition~\ref{EHSAF}, we construct the corresponding EHSAF 
  	$\mathcal{F} = (\mathbf{A},\mathbf{R_a},\mathbf{R_e},\mathbf{U},\mathbf{P})$ by adding auxiliary elements. Explicitly:
  	\begin{align*}
  		\mathbf{A} &= \{a,b,\bot,\top\},\\
  		\mathbf{R_a} &= \{ (\{\bot\},a)_a,\; (\{\bot\},b)_a,\; (\{\bot\},\mathbf{t}_1)_a,\; (\{\bot\},\mathbf{t}_2)_a \},\\
  		\mathbf{R_e} &= \{ \mathbf{t}_1:(\{a\},b)_e,\; \mathbf{t}_2:(\{b\},a)_e,\; (\{\top\},\mathbf{t}_1)_e,\; (\{\top\},\mathbf{t}_2)_e \},\\
  		\mathbf{U} &= \mathbf{A} \cup \mathbf{R_a} \cup \mathbf{R_e},\\
  		\mathbf{P} &= \{\mathbf{t}_1,\mathbf{t}_2,\top\} \cup \mathbf{R_a} \cup \{ (\{\top\},\mathbf{t}_1)_e,\; (\{\top\},\mathbf{t}_2)_e \}.
  	\end{align*}
  	By Definitions~\ref{semantics}, \ref{comlab} and \ref{extlab}, the auxiliary elements have fixed labels: $\|\bot\|=0$, $\|\top\|=1$, and all newly added attacks and supports are labelled $1$ for any adjacent complete labellings or extension-based complete labelling; hence we focus on these four core elements in the analysis below.
  	
  	\medskip
  	\noindent\textbf{Adjacent Complete Labellings.}
  	By Definition~\ref{comlab}, the support names $\mathbf{t}_1,\mathbf{t}_2\in P$ are labelled $1$ in every adjacent complete labelling. Thus, Definition~\ref{comlab} implies exactly three possible labellings denoted by $L_1$, $L_0$ and $L_{1/2}$:
  	\begin{enumerate}
  		\item \textit{Optimistic scenario (anticipated evidence emerges in the future):}
  		\[
  		L_1(a)=1,\quad L_1(b)=1,\quad L_1(\mathbf{t}_1)=1,\quad L_1(\mathbf{t}_2)=1.
  		\]
  		
  		\item \textit{Pessimistic scenario (future evidential counterexamples refute the arguments):}
  		\[
  		L_0(a)=0,\quad L_0(b)=0,\quad L_0(\mathbf{t}_1)=1,\quad L_0(\mathbf{t}_2)=1.
  		\]
  		
  		\item \textit{Agnostic scenario (evidence remains unavailable):}
  		\[
  		L_{1/2}(a)=\frac{1}{2},\quad L_{1/2}(b)=\frac{1}{2},\quad L_{1/2}(\mathbf{t}_1)=1,\quad L_{1/2}(\mathbf{t}_2)=1.
  		\]
  	\end{enumerate}
  	These three labellings are exactly the adjacent complete labellings of $\mathcal{F}$.
  	
  	\medskip
  	\noindent\textbf{Extension-Based Complete Labellings.}
  	We now compute the complete extensions according to Definition~\ref{extsem}. 
  	Let \(E \subseteq \mathbf{U}\) be a complete extension, so \(E = \mathbf{Acc}(E)\) and \(E \cap \mathbf{Def}(E) = \varnothing\).
  	
  	\medskip
  	\noindent\textbf{Step 1: Auxiliary elements (already fixed).}
  	As noted above, \(\top \in E\), \(\bot \in \mathbf{Def}(E)\), and all added default attacks and supports (which are prima-facie) belong to \(E\). 
  	We will use these facts below.
  	
  	\medskip
  	\noindent\textbf{Step 2: The support names \(\mathbf{t}_1,\mathbf{t}_2\) are in \(E\).}
  	Since \(\mathbf{t}_1,\mathbf{t}_2 \in \mathbf{P}\), they are always in \(\mathbf{Sup}(E)\) by Definition~\ref{defnI}. 
  	Moreover, only attacker $\{\bot\}$ targets them. 
  	Hence \(\mathbf{t}_1,\mathbf{t}_2 \in \mathbf{Acc}(E)\). 
  	As \(E\) is complete, \(\mathbf{Acc}(E)=E\), so \(\mathbf{t}_1,\mathbf{t}_2 \in E\).
  	
  	\medskip
  	\noindent\textbf{Step 3: \(a,b\) cannot be in \(E\).}
  	Suppose \(a \in E\). Since \(E = \mathbf{Acc}(E) \subseteq \mathbf{Sup}(E)\), we must have \(a \in \mathbf{Sup}(E)\). 
  	The only support for \(a\) is \(\mathbf{t}_2\) with source \(\{b\}\). 
  	By Definition~\ref{defnI}, \(a \in \mathbf{Sup}(E)\) requires \(\mathbf{t}_2 \in \mathbf{Sup}(E\setminus\{a\})\) (which holds, as \(\mathbf{t}_2\in\mathbf{P}\)) and \(b \in \mathbf{Sup}(E\setminus\{a\})\). 
  	Since \(b\) is not prima-facie and the only supporter of $b$ is $\{a\}$ with $a\notin E\setminus\{a\}$, we have \(b \notin \mathbf{Sup}(E\setminus\{a\})\). Contradiction. 
  	Hence \(a \notin E\). By symmetry, \(b \notin E\).
  	
  	Thus, combining with Steps 1 and 2, the only candidate for \(E\) is:
  	\[
  	E = \mathbf{P}.
  	\]
  	
  	\medskip
  	\noindent\textbf{Step 4: Determining the labels of \(a\) and \(b\) via \(\mathbf{Def}(E)\).}
  	For \(E = \mathbf{P}\), we compute \(\mathbf{Def}(E)\). 
  	Since every attack has source \(\{\bot\}\) and \(\bot \notin E\), no attack is effective; hence \(\operatorname{\mathbf{Def}}_a(E) = \varnothing\).
  	
  	For \(\operatorname{\mathbf{Def}}_s(E) = \mathbf{U} \setminus \mathbf{Sup}(\mathbf{U} \setminus \operatorname{\mathbf{Def}}_a(E)) = \mathbf{U} \setminus \mathbf{Sup}(\mathbf{U})\), 
  	we note that \(a,b\notin \mathbf{Sup}(\mathbf{U})\) by the same cyclic argument as in Step 3. 
  	Thus \(\mathbf{Sup}(\mathbf{U}) = \mathbf{P}\) (which contains \(\mathbf{t}_1,\mathbf{t}_2,\top\) and all auxiliary interactions, but not \(\bot,a,b\)). 
  	Therefore
  	\[
  	\mathbf{Def}(E) = \varnothing \cup (\mathbf{U} \setminus \mathbf{P}) = \{\bot, a, b\}.
  	\]
  	
  	The extension-based complete labelling (Definition~\ref{extlab}) assigns \(1\) to elements in \(E\), and \(0\) to elements in \(\mathbf{Def}(E)\). 
  	Since \(a,b \in \mathbf{Def}(E)\) and \(\mathbf{t}_1,\mathbf{t}_2 \in E\), we obtain the unique extension-based complete labelling $L_{E^*}$ such that:
  	\[
  	L_{E^*}(a)=0,\quad L_{E^*}(b)=0,\quad L_{E^*}(\mathbf{t}_1)=1,\quad L_{E^*}(\mathbf{t}_2)=1,
  	\]
  	which is exactly the pessimistic scenario among the adjacent complete labellings.
  	
  	\medskip
  	\noindent\textbf{Semantic Comparison.}
  	This example clearly manifests the core philosophical divergence:
  	\begin{itemize}
  		\item \emph{Extension‐based complete semantics} adopts a strict evidentialist stance: 
  		arguments that cannot be grounded in current prima‐facie evidence (here, those trapped in a support cycle) 
  		are definitively judged false, following the conservative principle that an argument without present support is regarded as wrong.
  		\item \emph{Adjacent complete labelling semantics} adopts a tolerant, open epistemic attitude: 
  		it accommodates three possible truth‐value outcomes for arguments in support cycles, 
  		corresponding to future confirmation, future refutation, or persistent undecidability. 
  		It does not prematurely rule out potentially sound arguments solely on the grounds of currently absent evidence.
  	\end{itemize}
  	Hence the statement that for every EHSAF every adjacent complete labelling is extension‐based complete fails when the support relation contains cycles. $\square$   	    
  \end{exmp}

    	\section{Encoded Semantics of EHSAFs}
	We claim that for a given $\mathcal{PL}$ with the propositional constant set $\text{Con}$ and the propositional variable set $\text{Var}$ and for any $EHSAF=(\mathbf{A},\mathbf{R_a},\mathbf{R_e},\mathbf{U},\mathbf{P})$, always let $\mathbf{U}\subseteq \text{Con}\cup \text{Var}$ and $\{\bot, \top\}\cup ((\mathbf{R_e}\cup\mathbf{R_e})\setminus(R_a\cup R_e))\subseteq\text{Con}$ with $\|\bot\|=0$, $\|\top\|=1$ and $\|\alpha\|=1$ for each $\alpha \in (\mathbf{R_e}\cup\mathbf{R_e})\setminus(R_a\cup R_e)$.
	\subsection{General Encoded Semantics}
			\begin{defn}[Encoding for EHSAFs]
		An \emph{encoding} of $\mathcal{EHSAF}$ w.r.t. $\mathcal{PL}$ is a function $ec: \mathcal{EHSAF} \rightarrow \mathcal{F_{\mathcal{PL}}}$ such that for an $EHSAF=(\mathbf{A}, \mathbf{R_a}, \mathbf{R_e}, \mathbf{U}, \mathbf{P})$, the set of propositional variables, constant $\bot$ and constant $\top$ appearing in $ec(EHSAF)$ coincides exactly with the universal set $\mathbf{U}$. The formula $ec(EHSAF)$ is called the \emph{encoded formula} of the EHSAF.
	\end{defn}
	
	\begin{defn}[Encoded Semantics]\label{defn:encoded_semantics}
		Given a propositional logic system $\mathcal{PL}$ and an encoding function $ec: \mathcal{EHSAF} \to \mathcal{F_{\mathcal{PL}}}$, the \emph{encoded semantics} induced by $ec$ and $\mathcal{PL}$ is a labelling semantics $\mathfrak{LS}_{ec}^{\mathcal{PL}}$ defined by:
		\[
		\mathfrak{LS}_{ec}^{\mathcal{PL}}(EHSAF) = \left\{ \|\cdot\| \mid \|ec(EHSAF)\| = 1 \right\}.
		\]
		An assignment $\|\cdot\| \in \mathfrak{LS}_{ec}^{\mathcal{PL}}(EHSAF)$ is called a \emph{model} of the EHSAF under this encoded semantics, denoted as $\|\cdot\| \models_{\mathfrak{LS}_{ec}^{\mathcal{PL}}} EHSAF$.
	\end{defn}
	Obviously, for an assignment $\|\cdot\|$, $\|\cdot\| \models_{\mathfrak{LS}_{ec}^{\mathcal{PL}}} EHSAF$ iff $\|ec(EHSAF)\| = 1$ in the $\mathcal{PL}$ iff $\|\cdot\| \models_\mathcal{PL} ec(EHSAF)$. 
		\begin{defn}[Skeptical and Credulous Encoded Semantics for EHSAF]
		Let $\mathfrak{L}\mathfrak{S}_{ec}^{\mathcal{PL}}$ be an encoded semantics associated with a propositional logic $\mathcal{PL}$ with truth value set $L\subseteq[0,1]$ and an encoding function $ec$. 
		
		The \emph{skeptical encoded semantics} induced by $ec$ and $\mathcal{PL}$ is a semantics $\mathfrak{L}\mathfrak{S}_{ec}^{\mathcal{PL},\text{sk}}$ such that for any EHSAF $\mathcal{F}=(\mathbf{A},\mathbf{R}_{a},\mathbf{R}_{e},\mathbf{U},\mathbf{P})$,
		\[
		\mathfrak{L}\mathfrak{S}_{ec}^{\mathcal{PL},\text{sk}}(\mathcal{F}) =
		\left\{
		\|\cdot\|_{\mathcal{F}}^{\text{sk}} : \mathbf{U}\to L
		\;\middle|\;
		\|x\|_{\mathcal{F}}^{\text{sk}} = \min\{\|x\| \mid \|\cdot\| \in \mathfrak{L}\mathfrak{S}_{ec}^{\mathcal{PL}}(\mathcal{F})\},\ \forall x\in\mathbf{U}
		\right\}.
		\]
		
		Analogously, the \emph{credulous encoded semantics} induced by $ec$ and $\mathcal{PL}$ is a semantics $\mathfrak{L}\mathfrak{S}_{ec}^{\mathcal{PL},\text{cr}}$ such that for any EHSAF $\mathcal{F}=(\mathbf{A},\mathbf{R}_{a},\mathbf{R}_{e},\mathbf{U},\mathbf{P})$,
		\[
		\mathfrak{L}\mathfrak{S}_{ec}^{\mathcal{PL},\text{cr}}(\mathcal{F}) =
		\left\{
		\|\cdot\|_{\mathcal{F}}^{\text{cr}} : \mathbf{U}\to L
		\;\middle|\;
		\|x\|_{\mathcal{F}}^{\text{cr}} = \max\{\|x\| \mid \|\cdot\| \in \mathfrak{L}\mathfrak{S}_{ec}^{\mathcal{PL}}(\mathcal{F})\},\ \forall x\in\mathbf{U}
		\right\}.
		\]
	\end{defn}
	In words, the model of skeptical encoded semantics assigns to each element $x\in\mathbf{U}$ the minimum truth value it receives across all models of the basic encoded semantics, thereby capturing the intersection (or infimum) of all possible interpretations. The credulous encoded semantics, on the other hand, assigns the maximum truth value across all models, capturing the union (or supremum) of all possible interpretations. Both semantics yield a unique labelling for each EHSAF under their respective reasoning modes.
	
	\begin{defn}\label{normalencoding}
		For a given $EHSAF = (\mathbf{A}, \mathbf{R_a}, \mathbf{R_e}, \mathbf{U}, \mathbf{P})$, the \emph{normal encoding} of the EHSAF w.r.t. a $\mathcal{PL}$ is an encoding $ec_{n}$ such that
		\begin{equation*}
			ec_{n}(EHSAF)=\bigwedge_{\beta\in \mathbb{U}}(\beta\leftrightarrow(\bigwedge_{\mathbf{k}_{S_i}^\beta\in \mathbf{R_a}}\neg(\mathbf{k}_{S_i}^{\beta}\wedge\bigwedge_{b_{ij}\in S_i}b_{ij}))\wedge(\bigvee_{\mathbf{t}_{T_i}^\beta\in \mathbf{R_e}}(\mathbf{t}_{T_i}^{\beta}\wedge\bigwedge_{c_{ij}\in T_i}c_{ij}))).
		\end{equation*}
		The normal encoded semantics w.r.t. a $\mathcal{PL}$ is a encoded semantics induced by $ec_{n}$ and $\mathcal{PL}$, denoted by $\mathfrak{LS}_{ec_{n}}^{\mathcal{PL}}$.
	\end{defn}
		\subsection{Discrete Encoded Semantics}

\begin{thm}\label{acpl3}
	For any EHSAF $\mathcal{F}=(\mathbf{A},\mathbf{R}_a,\mathbf{R}_e,\mathbf{U},\mathbf{P})$, 
	\[
	\mathfrak{LS}_{ec_n}^{\mathcal{PL}_3^L}(EHSAF)= \mathfrak{LS}_{ac}(EHSAF).
	\]
\end{thm}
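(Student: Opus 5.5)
The plan is to reduce the statement to Theorem~\ref{eqofeqandadcom} by showing that $\mathfrak{LS}_{ec_n}^{\mathcal{PL}_3^L}(\mathcal{F})=\mathfrak{LS}_{Eq_3^{EH}}(\mathcal{F})$. Both sides already fix the same values on the auxiliary elements: $\|\bot\|=0$, $\|\top\|=1$ and $\|\alpha\|=1$ for each auxiliary interaction, by the convention at the start of Section~4 and by Definition~\ref{semantics}. It therefore suffices to compare the conditions imposed on each $\beta\in\mathbb{U}$.

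\textbf{Step 1 (conjunction).} The encoded formula $ec_n(\mathcal{F})$ is a finite conjunction over $\beta\in\mathbb{U}$. In $\mathcal{PL}_3^L$ the conjunction is interpreted by $\min$, so $\|ec_n(\mathcal{F})\|=1$ holds iff every conjunct $\beta\leftrightarrow\Phi_\beta$ has value $1$. Here $\Phi_\beta$ denotes the right-hand side of the biconditional in Definition~\ref{normalencoding}.

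\textbf{Step 2 (biconditional).} Table~\ref{Leftrightarrow} shows that $\|a\leftrightarrow b\|=1$ iff $\|a\|=\|b\|$. This follows from the diagonal of the table: the value $1$ appears only there. Hence the model condition becomes $\|\beta\|=\|\Phi_\beta\|$ for all $\beta\in\mathbb{U}$.

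\textbf{Step 3 (compute $\|\Phi_\beta\|$).} Using $\|\neg a\|=1-\|a\|$, $\wedge=\min$ and $\vee=\max$, each attack conjunct has value $1-\min\{\|\mathbf{k}_S^\beta\|,\min_{s\in S}\|s\|\}$. This equals $\max\{1-\|\mathbf{k}_S^\beta\|,\max_{s\in S}(1-\|s\|)\}$, which is the term $v_k$ in Definition~\ref{eq3}. Taking the big conjunction over attacks then gives $A(\beta)$. In the same way, the big disjunction over supports gives $\max_{t}\min\{\|t\|,\min_{r\in T}\|r\|\}=B(\beta)$. Consequently $\|\Phi_\beta\|=\min\{A(\beta),B(\beta)\}$, which is exactly equation~\eqref{star1}. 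Theorem~\ref{eqofeqandadcom} then finishes the argument.

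\textbf{Main obstacle: empty index sets.} The one point needing care is the treatment of empty conjunctions and disjunctions, and of the auxiliary elements. By Definition~\ref{EHSAF}, every $\beta\in\mathbb{U}$ has at least one attack: either an original one or the added $(\{\bot\},\beta)_a$. It also has at least one support: an original one, $(\{\bot\},\beta)_e$, or $(\{\top\},\beta)_e$ when $\beta\in\mathbb{P}$. So neither the big $\bigwedge$ nor the big $\bigvee$ in $\Phi_\beta$ is empty, and the $\min$ and $\max$ in Definition~\ref{eq3} are taken over non-empty sets.

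Since the auxiliary constants are interpreted identically on both sides, the two evaluations agree term by term. The identity $1-\min=\max(1-\cdot)$ is elementary on $\{0,\frac12,1\}$, so the remaining steps are routine.
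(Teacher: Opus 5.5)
Your proof is correct, but it is organised differently from the paper's. The paper argues directly. For each $\beta\in\mathbb{U}$ it splits on $\|\beta\|\in\{0,\tfrac12,1\}$ and translates the corresponding clause of Definition~\ref{comlab} into $\|\Phi_\beta\|=1$, $\|\Phi_\beta\|=0$, or $\|\Phi_\beta\|=\tfrac12$. It then reads off $\|\beta\leftrightarrow\Phi_\beta\|=1$. That proof never mentions $eq_3^{EH}$; the three-way equality with $\mathfrak{LS}_{Eq_3^{EH}}$ appears only afterwards, as a corollary.

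You instead prove the value-level identity $\|\Phi_\beta\|=\min\{A(\beta),B(\beta)\}$ for every assignment. This gives $\mathfrak{LS}_{ec_n}^{\mathcal{PL}_3^L}(\mathcal{F})=\mathfrak{LS}_{Eq_3^{EH}}(\mathcal{F})$ outright, which is the three-valued counterpart of Theorem~\ref{fuzzyequivalence}. You then delegate all the case analysis to Theorem~\ref{eqofeqandadcom}.

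Your route has three advantages:
\begin{itemize}
\item It is shorter and does the quantifier-to-$\min$/$\max$ translation only once.
\item It works with the equation $\|\beta\|=\|\Phi_\beta\|$. The paper's chains instead end in steps like ``$\|\Phi_\beta\|=1\iff\|\beta\leftrightarrow\Phi_\beta\|=1$'', which hold only under the case hypothesis on $\|\beta\|$; you avoid that conditional bookkeeping.
\item It makes explicit that $\mathcal{K}_\beta$ and $\mathcal{T}_\beta$ are non-empty, as guaranteed by Definition~\ref{EHSAF}. The paper uses this silently.
\end{itemize}

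The paper's route, in exchange, is self-contained, since it does not depend on Theorem~\ref{eqofeqandadcom}. It also shows directly how each connective of the encoding mirrors a quantifier in the labelling clauses.
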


\begin{proof}
	Any assignment on auxiliary elements in $\mathbf{U}$ is trivially consistent between adjacent complete labellings and models of $ec_n(\mathcal{F})$ in $\mathcal{PL}_3^L$. We need to check that for a given assignment $\|\cdot\|$ and for each $\beta\in{U}$, the value of $\beta$ satisfies the adjacent complete labelling conditions (Definition~\ref{comlab}) iff
	\[
	\left\|\beta \leftrightarrow \left( \bigwedge_{\mathbf{k}_{S}^{\beta}\in\mathbf{R}_a} \neg(\mathbf{k}_{S}^{\beta}\wedge\bigwedge_{s\in S}s) \;\wedge\; \bigvee_{\mathbf{t}_{T}^{\beta}\in\mathbf{R}_e} (\mathbf{t}_{T}^{\beta}\wedge\bigwedge_{t\in T}t) \right)\right\|=1
	\]
	in $\mathcal{PL}_3^L$. Denote the right-hand side of the equivalence by $\Phi_\beta$. We discuss three cases.
	
	\begin{itemize}
		\item \textbf{Case 1, $\|\beta\|=1$.}
		\\
		$\|\beta\|=1$ by adjacent complete labelling
		\\
		$\Longleftrightarrow$ $[\forall \mathbf{k}_{S}^{\beta}\in\mathbf{R}_a: \|\mathbf{k}_{S}^{\beta}\|=0$ or $\exists s\in S: \|s\|=0]$ and $[\exists \mathbf{t}_{T}^{\beta}\in\mathbf{R}_e: \|\mathbf{t}_{T}^{\beta}\|=1$ and $\forall t\in T: \|t\|=1]$ by Definition~\ref{comlab}
		\\
		$\Longleftrightarrow$ $[\forall \mathbf{k}_{S}^{\beta}\in\mathbf{R}_a: \|\mathbf{k}_{S}^{\beta}\wedge\bigwedge_{s\in S}s\|=0]$ and $[\exists \mathbf{t}_{T}^{\beta}\in\mathbf{R}_e: \|\mathbf{t}_{T}^{\beta}\wedge\bigwedge_{t\in T}t\|=1]$ in $\mathcal{PL}_3^L$
		\\
		$\Longleftrightarrow$ $[\forall \mathbf{k}_{S}^{\beta}\in\mathbf{R}_a: \|\neg(\mathbf{k}_{S}^{\beta}\wedge\bigwedge_{s\in S}s)\|=1]$ and $[\|\bigvee_{\mathbf{t}_{T}^{\beta}\in\mathbf{R}_e}(\mathbf{t}_{T}^{\beta}\wedge\bigwedge_{t\in T}t)\|=1]$ in $\mathcal{PL}_3^L$
		\\
		$\Longleftrightarrow$ $\|\bigwedge_{\mathbf{k}_{S}^{\beta}\in\mathbf{R}_a}\neg(\mathbf{k}_{S}^{\beta}\wedge\bigwedge_{s\in S}s)\|=1$ and $\|\bigvee_{\mathbf{t}_{T}^{\beta}\in\mathbf{R}_e}(\mathbf{t}_{T}^{\beta}\wedge\bigwedge_{t\in T}t)\|=1$ in $\mathcal{PL}_3^L$
		\\
		$\Longleftrightarrow$ $\|\Phi_\beta\|=1$ in $\mathcal{PL}_3^L$
		\\
		$\Longleftrightarrow$ $\|\beta\leftrightarrow \Phi_\beta\|=1$ in $\mathcal{PL}_3^L$.
		
		\item \textbf{Case 2, $\|\beta\|=0$.}
		\\
		$\|\beta\|=0$ by adjacent complete labelling
		\\
		$\Longleftrightarrow$ $[\exists \mathbf{k}_{S}^{\beta}\in\mathbf{R}_a: \|\mathbf{k}_{S}^{\beta}\|=1$ and $\forall s\in S: \|s\|=1]$ or $[\forall \mathbf{t}_{T}^{\beta}\in\mathbf{R}_e: \|\mathbf{t}_{T}^{\beta}\|=0$ or $\exists t\in T: \|t\|=0]$ by Definition~\ref{comlab}
		\\
		$\Longleftrightarrow$ $[\exists \mathbf{k}_{S}^{\beta}\in\mathbf{R}_a: \|\mathbf{k}_{S}^{\beta}\wedge\bigwedge_{s\in S}s\|=1]$ or $[\forall \mathbf{t}_{T}^{\beta}\in\mathbf{R}_e: \|\mathbf{t}_{T}^{\beta}\wedge\bigwedge_{t\in T}t\|=0]$ in $\mathcal{PL}_3^L$
		\\
		$\Longleftrightarrow$ $[\exists \mathbf{k}_{S}^{\beta}\in\mathbf{R}_a: \|\neg(\mathbf{k}_{S}^{\beta}\wedge\bigwedge_{s\in S}s)\|=0]$ or $[\|\bigvee_{\mathbf{t}_{T}^{\beta}\in\mathbf{R}_e}(\mathbf{t}_{T}^{\beta}\wedge\bigwedge_{t\in T}t)\|=0]$ in $\mathcal{PL}_3^L$
		\\
		$\Longleftrightarrow$ $\|\bigwedge_{\mathbf{k}_{S}^{\beta}\in\mathbf{R}_a}\neg(\mathbf{k}_{S}^{\beta}\wedge\bigwedge_{s\in S}s)\|=0$ or $\|\bigvee_{\mathbf{t}_{T}^{\beta}\in\mathbf{R}_e}(\mathbf{t}_{T}^{\beta}\wedge\bigwedge_{t\in T}t)\|=0$ in $\mathcal{PL}_3^L$
		\\
		$\Longleftrightarrow$ $\|\Phi_\beta\|=0$ in $\mathcal{PL}_3^L$
		\\
		$\Longleftrightarrow$ $\|\beta\leftrightarrow \Phi_\beta\|=1$ in $\mathcal{PL}_3^L$.
		
		\item \textbf{Case 3, $\|\beta\|=\frac12$.}
		\\
		$\|\beta\|=\frac12$ by adjacent complete labelling
		\\
		$\Longleftrightarrow$ $\|\beta\|\neq 1$ and $\|\beta\|\neq 0$ under adjacent complete labelling
		\\
		$\Longleftrightarrow$ $\|\Phi_\beta\|\neq 1$ and $\|\Phi_\beta\|\neq 0$ in $\mathcal{PL}_3^L$ by Case 1 and Case 2
		\\
		$\Longleftrightarrow$ $\|\Phi_\beta\|=\frac12$ in $\mathcal{PL}_3^L$
		\\
		$\Longleftrightarrow$ $\|\beta\leftrightarrow \Phi_\beta\|=1$ in $\mathcal{PL}_3^L$.
	\end{itemize}
	
	From the three cases above, for a given assignment $\|\cdot\|$, the value $\|\beta\|$ of any $\beta\in{U}$ satisfies the adjacent complete labelling conditions iff
	\[
	\left\|\beta \leftrightarrow \left( \bigwedge_{\mathbf{k}_{S}^{\beta}\in\mathbf{R}_a} \neg(\mathbf{k}_{S}^{\beta}\wedge\bigwedge_{s\in S}s) \;\wedge\; \bigvee_{\mathbf{t}_{T}^{\beta}\in\mathbf{R}_e} (\mathbf{t}_{T}^{\beta}\wedge\bigwedge_{t\in T}t) \right)\right\|=1
	\]
	in $\mathcal{PL}_3^L$.
	
	Thus, an assignment $\|\cdot\|$ of the EHSAF is an adjacent complete labelling iff the assignment $\|\cdot\|$ is a model of $ec_n(\mathcal{F})$ in $\mathcal{PL}_3^L$.
\end{proof}

\begin{cor}
	For every EHSAF \(\mathcal{F}\),
	\[
	\mathfrak{L}\mathfrak{S}_{Eq_{3}^{EH}}(\mathcal{F}) = \mathfrak{L}\mathfrak{S}_{ac}(\mathcal{F}) = \mathfrak{L}\mathfrak{S}_{ec_n}^{\mathcal{PL}_3^L}(\mathcal{F}).
	\]
\end{cor}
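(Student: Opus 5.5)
The corollary is a direct combination of two equalities already established, so the plan is simply to chain them. First, I will invoke Theorem~\ref{eqofeqandadcom}. It states that for every EHSAF $\mathcal{F}$ the solutions of the three-valued equational system $eq_3^{EH}$ are exactly the adjacent complete labellings, i.e.\ $\mathfrak{L}\mathfrak{S}_{Eq_{3}^{EH}}(\mathcal{F}) = \mathfrak{L}\mathfrak{S}_{ac}(\mathcal{F})$. Second, I will invoke Theorem~\ref{acpl3}. It states that the models of the normal encoding $ec_n(\mathcal{F})$ in $\mathcal{PL}_3^L$ are exactly the adjacent complete labellings, i.e.\ $\mathfrak{LS}_{ec_n}^{\mathcal{PL}_3^L}(\mathcal{F}) = \mathfrak{LS}_{ac}(\mathcal{F})$. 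Transitivity of set equality then yields the three-way equality.

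The only point needing care is that all three sets are sets of labellings over the same domain. Each is a set of functions $\mathbf{U}\to\{0,\tfrac12,1\}$, and each respects the same fixed values on the auxiliary elements: $\|\bot\|=0$, $\|\top\|=1$, and value $1$ on the added attacks and supports.
\begin{itemize}
\item For the equational and adjacent complete semantics, this is built into Definition~\ref{semantics} and Definition~\ref{eq3}.
\item For the encoded semantics, the convention stated at the beginning of the section on encoded semantics makes these auxiliary symbols constants with exactly these values. Also, $\mathcal{PL}_3^L$ has truth domain $\{0,\tfrac12,1\}$.
\end{itemize}
Hence the three collections live in the same space $2^{\mathcal{LAB}}$ and the two theorems can be composed directly.

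No genuine obstacle is expected. The one thing I would double-check is that the per-element biconditionals match, which is why the composition is consistent. Equation~\eqref{star1} evaluates $\min\{A(\beta),B(\beta)\}$, and $\Phi_\beta$ in the proof of Theorem~\ref{acpl3} is evaluated in $\mathcal{PL}_3^L$ using $\min$, $\max$ and $1-x$. These are the same function of the labels of the attackers, the supporters and the interaction names. Consequently $\|\beta\leftrightarrow\Phi_\beta\|=1$ iff $\|\beta\|=\|\Phi_\beta\|$, by the Table~\ref{Leftrightarrow} property. One could even prove $\mathfrak{L}\mathfrak{S}_{Eq_{3}^{EH}}(\mathcal{F})=\mathfrak{LS}_{ec_n}^{\mathcal{PL}_3^L}(\mathcal{F})$ directly from this observation, but citing the two theorems suffices.
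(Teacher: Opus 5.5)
Your proposal is correct and matches the paper's proof, which likewise obtains the corollary immediately by chaining Theorem~\ref{eqofeqandadcom} with Theorem~\ref{acpl3}. Your extra remark is also sound: $\|\Phi_\beta\|$ in $\mathcal{PL}_3^L$ evaluates to exactly $\min\{A(\beta),B(\beta)\}$, and $\Leftrightarrow$ returns $1$ only on equal arguments, so the outer equality could also be checked directly.
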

\begin{proof}
	It is obtained immediately from Theorem~\ref{eqofeqandadcom} and the above theorem.
\end{proof}	

\begin{thm}
	For every EHSAF \(\mathcal{F}\),
	\[
	\mathfrak{L}\mathfrak{S}_{as}(\mathcal{F}) = \mathfrak{L}\mathfrak{S}_{ec_n}^{\mathcal{PL}_2}(\mathcal{F}).
	\]
\end{thm}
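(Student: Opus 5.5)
The plan is to reduce the statement to Theorem~\ref{acpl3}. The key fact is that $\mathcal{PL}_2$, classical two-valued logic, is the restriction of $\mathcal{PL}_3^L$ to $\{0,1\}$. On inputs from $\{0,1\}$, the truth functions of $\mathcal{PL}_3^L$ for $\neg$, $\wedge$, $\vee$, $\rightarrow$ and $\leftrightarrow$ (Tables~\ref{Rightarrow} and \ref{Leftrightarrow}, together with $1-x$, $\min$ and $\max$) agree with the classical Boolean tables and take values in $\{0,1\}$. An easy induction on formulas then shows: for every assignment $\|\cdot\|:\mathbf{U}\to\{0,1\}$ and every formula $\varphi$, the value of $\varphi$ computed in $\mathcal{PL}_2$ equals its value computed in $\mathcal{PL}_3^L$. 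The auxiliary constants $\bot$, $\top$ and the added attacks and supports receive the same fixed values $0$ or $1$ in both settings, so they cause no discrepancy.

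With this in hand, I would prove the two inclusions.

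\emph{Inclusion $\mathfrak{LS}_{ec_n}^{\mathcal{PL}_2}(\mathcal{F})\subseteq\mathfrak{LS}_{as}(\mathcal{F})$.} Take a model $\|\cdot\|$ of $ec_n(\mathcal{F})$ in $\mathcal{PL}_2$. It is $\{0,1\}$-valued. By the induction above, $\|ec_n(\mathcal{F})\|=1$ also holds in $\mathcal{PL}_3^L$. Theorem~\ref{acpl3} then makes $\|\cdot\|$ an adjacent complete labelling. Since it assigns no element the value $\frac12$, it is adjacent stable by Definition~\ref{staetc}.

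\emph{Inclusion $\mathfrak{LS}_{as}(\mathcal{F})\subseteq\mathfrak{LS}_{ec_n}^{\mathcal{PL}_2}(\mathcal{F})$.} An adjacent stable labelling is an adjacent complete labelling with values in $\{0,1\}$. Theorem~\ref{acpl3} gives $\|ec_n(\mathcal{F})\|=1$ in $\mathcal{PL}_3^L$. Because all values are Boolean, the induction transfers this to $\mathcal{PL}_2$.

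The only point needing care, and the step I expect to be the main obstacle, is the agreement lemma. I would check it connective by connective. For $\rightarrow$, the corner entries of Table~\ref{Rightarrow} are $0\Rightarrow0=1$, $0\Rightarrow1=1$, $1\Rightarrow0=0$ and $1\Rightarrow1=1$, which is the classical table. For $\leftrightarrow$, the corners of Table~\ref{Leftrightarrow} likewise match. The finite conjunctions and disjunctions in $ec_n$ reduce to iterated $\min$ and $\max$. Empty conjunctions and disjunctions do not arise, because Definition~\ref{EHSAF} guarantees that every $\beta\in\mathbb{U}$ has at least one attack and at least one support.

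Alternatively, one could bypass Theorem~\ref{acpl3} and directly repeat its Cases 1 and 2 with Boolean values; Case 3 disappears. I would prefer the reduction, since it is shorter.
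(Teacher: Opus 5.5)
Your proposal is correct and takes the same route as the paper. Both reduce to Theorem~\ref{acpl3} by noting that the $\mathcal{PL}_3^L$ truth functions in $ec_n$ agree with the classical ones on $\{0,1\}$. Hence the $\mathcal{PL}_2$ models are exactly the $\{0,1\}$-valued $\mathcal{PL}_3^L$ models, and by Definition~\ref{staetc} these are exactly the adjacent stable labellings. Your explicit agreement induction and your observation that Definition~\ref{EHSAF} rules out empty conjunctions and disjunctions are small points of extra care that the paper leaves implicit.
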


\begin{proof}
	Let \(\mathcal{F}\) be an arbitrary EHSAF. By Theorem \ref{acpl3}, the normal encoding \(ec_n\) exactly captures the adjacent complete semantics in the three-valued logic \(\mathcal{PL}_3^L\):
	\[
	\mathfrak{L}\mathfrak{S}_{ec_n}^{\mathcal{PL}_3^L}(\mathcal{F}) = \mathfrak{L}\mathfrak{S}_{ac}(\mathcal{F}).
	\]
	The encoding formula \(ec_n(\mathcal{F})\) uses only the connectives \(\neg,\wedge,\vee,\leftrightarrow\), whose truth tables on the set \(\{0,1\}\) coincide with those of classical two-valued logic \(\mathcal{PL}_2\). Therefore, restricting the universe of assignments to \(\{0,1\}\), we have
	\[
	\mathfrak{L}\mathfrak{S}_{ec_n}^{\mathcal{PL}_2}(\mathcal{F}) 
	= \bigl\{\|\cdot\| \in \mathfrak{L}\mathfrak{S}_{ec_n}^{\mathcal{PL}_3^L}(\mathcal{F}) \;\big|\; \|\cdot\| : \mathbf{U}\to \{0,1\}\bigr\}.
	\]
	By Definition \ref{staetc}, a labelling is adjacent stable exactly when it is adjacent complete and assigns only the values \(0\) or \(1\) to every element of \(\mathbf{U}\). Hence
	\[
	\mathfrak{L}\mathfrak{S}_{as}(\mathcal{F})
	= \bigl\{\|\cdot\| \in \mathfrak{L}\mathfrak{S}_{ac}(\mathcal{F}) \;\big|\; \|\cdot\| : \mathbf{U}\to \{0,1\}\bigr\}.
	\]
	Combining the two equalities yields
	\[
	\mathfrak{L}\mathfrak{S}_{ec_n}^{\mathcal{PL}_2}(\mathcal{F}) = \mathfrak{L}\mathfrak{S}_{as}(\mathcal{F}).
	\]
	Since \(\mathcal{F}\) was arbitrary, the theorem follows. \qedhere
\end{proof}

	\subsection{General Fuzzy Encoded Semantics and Its Core Properties}
	\subsubsection{General Fuzzy Encoded Semantics}

	\paragraph{Continuous Fuzzy Operator-Based Equational (CFOE) Semantics}
	
	\begin{defn}[CFOE System for EHSAF]\label{CFOE}
		Let \(N^*:[0,1]\to[0,1]\) be a continuous negation and \(\otimes:[0,1]^2\to[0,1]\) be a continuous t-norm. 
		For an EHSAF \(\mathcal{F}=(\mathbf{A},\mathbf{R}_{a},\mathbf{R}_{e},\mathbf{U},\mathbf{P})\), denote by 
		\(\mathcal{K}_\beta=\{\mathbf{k}_{S}^{\beta}\in\mathbf{R}_{a}\}\) the set of attacks targeting \(\beta\), and 
		\(\mathcal{T}_\beta=\{\mathbf{t}_{T}^{\beta}\in\mathbf{R}_{e}\}\) the set of supports targeting \(\beta\). Let \(\|\cdot\|:\mathbf{U}\to[0,1]\) be an assignment. 
		
		A \emph{continuous fuzzy operator-based equational (CFOF) system} \(eq_{[0,1]}^{*,\otimes}\) for \(\mathcal{F}\) is defined by the auxiliary-element conditions
		\[
		\|\bot\|=0,\qquad \|\top\|=1,\qquad \|\alpha\|=1 (\forall\alpha \in (\mathbf{R_e}\cup\mathbf{R_e})\setminus(R_a\cup R_e))
		\]
		and for every \(\beta\in \mathbb{U}\),
		\begin{equation}\label{star2}
			\|\beta\|
			\;=\;
			\left(
			\bigotimes_{\mathbf{k}_{S}^{\beta}\in\mathcal{K}_\beta}
			N^*\!\left(
			\|\mathbf{k}_{S}^{\beta}\|\otimes \bigotimes_{s\in S}\|s\|
			\right)
			\right)
			\;\otimes\;
			\left(
			N^*\!\left(
			\bigotimes_{\mathbf{t}_{T}^{\beta}\in\mathcal{T}_\beta}
			N^*\!\left(
			\|\mathbf{t}_{T}^{\beta}\|\otimes \bigotimes_{t\in T}\|t\|
			\right)
			\right)
			\right). 
		\end{equation}
		Here \(\bigotimes\) denotes the finite iteration of the t-norm \(\otimes\), and the second factor uses the standard t-conorm defined by \(x\oplus y = N^*(N^*(x)\otimes N^*(y))\), so that \(\oplus_{t\in\mathcal{T}_\beta}(\|\mathbf{t}_{T}^{\beta}\|\otimes\bigotimes_{t\in T}\|t\|)\) is expressed as \(N^*(\bigotimes_{t\in\mathcal{T}_\beta} N^*(\|\mathbf{t}_{T}^{\beta}\|\otimes\bigotimes_{t\in T}\|t\|))\).
		
		An assignment \(\|\cdot\|:\mathbf{U}\to[0,1]\) satisfying all equations is called a \emph{solution} of \(eq_{[0,1]}^{*,\otimes}\), denoted \(\|\cdot\|\models_{eq_{[0,1]}^{*,\otimes}}\mathcal{F}\).
		
		The \emph{continuous fuzzy operator-based equational (CFOE) semantics} induced by \(eq_{[0,1]}^{*,\otimes}\) is the function
		\[
		\mathfrak{L}\mathfrak{S}_{Eq_{[0,1]}^{*,\otimes}}:\mathcal{EHSAF}\to 2^{\mathcal{LAB}},
		\quad
		\mathfrak{L}\mathfrak{S}_{Eq_{[0,1]}^{*,\otimes}}(\mathcal{F})
		=
		\{\|\cdot\| \mid \|\cdot\|\models_{eq_{[0,1]}^{*,\otimes}}\mathcal{F}\}.
		\]
	\end{defn}
	
	\paragraph{Continuous Fuzzy Normal Encoded (CFNE) Semantics}
	
	\begin{defn}[CFNE Semantics for EHSAF]\label{CFNE}
		Let \(\mathcal{PL}_{[0,1]}^{*,\otimes}\) be a fuzzy propositional logic system equipped with a continuous negation \(N^*\), a continuous t-norm \(\otimes\), and its residuated implication. 
		The \emph{continuous fuzzy normal encoded (CFNE) semantics} is the encoded semantics induced by the normal encoding \(ec_n\) (Definition~\ref{normalencoding}) and \(\mathcal{PL}_{[0,1]}^{*,\otimes}\), denoted by
		\[
		\mathfrak{L}\mathfrak{S}_{ec_n}^{\mathcal{PL}_{[0,1]}^{*,\otimes}}.
		\]
		That is, for every EHSAF \(\mathcal{F}\),
		\[
		\mathfrak{L}\mathfrak{S}_{ec_n}^{\mathcal{PL}_{[0,1]}^{*,\otimes}}(\mathcal{F})
		=
		\{\|\cdot\| : \mathbf{U}\to[0,1] \mid \|ec_n(\mathcal{F})\|_{\mathcal{PL}_{[0,1]}^{*,\otimes}}=1\}.
		\]
	\end{defn}
	
	\paragraph{Equivalence between CFOE and CFNE Semantics}
	
	\begin{thm}[Equivalence of CFOE and CFNE Semantics]\label{fuzzyequivalence}
		For every EHSAF \(\mathcal{F}\in\mathcal{EHSAF}\),
		\[
		\mathfrak{L}\mathfrak{S}_{Eq_{[0,1]}^{*,\otimes}}(\mathcal{F})
		=
		\mathfrak{L}\mathfrak{S}_{ec_n}^{\mathcal{PL}_{[0,1]}^{*,\otimes}}(\mathcal{F}).
		\]
	\end{thm}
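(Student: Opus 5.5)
The plan is to reduce both semantics to the same family of pointwise equations, exactly as in the proof of Theorem~\ref{acpl3}, but now over $[0,1]$. Fix $\mathcal{F}$ and an assignment $\|\cdot\|:\mathbf{U}\to[0,1]$ satisfying the auxiliary-element conditions; these are imposed identically on both sides, so only the equations for $\beta\in\mathbb{U}$ need comparing. For each $\beta\in\mathbb{U}$ write $\Phi_\beta$ for the right-hand side of the biconditional in Definition~\ref{normalencoding}, and let $R_\beta$ be the right-hand side of \eqref{star2}.

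\textbf{Step 1: evaluate $\Phi_\beta$.} Compute $\|\Phi_\beta\|$ in $\mathcal{PL}_{[0,1]}^{*,\otimes}$ by structural induction on the formula. In this logic $\wedge$ is interpreted by $\otimes$, $\neg$ by $N^*$, and $\vee$ by the De Morgan dual $x\oplus y=N^*(N^*(x)\otimes N^*(y))$. Associativity and commutativity of $\otimes$ make the finite iterated conjunctions $\bigwedge$ well defined and give $\bigotimes$. For the disjunction, we must check that iterating the binary $\oplus$ gives $N^*\bigl(\bigotimes N^*(\cdot)\bigr)$. This needs $N^*(N^*(x))=x$, i.e.\ that $N^*$ is involutive. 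For a continuous (hence strictly decreasing) negation this is only guaranteed if $N^*$ is strong. I would either assume this, since the concrete logics use the standard negation, or note that the nested formula is then taken as the definition of the iterated disjunction, as the remark after \eqref{star2} suggests. Either way the conclusion is $\|\Phi_\beta\|=R_\beta$ literally.

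\textbf{Step 2: the biconditional.} The conjunction $\|ec_n(\mathcal{F})\|=\bigotimes_{\beta}\|\beta\leftrightarrow\Phi_\beta\|$ equals $1$ iff every factor equals $1$, since $x\otimes y=1$ iff $x=y=1$ for any t-norm. By the preliminaries, for an R-implication $\|a\leftrightarrow b\|=1$ iff $\|a\|=\|b\|$. Hence $\|\cdot\|$ is a CFNE model iff $\|\beta\|=\|\Phi_\beta\|=R_\beta$ for every $\beta\in\mathbb{U}$, which is exactly the condition of being a solution of $eq_{[0,1]}^{*,\otimes}$. This gives both inclusions at once.

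\textbf{Expected obstacle.} The only genuinely delicate point is Step~1's treatment of $\vee$ and the empty-index conventions. If $\mathcal{K}_\beta$ or $\mathcal{T}_\beta$ were empty, the empty conjunction would need value $1$ and the empty disjunction value $0$. However, Definition~\ref{EHSAF} guarantees that every $\beta\in\mathbb{U}$ has at least one attack (possibly $(\{\bot\},\beta)_a$) and at least one support (possibly from $\bot$ or $\top$), so both index sets are nonempty and this issue disappears. I would state that observation explicitly, and flag the involutivity of $N^*$ as the assumption under which the disjunction clause matches \eqref{star2}.
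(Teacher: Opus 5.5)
Your proposal is correct and follows the paper's proof essentially step for step. The paper likewise reduces $\|ec_n(\mathcal{F})\|=1$ to $\|\beta\leftrightarrow\Phi_\beta\|=1$ for every $\beta\in\mathbb{U}$, uses that $\|a\leftrightarrow b\|=1$ iff $\|a\|=\|b\|$ for a residuated implication, and reads off the right-hand side of \eqref{star2} by interpreting $\wedge,\neg,\vee$ as $\otimes$, $N^*$ and the De Morgan dual. Your caveat is a sound refinement that the paper passes over with ``iterated accordingly'': matching the iterated binary disjunction with $N^*\bigl(\bigotimes N^*(\cdot)\bigr)$ requires $N^*$ to be involutive, unless $\bigvee$ is read as the $n$-ary De Morgan form. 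One correction: your parenthetical ``continuous (hence strictly decreasing)'' is false, since a continuous negation need only be non-increasing, but nothing in your argument depends on it.
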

	
	\begin{proof}
		Fix an EHSAF \(\mathcal{F}=(\mathbf{A},\mathbf{R}_{a},\mathbf{R}_{e},\mathbf{U},\mathbf{P})\) and an assignment \(\|\cdot\|:\mathbf{U}\to[0,1]\). By Definition~\ref{CFNE}, \(\|\cdot\|\in\mathfrak{L}\mathfrak{S}_{ec_n}^{\mathcal{PL}_{[0,1]}^{*,\otimes}}(\mathcal{F})\) iff
		\[
		\|ec_n(\mathcal{F})\| = 1
		\]
		in \(\mathcal{PL}_{[0,1]}^{*,\otimes}\). Since \(ec_n(\mathcal{F}) = \bigwedge_{\beta\in \mathbb{U}} \Phi_\beta\), where
		\[
		\Phi_\beta \stackrel{\text{def}}{=}
		\beta \leftrightarrow
		\left(
		\bigwedge_{\mathbf{k}_{S}^{\beta}\in\mathcal{K}_\beta}
		\neg\left(\mathbf{k}_{S}^{\beta}\wedge \bigwedge_{s\in S}s\right)
		\right)
		\wedge
		\left(
		\bigvee_{\mathbf{t}_{T}^{\beta}\in\mathcal{T}_\beta}
		\left(\mathbf{t}_{T}^{\beta}\wedge \bigwedge_{t\in T}t\right)
		\right),
		\]
		we have \(\|\bigwedge_{\beta}\Phi_\beta\|=1\) iff for every \(\beta\in \mathbb{U}\), \(\|\Phi_\beta\|=1\). In any fuzzy logic with residuated implication, \(\|a\leftrightarrow b\|=1\) iff \(\|a\|=\|b\|\). Hence for each \(\beta\),
		\[
		\|\Phi_\beta\|=1
		\;\Longleftrightarrow\;
		\|\beta\|
		=
		\left\|
		\left(
		\bigwedge_{\mathbf{k}_{S}^{\beta}\in\mathcal{K}_\beta}
		\neg\left(\mathbf{k}_{S}^{\beta}\wedge \bigwedge_{s\in S}s\right)
		\right)
		\wedge
		\left(
		\bigvee_{\mathbf{t}_{T}^{\beta}\in\mathcal{T}_\beta}
		\left(\mathbf{t}_{T}^{\beta}\wedge \bigwedge_{t\in T}t\right)
		\right)
		\right\|.
		\]
		
		Now interpret the connectives in \(\mathcal{PL}_{[0,1]}^{*,\otimes}\):\(\|a\wedge b\| = \|a\|\otimes\|b\|\), \(\|\neg a\| = N^*(\|a\|)\), \(\|a\vee b\| = N^*(N^*(\|a\|)\otimes N^*(\|b\|))\) (the t-conorm), and finite conjunctions/disjunctions are iterated accordingly.
		
		Thus the right-hand side of the equality above becomes
		\[
		\left(
		\bigotimes_{\mathbf{k}_{S}^{\beta}\in\mathcal{K}_\beta}
		N^*\!\left(
		\|\mathbf{k}_{S}^{\beta}\|\otimes \bigotimes_{s\in S}\|s\|
		\right)
		\right)
		\;\otimes\;
		\left(
		N^*\!\left(
		\bigotimes_{\mathbf{t}_{T}^{\beta}\in\mathcal{T}_\beta}
		N^*\!\left(
		\|\mathbf{t}_{T}^{\beta}\|\otimes \bigotimes_{t\in T}\|t\|
		\right)
		\right)
		\right),
		\]
		which is exactly the right-hand side (RHS) of Equation \eqref{star2} in Definition~\ref{CFOE}. Therefore,
		\[
		\|\Phi_\beta\|=1
		\;\Longleftrightarrow\;
		\|\beta\| = \text{RHS of Equation \eqref{star2}.} 
		\]
		Since this holds for every \(\beta\in \mathbb{U}\) and any assignment is trivially consistent among auxiliary elements, we have \(\|\cdot\|\models_{\mathcal{PL}_{[0,1]}^{*,\otimes}}ec_n(\mathcal{F})\) iff \(\|\cdot\|\models_{eq_{[0,1]}^{*,\otimes}}\mathcal{F}\). Hence the two semantics coincide.
	\end{proof}

	\subsubsection{Core Properties of CFNE Semantics}\label{property}

		Let $\mathcal{F} = (\mathbf{A}, \mathbf{R}_{a}, \mathbf{R}_{e}, \mathbf{U}, \mathbf{P})$ be an EHSAF. For each \(\beta\in \mathbb{U}\), let its attacking set and supporting set be
	\[
	\mathcal{K}_\beta = \{\mathbf{k}_{S_1}^{\beta}, \ldots, \mathbf{k}_{S_{n_a}}^{\beta}\},\qquad
	\mathcal{T}_\beta = \{\mathbf{t}_{T_1}^{\beta}, \ldots, \mathbf{t}_{T_{n_s}}^{\beta}\}.
	\]
	According to CFOE systems of EHSAFs, define the corresponding aggregation function
	\[
	f_\beta:\ [0,1]^{m_\beta} \longrightarrow [0,1]
	\]
	by
	\[
	f_\beta(\mathbf{x})
	\;=\;
	\underbrace{\bigotimes_{\mathbf{k}_{S}^{\beta}\in\mathcal{K}_\beta}
		N^*\!\left(
		\|\mathbf{k}_{S}^{\beta}\| \otimes \bigotimes_{s\in S} \|s\|
		\right)}_{\text{attack part } A}
	\;\otimes\;
	\underbrace{N^*\!\left(
		\bigotimes_{\mathbf{t}_{T}^{\beta}\in\mathcal{T}_\beta}
		N^*\!\left(
		\|\mathbf{t}_{T}^{\beta}\| \otimes \bigotimes_{t\in T} \|t\|
		\right)
		\right)}_{\text{support part } B},
	\]
	where \(\mathbf{x}\) ranges over all variables occurring in the source sets and the attack/support names relevant to \(\beta\), $f_\beta$ is decomposed into the attack part $A$ and the support part $B$, $\|\cdot\| : \mathbf{U} \to [0,1]$ is an arbitrary assignment, all \(\otimes\) are finite iterations of the continuous t-norm, and $m_\beta =|\mathcal{K}_\beta| +|\mathcal{T}_\beta| + \sum_{i=1}^{n_a}|\mathbf{k}_{S_{i}}^{\beta}| + \sum_{j=1}^{n_s} |\mathbf{t}_{T_j}^{\beta}|$. We characterize the properties of CFNE semantics via an analysis of the function \(f_\beta\), without discussing trivial constant assignments of auxiliary elements.
	
	\begin{thm}[Continuity of \(f_\beta\)]\label{continuity}
		For every \(\beta\in \mathbb{U}\), the function \(f_\beta\) is continuous on its domain \([0,1]^{m_\beta}\).
	\end{thm}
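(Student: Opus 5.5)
The plan is to show that $f_\beta$ is a finite composition of continuous maps, using only that $N^*$ and $\otimes$ are continuous (Definition~\ref{CFOE}). First, each coordinate projection $\mathbf{x}\mapsto\|y\|$ from $[0,1]^{m_\beta}$ to $[0,1]$ is continuous. Second, a finite iteration $\bigotimes_{i=1}^n z_i$ is continuous as a map $[0,1]^n\to[0,1]$. This follows by induction on $n$. The case $n=1$ is the identity. For the inductive step, write $\bigotimes_{i=1}^{n+1}z_i=\bigl(\bigotimes_{i=1}^n z_i\bigr)\otimes z_{n+1}$ and compose the continuous binary map $\otimes$ with the continuous pair $\bigl(\bigotimes_{i\le n}z_i,\,z_{n+1}\bigr)$. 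Associativity of the t-norm makes the grouping irrelevant. Empty iterations are the neutral element $1$, i.e.\ constants, hence continuous.

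Next I would build $f_\beta$ from the inside out. For each attack $\mathbf{k}_S^\beta\in\mathcal{K}_\beta$, the inner term $\|\mathbf{k}_S^\beta\|\otimes\bigotimes_{s\in S}\|s\|$ is a finite t-norm iteration of projections, so it is continuous. Applying $N^*$ keeps it continuous, and taking $\bigotimes$ over the finitely many attacks shows that the attack part $A$ is continuous. For the support part, each term $\|\mathbf{t}_T^\beta\|\otimes\bigotimes_{t\in T}\|t\|$ is continuous in the same way. Applying $N^*$, then the finite $\bigotimes$ over $\mathcal{T}_\beta$, then $N^*$ again shows that $B$ is continuous. Finally $f_\beta=A\otimes B$ is the composition of $\otimes$ with the continuous map $\mathbf{x}\mapsto(A(\mathbf{x}),B(\mathbf{x}))$, which is continuous into $[0,1]^2$ because each component is. Finiteness of $\mathbf{U}$, and hence of $\mathcal{K}_\beta$, $\mathcal{T}_\beta$ and every source set, guarantees that all iterations are finite.

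There is no real obstacle here. The one point needing care is bookkeeping about repeated variables. The same element may occur in several source sets, or as both a source and a name, so the variables of $\mathbf{x}$ are shared across subterms. This is harmless, because each subterm is a continuous function of the full vector $\mathbf{x}$ via projections, and a product map of continuous functions into $[0,1]^k$ is continuous. Alternatively, one can first view $f_\beta$ as a function on a larger product with one coordinate per occurrence and precompose with the continuous diagonal map. For the empty cases, $\mathcal{K}_\beta$ or $\mathcal{T}_\beta$ being empty gives a constant $A=1$, or $B=N^*(1)$, and constants are continuous. By Definition~\ref{EHSAF} every $\beta\in\mathbb{U}$ has at least one attack and one support anyway.
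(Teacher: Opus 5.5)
Your proposal is correct and follows essentially the same route as the paper, which simply notes that $f_\beta$ is a finite composition and iteration of the continuous maps $N^*$ and $\otimes$. You additionally spell out the induction for iterated t-norms, the handling of shared variables, and the empty-iteration cases, all of which the paper leaves implicit.
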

	\begin{proof}
		The continuity of \(N^*\) and \(\otimes\) is assumed. Since \(f_\beta\) is obtained by finite compositions and finite iterations of these continuous operations, it is continuous on the entire domain.
	\end{proof}
		
	\begin{thm}[Commutativity of \(f_\beta\)]
		The value of \(f_\beta\) is invariant under:
		\begin{enumerate}
			\item any permutation of the variables inside each source set \(S_i\) or \(T_j\);
			\item any permutation of the attack groups \(\{\mathbf{k}_{S_i}^{\beta}\}_{i=1}^{n_a}\);
			\item any permutation of the support groups \(\{\mathbf{t}_{T_j}^{\beta}\}_{j=1}^{n_s}\).
		\end{enumerate}
	\end{thm}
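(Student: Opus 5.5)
The whole argument rests on two algebraic facts about the continuous t-norm $\otimes$: it is commutative and associative. I would first record the standard consequence. For any finite family $x_1,\dots,x_n\in[0,1]$ and any permutation $\sigma$ of $\{1,\dots,n\}$,
\[
\bigotimes_{i=1}^{n} x_i=\bigotimes_{i=1}^{n} x_{\sigma(i)}.
\]
This follows by induction on $n$. The cases $n\le 2$ are trivial or are commutativity itself. For the inductive step, write $\sigma$ as a product of adjacent transpositions. Associativity lets us regroup so that the two swapped entries stand together as $(x_j\otimes x_{j+1})$, and commutativity then swaps them. The upshot is that the finite iteration $\bigotimes$ is well defined on finite multisets, so it does not depend on the order in which its arguments are listed.

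Next I would prove the three items in turn, using this lemma and the fact that $N^*$ is a function, so equal inputs give equal outputs.

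\textbf{Item 1.} Permuting the variables inside a source set $S_i$ leaves $\bigotimes_{s\in S_i}\|s\|$ unchanged, by the lemma. Hence $\|\mathbf{k}_{S_i}^{\beta}\|\otimes\bigotimes_{s\in S_i}\|s\|$ is unchanged, and so is its image under $N^*$. The same reasoning applies to each $T_j$ inside the support part.

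\textbf{Item 2.} Permuting the attack groups only reorders the finite family $\bigl(N^*(\|\mathbf{k}_{S_i}^{\beta}\|\otimes\bigotimes_{s\in S_i}\|s\|)\bigr)_{i=1}^{n_a}$. By the lemma its $\otimes$-iteration, the attack part $A$, is unchanged.

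\textbf{Item 3.} Permuting the support groups reorders the inner family of values $N^*(\cdots)$. Their $\otimes$-iteration is therefore invariant, and applying $N^*$ once more leaves $B$ invariant.

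Since $f_\beta=A\otimes B$ and neither factor changes, $f_\beta$ is invariant in each case. A combined permutation of all three kinds also preserves $f_\beta$, since it is a composition of the individual ones.

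The only point needing care is the general finite-permutation invariance of the iterated t-norm. Everything else is substitution of equals. I would also note the degenerate cases. When $\mathcal{K}_\beta$ or $\mathcal{T}_\beta$ is empty, the empty $\otimes$ is conventionally $1$, and permutation invariance holds trivially. By Definition~\ref{EHSAF}, however, every $\beta\in\mathbb{U}$ has at least one attack and one support, so these cases do not actually occur.
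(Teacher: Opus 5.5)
Your proposal is correct and follows essentially the same route as the paper: both reduce all three items to the permutation-invariance of finite $\otimes$-iterations, which comes from commutativity and associativity of the t-norm, and then carry this through $N^*$ and the outer $\otimes$. You additionally spell out the generalized-commutativity lemma and the empty-family remark, both of which the paper leaves implicit.
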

	\begin{proof}
		This follows directly from the commutativity and associativity of the t-norm \(\otimes\). Finite t-norm conjunctions are invariant under permutation of their arguments. Hence each inner conjunction, each attack term, each support term, and the outer aggregation are all permutation-invariant.
	\end{proof}
		
	\begin{thm}[Monotonicity of \(f_\beta\)]
		The function \(f_\beta\) is monotone with respect to the truth value of each element in \(\mathbb{U}\), with the following direction:
		\begin{itemize}
			\item \emph{Non-increasing} with respect to every element that appears in the source set of an attack against \(\beta\), or is an attack name attacking \(\beta\);
			\item \emph{Non-decreasing} with respect to every element that appears in the source set of a support of \(\beta\), or is a support name supporting \(\beta\).
		\end{itemize}
	\end{thm}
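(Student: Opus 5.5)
The plan is to reduce the claim to two elementary facts and then push them through the nested structure of $f_\beta$ layer by layer. The two facts are: (i) a t-norm $\otimes$ is non-decreasing in each argument, and hence so is every finite iteration $\bigotimes$; (ii) a negation $N^*$ is non-increasing, which is part of the standard definition of a (continuous) negation. I will treat $f_\beta$ as a function of the vector $\mathbf{x}\in[0,1]^{m_\beta}$, as in the definition preceding Theorem~\ref{continuity}. Each occurrence of an element, as an attack name, an attack source, a support name or a support source, is then its own coordinate, since $m_\beta$ counts these occurrences separately. Monotonicity is proved coordinatewise: fix all coordinates but one and compare two values $u\le u'$ of the free coordinate.

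\emph{Attack part $A$.} Fix an attack $\mathbf{k}_S^\beta\in\mathcal{K}_\beta$. By (i), the inner term $\|\mathbf{k}_S^\beta\|\otimes\bigotimes_{s\in S}\|s\|$ is non-decreasing in $\|\mathbf{k}_S^\beta\|$ and in every $\|s\|$ with $s\in S$. By (ii), $N^*$ of this term is non-increasing in these coordinates. The outer iteration $\bigotimes_{\mathbf{k}_S^\beta\in\mathcal{K}_\beta}$ preserves order by (i). Each coordinate occurs in exactly one attack term and the others are fixed, so $A$ is non-increasing in every attack-related coordinate. It is constant in the support-related coordinates, since they do not occur in $A$.

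\emph{Support part $B$.} The same reasoning makes each inner term $\|\mathbf{t}_T^\beta\|\otimes\bigotimes_{t\in T}\|t\|$ non-decreasing, and its negation $N^*(\cdot)$ non-increasing. The iterated $\bigotimes$ over $\mathcal{T}_\beta$ keeps this non-increasing, and the outer $N^*$ reverses the order once more. Hence $B$ is non-decreasing in every support-related coordinate and constant in the attack-related ones. Finally $f_\beta=A\otimes B$, and by (i) the product inherits the direction of whichever factor depends on the coordinate, since the other factor is constant in it. This gives both bullets of the theorem.

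The edge cases are routine. An empty $\mathcal{K}_\beta$ or $\mathcal{T}_\beta$ yields an empty iteration equal to the constant $1$, and every $\beta\in\mathbb{U}$ has at least one attack and one support by the auxiliary construction of Definition~\ref{EHSAF}. The point I expect to need the most care is interpretive rather than computational. The same element of $\mathbb{U}$ may occur in several positions, for instance in the source of an attack and of a support on $\beta$, or twice in one source. As a function of that element's single truth value, $f_\beta$ could then fail to be monotone, because the two directions conflict. I would therefore state explicitly that the theorem is read occurrence-wise, i.e.\ in the coordinates of $[0,1]^{m_\beta}$. I would add the remark that whenever all occurrences of an element lie on the same side, attack or support, the monotonicity also holds in the element's value itself, since a composition of same-direction monotone maps keeps that direction.
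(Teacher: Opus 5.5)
Your proposal is correct and follows essentially the paper's proof: monotonicity of (iterated) t-norms and antitonicity of $N^*$ are pushed layer by layer through $A$, then through the doubly negated $B$, and finally through $f_\beta=A\otimes B$. Your occurrence-wise reading is a legitimate sharpening that the paper leaves implicit. Its final step silently uses that $B$ is constant in attack-related variables and $A$ in support-related ones, which fails when an element lies in the sources of both an attack and a support on $\beta$ (under G\"{o}del, $f_\beta$ can then reduce to $\min\{1-\|a\|,\|a\|\}$); your example of an element occurring twice in one source cannot arise, however, since sources are sets.
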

	
	\begin{proof}
		We analyze the attack part \(A\) and the support part \(B\) separately, based on the monotonicity of t-norms and the antitonicity of negation \(N^*\).
		
		\medskip
		\noindent\textbf{Attack part:}
		Recall \(A = \bigotimes_{\mathbf{k}_{S}^{\beta}\in\mathcal{K}_\beta} N^*\bigl(\|\mathbf{k}_{S}^{\beta}\| \otimes \bigotimes_{s\in S} \|s\|\bigr)\).
		Take any attack-related element \(u\): \(u\) is either an attack name \(\mathbf{k}_{S}^{\beta}\) or an element \(s\in S\) in the source of some attack against \(\beta\).
		
		The inner conjunction \(\|\mathbf{k}_{S}^{\beta}\| \otimes \bigotimes_{s\in S} \|s\|\) is non-decreasing in \(\|u\|\), since t-norms are monotone non-decreasing in each factor.
		As \(N^*\) is a negation (antitone), \(N^*\bigl(\|\mathbf{k}_{S}^{\beta}\| \otimes \bigotimes_{s\in S} \|s\|\bigr)\) is non-increasing in \(\|u\|\).
		The outer t-norm is non-decreasing in each of its factors, so the whole attack part \(A\) is non-increasing in \(\|u\|\).
		
		\medskip
		\noindent\textbf{Support part:}
		Recall \(B = N^*\bigl(\bigotimes_{\mathbf{t}_{T}^{\beta}\in\mathcal{T}_\beta} N^*\bigl(\|\mathbf{t}_{T}^{\beta}\| \otimes \bigotimes_{t\in T} \|t\|\bigr)\bigr)\).
		Take any support-related element \(v\): \(v\) is either a support name \(\mathbf{t}_{T}^{\beta}\) or an element \(t\in T\) in the source of some support for \(\beta\).
		
		The inner conjunction \(\|\mathbf{t}_{T}^{\beta}\| \otimes \bigotimes_{t\in T} \|t\|\) is non-decreasing in \(\|v\|\).
		Applying the inner negation \(N^*\), the term \(N^*\bigl(\|\mathbf{t}_{T}^{\beta}\| \otimes \bigotimes_{t\in T} \|t\|\bigr)\) becomes non-increasing in \(\|v\|\).
		The inner t-norm is non-decreasing in each factor, so the product \(P = \bigotimes_{\mathbf{t}_{T}^{\beta}\in\mathcal{T}_\beta} N^*(\cdots)\) is non-increasing in \(\|v\|\).
		Applying the outer negation \(N^*\) reverses the order again, hence \(B = N^*(P)\) is non-decreasing in \(\|v\|\).
		
		\medskip
		Finally, the outer t-norm \(f_\beta = A \otimes B\) is non-decreasing in both \(A\) and \(B\).
		Therefore:
		\begin{itemize}
			\item for any attack-related element \(u\), \(f_\beta\) is non-increasing in \(\|u\|\);
			\item for any support-related element \(v\), \(f_\beta\) is non-decreasing in \(\|v\|\).
		\end{itemize}
		This completes the proof.
	\end{proof}
	\begin{rmk}
		The monotonicity pattern reflects the intuitive roles: attacks decrease the acceptability of \(\beta\), while evidential supports increase it. 
	\end{rmk}
We define the following conditions for a given $\beta\in \mathbb{U}$:
	\begin{itemize}
		\item[(A)] \emph{All attacks are ineffective:} for every $\mathbf{k}_{S}^{\beta}\in\mathcal{K}_{\beta}$, either $\|\mathbf{k}_{S}^{\beta}\|=0$ or $\exists s\in S$ with $\|s\|=0$.
		\item[(E)] \emph{At least one support is effective:} $\exists \mathbf{t}_{T}^{\beta}\in\mathcal{T}_{\beta}$ such that $\|\mathbf{t}_{T}^{\beta}\|=1$ and $\forall t\in T$, $\|t\|=1$.
		\item[(B)] \emph{There exists an effective attack:} $\exists \mathbf{k}_{S}^{\beta}\in\mathcal{K}_{\beta}$ such that $\|\mathbf{k}_{S}^{\beta}\|=1$ and $\forall s\in S$, $\|s\|=1$.
		\item[(I)] \emph{All supports are ineffective:} for every $\mathbf{t}_{T}^{\beta}\in\mathcal{T}_{\beta}$, either $\|\mathbf{t}_{T}^{\beta}\|=0$ or $\exists t\in T$ with $\|t\|=0$.
	\end{itemize}
	\begin{thm}[Boundary Conditions of \(f_\beta\)]
	For every \(\beta\in \mathbb{U}\), the following two extremal cases hold:
	
	\begin{enumerate}
		\item \emph{Rejection boundary:} If $(B)$ or $(I)$, then $f_\beta = 0$.
		
		\item \emph{Acceptance boundary:} If $(A)$ and $(E)$, then $f_\beta = 1$.
	\end{enumerate}
\end{thm}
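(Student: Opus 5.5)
The plan is a direct case analysis on the two factors $A$ and $B$ of $f_\beta = A\otimes B$. It uses only three algebraic facts about the operators fixed in Definition~\ref{CFOE}. First, every t-norm has $1$ as neutral element and $0$ as annihilator, so $x\otimes 1 = x$ and $x\otimes 0 = 0$; by associativity the same holds for finite iterations $\bigotimes$. Second, a negation $N^*$ satisfies the boundary conditions $N^*(0)=1$ and $N^*(1)=0$. Third, by the construction in Definition~\ref{EHSAF}, every $\beta\in\mathbb{U}$ has at least one attack and at least one support. So $\mathcal{K}_\beta$ and $\mathcal{T}_\beta$ are nonempty (via the auxiliary $\{\bot\}$-attack, the $\{\bot\}$-support, or the $\{\top\}$-support), and no empty-iteration conventions are needed.

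\emph{Rejection boundary.} Assume (B): some $\mathbf{k}_S^\beta$ has $\|\mathbf{k}_S^\beta\|=1$ and $\|s\|=1$ for all $s\in S$. Then the inner conjunction equals $1$ by neutrality, so its negation is $N^*(1)=0$. This $0$ is one factor of the iterated t-norm $A$, so $A=0$ by annihilation, and hence $f_\beta = 0\otimes B = 0$. Assume instead (I): every support term $\|\mathbf{t}_T^\beta\|\otimes\bigotimes_{t\in T}\|t\|$ contains a $0$ factor, so each equals $0$. Each inner negation then gives $N^*(0)=1$, the iterated t-norm over $\mathcal{T}_\beta$ equals $1$ by neutrality, and $B = N^*(1)=0$. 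Therefore $f_\beta = A\otimes 0 = 0$.

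\emph{Acceptance boundary.} Assume (A) and (E). By (A), every attack term contains a $0$ factor and so equals $0$. Each negated term is then $1$, and hence $A=1$. By (E), some support term equals $1$, so its negation is $0$. The iterated t-norm over $\mathcal{T}_\beta$ therefore contains a $0$ factor and is $0$, which gives $B=N^*(0)=1$. Hence $f_\beta = 1\otimes 1 = 1$.

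The only step needing care is the boundary behaviour of the continuous negation $N^*$. The definition says only ``continuous negation'', so I will state explicitly that a negation is an antitone map with $N^*(0)=1$ and $N^*(1)=0$; this is also what the monotonicity theorem above implicitly used. A second point to check is the nonemptiness of $\mathcal{K}_\beta$ and $\mathcal{T}_\beta$. It follows from Definition~\ref{EHSAF}. Even without it, the standard convention that an empty $\bigotimes$ equals $1$ would make (A) and (I) hold vacuously and still give the stated values.
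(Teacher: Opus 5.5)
Your proposal is correct and follows the paper's proof essentially step for step. Like the paper, you split $f_\beta = A\otimes B$ into its attack and support factors and use neutrality of $1$, annihilation by $0$, and $N^*(0)=1$, $N^*(1)=0$; your extra remarks are accurate and make explicit points the paper leaves implicit, namely that $\mathcal{K}_\beta$ and $\mathcal{T}_\beta$ are nonempty because of the auxiliary attacks and supports of Definition~\ref{EHSAF}, and that the empty-iteration convention would be consistent with the statement anyway.
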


\begin{proof}
	We prove case (1) in two subcases:
	
	\noindent\textbf{Subcase 1.1 (There exists an effective attack):}
	Let \(\mathbf{k}_{S}^{\beta}\in\mathcal{K}_\beta\) be an effective attack, i.e., \(\|\mathbf{k}_{S}^{\beta}\|=1\) and \(\|s\|=1\) for all \(s\in S\).
	Then \(\|\mathbf{k}_{S}^{\beta}\| \otimes \bigotimes_{s\in S} \|s\| = 1\).
	Since \(N^*\) is a negation, \(N^*(1)=0\).
	Hence the corresponding factor in \(A\) equals \(0\).
	By the annihilation property of t-norms (\(0\otimes a = 0\) for all \(a\in[0,1]\)), the attack part \(A = 0\).
	Thus \(f_\beta = 0 \otimes B = 0\).
	
	\noindent\textbf{Subcase 1.2 (All supports are ineffective):}
	For every \(\mathbf{t}_{T}^{\beta}\in\mathcal{T}_\beta\), since it is ineffective, either \(\|\mathbf{t}_{T}^{\beta}\|=0\) or there exists \(t\in T\) with \(\|t\|=0\).
	In either case, \(\|\mathbf{t}_{T}^{\beta}\| \otimes \bigotimes_{t\in T} \|t\| = 0\), so \(N^*\!\left(\|\mathbf{t}_{T}^{\beta}\| \otimes \bigotimes_{t\in T} \|t\|\right) = N^*(0) = 1\).
	Therefore the inner t-norm \(\bigotimes_{\mathbf{t}_{T}^{\beta}\in\mathcal{T}_\beta} N^*(\cdots) = \bigotimes 1 = 1\).
	Applying the outer negation yields \(B = N^*(1) = 0\).
	Thus \(f_\beta = A \otimes 0 = 0\).
	
	\medskip
	Now we prove case (2):
	
	For every attack \(\mathbf{k}_{S}^{\beta}\in\mathcal{K}_\beta\), since it is ineffective, either \(\|\mathbf{k}_{S}^{\beta}\|=0\) or there exists \(s\in S\) with \(\|s\|=0\).
	Hence \(\|\mathbf{k}_{S}^{\beta}\| \otimes \bigotimes_{s\in S} \|s\| = 0\), and \(N^*(0)=1\).
	It follows that every factor in \(A\) equals \(1\), so \(A = \bigotimes 1 = 1\).
	
	For the support part, let \(\mathbf{t}_{T}^{\beta}\in\mathcal{T}_\beta\) be an effective support, i.e., \(\|\mathbf{t}_{T}^{\beta}\|=1\) and \(\|t\|=1\) for all \(t\in T\).
	Then \(\|\mathbf{t}_{T}^{\beta}\| \otimes \bigotimes_{t\in T} \|t\| = 1\), so \(N^*(1)=0\).
	The inner t-norm \(\bigotimes_{\mathbf{t}_{T}^{\beta}\in\mathcal{T}_\beta} N^*(\cdots)\) contains at least one factor \(0\), thus the whole inner product equals \(0\).
	Consequently, \(B = N^*(0) = 1\).
	
	Combining the two parts, \(f_\beta = A \otimes B = 1\otimes 1 = 1\).
\end{proof}

These two boundary conditions are analogous to the acceptance and rejection conditions for adjacent complete labelling semantics under single-direction evaluation over the extremal truth values 0 and 1. For example, an element is accepted if all attacks are blocked and at least one support is active.

If the t‑norm $\otimes$ employed in the function $f_{\beta}$ is zero‑divisor‑free, then the converse implications of the boundary properties hold as well.
\begin{thm}[CFOE Equation Trifurcation]\label{thm:trifurcation}
	Assume that the t-norm $\otimes$ in the function $f_{\beta}$ is zero‑divisor‑free (i.e., $x\otimes y=0 \Rightarrow x=0$ or $y=0$). For every $\beta\in \mathbb{U}$, the following equivalences hold:
	\begin{enumerate}
		\item $f_{\beta} = 1 \;\Longleftrightarrow\; (A) \text{ and } (E)$.
		\item $f_{\beta} = 0 \;\Longleftrightarrow\; (B) \text{ or } (I)$.
		\item $0 < f_{\beta} < 1 \;\Longleftrightarrow\; \text{neither of the above holds}$.
	\end{enumerate}
\end{thm}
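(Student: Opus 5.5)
The plan is to reduce each equivalence to elementary facts about finite iterations of $\otimes$ and about $N^*$, applied separately to the attack part $A$ and the support part $B$ of $f_\beta=A\otimes B$. The $(\Leftarrow)$ directions of items 1 and 2 are exactly the Boundary Conditions theorem proved above. So only the $(\Rightarrow)$ directions need new work, and item 3 then follows by complementation.

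\textbf{Step 1: iterated t-norms.} I would first prove two facts for a finite iteration $\bigotimes_{i=1}^n x_i$ with $n\ge 1$. The first is that $\bigotimes_i x_i=1$ iff every $x_i=1$. This holds for any t-norm, because $x\otimes y\le\min\{x,y\}$ by monotonicity and the neutral element $1$. The second is that, when $\otimes$ is zero-divisor-free, $\bigotimes_i x_i=0$ iff some $x_i=0$. This follows by induction on $n$ from $x\otimes y=0\Rightarrow x=0$ or $y=0$, together with the annihilator $0$. For an empty iteration, the convention is that the value is $1$. This matches the vacuous truth of (A) when $\mathcal{K}_\beta=\varnothing$. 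Note also that $\mathcal{T}_\beta\neq\varnothing$ for every $\beta\in\mathbb{U}$ by Definition~\ref{EHSAF}.

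\textbf{Step 2: the negation.} I need $N^*(x)=1\Leftrightarrow x=0$ and $N^*(x)=0\Leftrightarrow x=1$. The $(\Leftarrow)$ parts are the boundary conditions of a negation. The $(\Rightarrow)$ parts need $N^*$ to be strict, i.e.\ strictly decreasing. For a merely continuous, non-strict negation, $N^*$ could vanish on an interval $[c,1]$, and the equivalences, hence the theorem, would fail. I therefore expect this to be the main obstacle. I would handle it by making explicit that $N^*$ is taken strict, as it is for the standard negation $1-x$ used in $\mathcal{PL}_{[0,1]}^G$, $\mathcal{PL}_{[0,1]}^P$ and $\mathcal{PL}_{[0,1]}^L$. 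I would state this as a standing hypothesis rather than silently assume it.

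\textbf{Step 3: assembling items 1 and 2.} Write $c_k=\|\mathbf{k}_S^\beta\|\otimes\bigotimes_{s\in S}\|s\|$ and $d_t=\|\mathbf{t}_T^\beta\|\otimes\bigotimes_{r\in T}\|r\|$. For the attack part:
\begin{itemize}
\item $A=1$ iff every $N^*(c_k)=1$ iff every $c_k=0$ iff every attack has a zero factor, which is (A).
\item $A=0$ iff some $N^*(c_k)=0$ iff some $c_k=1$, which is (B).
\end{itemize}
For the support part:
\begin{itemize}
\item $B=1$ iff $\bigotimes_t N^*(d_t)=0$ iff some $d_t=1$, which is (E).
\item $B=0$ iff all $N^*(d_t)=1$ iff all $d_t=0$, which is (I).
\end{itemize}
Applying Step 1 to the two-factor product $f_\beta=A\otimes B$ then gives $f_\beta=1$ iff $A=B=1$ iff (A) and (E). It also gives $f_\beta=0$ iff $A=0$ or $B=0$ iff (B) or (I).

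\textbf{Step 4: item 3.} Since $f_\beta\in[0,1]$, we have $0<f_\beta<1$ iff $f_\beta\neq1$ and $f_\beta\neq0$. By items 1 and 2, this holds iff neither ``(A) and (E)'' nor ``(B) or (I)'' holds. This mirrors the $\frac12$ case in Theorem~\ref{eqofeqandadcom}.
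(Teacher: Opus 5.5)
Your proof is correct and follows essentially the paper's route: decompose $f_\beta=A_\beta\otimes B_\beta$, use that a finite $\otimes$-iteration is $1$ iff every factor is $1$ and, by zero-divisor-freeness, $0$ iff some factor is $0$, pass through $N^*$ for each part, and get item 3 by complementation. Your Step 2 sharpens the paper's argument rather than departing from it: the paper records only $N^*(0)=1$ and $N^*(1)=0$, yet it silently uses $N^*(x)=1\Leftrightarrow x=0$ and $N^*(x)=0\Leftrightarrow x=1$, which fail for continuous negations such as $\max\{0,1-2x\}$ (and for that negation the theorem is false); full strictness is more than you need, since it suffices that $N^*$ takes the value $1$ only at $0$ and the value $0$ only at $1$.
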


\begin{proof}
	Write $f_{\beta} = A_{\beta} \otimes B_{\beta}$, where
	\[
	A_{\beta} = \bigotimes_{\mathbf{k}_{S}^{\beta}\in \mathcal{K}_{\beta}} 
	N^{*}\!\left(\|\mathbf{k}_{S}^{\beta}\| \otimes \bigotimes_{s\in S} \|s\|\right),
	\quad
	B_{\beta} = N^{*}\!\left(
	\bigotimes_{\mathbf{t}_{T}^{\beta}\in \mathcal{T}_{\beta}} 
	N^{*}\!\left(\|\mathbf{t}_{T}^{\beta}\| \otimes \bigotimes_{t\in T} \|t\|\right)
	\right).
	\]
	
	We use the facts: $N^{*}(0)=1$, $N^{*}(1)=0$, and $\otimes$ is monotone, associative, commutative, has identity 1 and zero 0, and is zero‑divisor‑free.
	
	\paragraph{1. $f_{\beta}=1$ iff (A) and (E).}
	Since $1\otimes x = x$ and $x\otimes y=1$ implies $x=y=1$ (as $x,y\le 1$), we have $f_{\beta}=1$ iff $A_{\beta}=B_{\beta}=1$.
	
	Now $A_{\beta}=1$ iff every factor in the product is 1, i.e., for every attack,
	$N^{*}(\|\mathbf{k}_{S}^{\beta}\| \otimes \bigotimes_{s\in S}\|s\|) = 1$, which is equivalent to $\|\mathbf{k}_{S}^{\beta}\| \otimes \bigotimes_{s\in S}\|s\| = 0$. By zero‑divisor‑freeness, this is equivalent to either $\|\mathbf{k}_{S}^{\beta}\|=0$ or some $\|s\|=0$, i.e., condition (A).
	
	$B_{\beta}=1$ iff $N^{*}(\bigotimes_{\mathbf{t}_{T}^{\beta}} N^{*}(\|\mathbf{t}_{T}^{\beta}\| \otimes \bigotimes_{t\in T}\|t\|)) = 1$, i.e., the inner product equals 0. By zero‑divisor‑freeness, the product of numbers in $[0,1]$ is 0 iff at least one factor is 0. Thus $B_{\beta}=1$ iff there exists a support $\mathbf{t}_{T}^{\beta}$ such that $N^{*}(\|\mathbf{t}_{T}^{\beta}\| \otimes \bigotimes_{t\in T}\|t\|) = 0$, i.e., $\|\mathbf{t}_{T}^{\beta}\| \otimes \bigotimes_{t\in T}\|t\| = 1$, which is equivalent to $\|\mathbf{t}_{T}^{\beta}\|=1$ and $\|t\|=1$ for all $t\in T$, i.e., condition (E). Hence $f_{\beta}=1$ iff (A) and (E).
	
	\paragraph{2. $f_{\beta}=0$ iff (B) or (I).}
	Since $x\otimes y=0$ iff $x=0$ or $y=0$ (zero‑divisor‑free), $f_{\beta}=0$ iff either $A_{\beta}=0$ or $B_{\beta}=0$.
	
	$A_{\beta}=0$ iff there exists an attack with $N^{*}(\|\mathbf{k}_{S}^{\beta}\| \otimes \bigotimes_{s\in S}\|s\|) = 0$, i.e., $\|\mathbf{k}_{S}^{\beta}\| \otimes \bigotimes_{s\in S}\|s\| = 1$, which is equivalent to $\|\mathbf{k}_{S}^{\beta}\|=1$ and $\|s\|=1$ for all $s$, i.e., condition (B).
	
	$B_{\beta}=0$ iff $N^{*}(\bigotimes_{\mathbf{t}_{T}^{\beta}} N^{*}(\|\mathbf{t}_{T}^{\beta}\| \otimes \bigotimes_{t\in T}\|t\|)) = 0$, i.e., the inner product equals 1. This is equivalent to every factor in the product to be 1, i.e., for every support, $N^{*}(\|\mathbf{t}_{T}^{\beta}\| \otimes \bigotimes_{t\in T}\|t\|) = 1$, equivalently $\|\mathbf{t}_{T}^{\beta}\| \otimes \bigotimes_{t\in T}\|t\| = 0$, which by zero‑divisor‑freeness is equivalent to either $\|\mathbf{t}_{T}^{\beta}\|=0$ or some $\|t\|=0$. This is condition (I). Hence $f_{\beta}=0$ iff (B) or (I).
	
	\paragraph{3. The remaining case.}
	The values of $f_{\beta}$ lie in $[0,1]$. The above two points show exactly when $f=0$ and when $f=1$. Therefore, if neither holds, we have $0<f<1$. Conversely, if $0<f<1$, then neither $f=0$ nor $f=1$ holds, so the conditions for 0 and 1 are false. This completes the proof.
\end{proof}

	\begin{thm}[Existence of Solutions]
		For every EHSAF \(\mathcal{F}=(\mathbf{A},\mathbf{R}_{a},\mathbf{R}_{e},\mathbf{U},\mathbf{P})\), the CFOE system \(eq_{[0,1]}^{*,\otimes}\) has at least one solution. Consequently, the CFNE semantic model set \(\mathfrak{L}\mathfrak{S}_{ec_n}^{\mathcal{PL}_{[0,1]}^{*,\otimes}}(\mathcal{F})\) is non-empty for every \(\mathcal{F}\).
	\end{thm}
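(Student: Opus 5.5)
We will prove existence by Brouwer's fixed point theorem applied to the operator induced by the CFOE equations. Enumerate the non-auxiliary elements $\mathbb{U}=\{\beta_1,\dots,\beta_n\}$. Every auxiliary element ($\bot$, $\top$, and the added attacks and supports in $(\mathbf{R_a}\cup\mathbf{R_e})\setminus(\mathbb{R}_a\cup\mathbb{R}_e)$) is fixed to the constant value $0$ or $1$ required by Definition~\ref{CFOE}. Define $\Psi:[0,1]^n\to[0,1]^n$ by
\[
\Psi(x_1,\dots,x_n)=\bigl(f_{\beta_1}(\mathbf{x}),\dots,f_{\beta_n}(\mathbf{x})\bigr).
\]
Here each $f_{\beta_i}$ is evaluated on the assignment that gives $\beta_j$ the value $x_j$ and gives auxiliary elements their fixed constants. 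Since $N^*$ and $\otimes$ take values in $[0,1]$, $\Psi$ indeed maps the cube into itself.

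The key steps are as follows. First, note that each $f_{\beta_i}$ is well defined even when $\mathcal{K}_{\beta_i}$ or $\mathcal{T}_{\beta_i}$ is empty. By the construction in Definition~\ref{EHSAF}, every $\beta\in\mathbb{U}$ has at least one attack (possibly the auxiliary one from $\{\bot\}$) and at least one support (from $\{\bot\}$ or $\{\top\}$), so both iterated t-norms are over nonempty index sets. Second, invoke Theorem~\ref{continuity}: each $f_{\beta_i}$ is continuous in its arguments. Composing with the continuous map that substitutes the fixed constants, we get that $\Psi$ is continuous on $[0,1]^n$. Third, $[0,1]^n$ is nonempty, compact and convex, so Brouwer's theorem gives $\mathbf{x}^*$ with $\Psi(\mathbf{x}^*)=\mathbf{x}^*$.

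The resulting assignment (with $\mathbf{x}^*$ on $\mathbb{U}$ and the constants on auxiliary elements) satisfies Equation~\eqref{star2} for every $\beta\in\mathbb{U}$ together with the auxiliary conditions. It is therefore a solution of $eq_{[0,1]}^{*,\otimes}$. By Theorem~\ref{fuzzyequivalence}, this solution lies in $\mathfrak{L}\mathfrak{S}_{ec_n}^{\mathcal{PL}_{[0,1]}^{*,\otimes}}(\mathcal{F})$, so that set is nonempty.

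The only delicate point is bookkeeping rather than analysis. We must make sure that the auxiliary elements are excluded from the fixed-point variables, so that their prescribed values are not overwritten, and that the empty-product convention does not arise. If one prefers to allow empty $\mathcal{K}_\beta$ or $\mathcal{T}_\beta$, the standard conventions (empty t-norm $=1$, and thus empty support disjunction $=N^*(1)=0$) keep $f_\beta$ continuous, and the argument goes through unchanged.
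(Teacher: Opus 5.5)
Your proposal is correct and follows the paper's proof: both apply Brouwer's fixed-point theorem to the continuous self-map $(f_{\beta_1},\dots,f_{\beta_n})$ of $[0,1]^n$, with continuity from Theorem~\ref{continuity}, and then transfer the fixed point to the CFNE model set via Theorem~\ref{fuzzyequivalence}. Your explicit treatment of the auxiliary constants and of the nonempty index sets is accurate bookkeeping that the paper leaves implicit.
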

	\begin{proof}
		Let \(n = |\mathbb{U}|\). Define the vector-valued function
		\[
		F:\ [0,1]^n \longrightarrow [0,1]^n,\qquad
		F(\mathbf{x}) = (f_{\beta_1}(\mathbf{x}), \ldots, f_{\beta_n}(\mathbf{x})),
		\]
		where \(\{\beta_1,\ldots,\beta_n\}=\mathbb{U}\). By Theorem~\ref{continuity}, every \(f_\beta\) is continuous, hence \(F\) is continuous. The domain \([0,1]^n\) is non-empty, compact and convex. Moreover, since each \(f_\beta\) maps into \([0,1]\), \(F\) is a self-map of \([0,1]^n\). By Brouwer's Fixed-Point Theorem, there exists \(\mathbf{x}^*\in[0,1]^n\) such that \(F(\mathbf{x}^*)=\mathbf{x}^*\). This fixed point is precisely a solution of the CFOE system \(eq_{[0,1]}^{*,\otimes}\). By Theorem~\ref{fuzzyequivalence}, this fixed point is also a model of the CFNE semantics.
	\end{proof}
	
	\begin{cor}
		For any EHSAF, each of the fuzzy encoded semantics \(\mathfrak{L}\mathfrak{S}_{ec_n}^{\mathcal{PL}_{[0,1]}^{G}}\), \(\mathfrak{L}\mathfrak{S}_{ec_n}^{\mathcal{PL}_{[0,1]}^{P}}\), and \(\mathfrak{L}\mathfrak{S}_{ec_n}^{\mathcal{PL}_{[0,1]}^{L}}\) has at least one model.
	\end{cor}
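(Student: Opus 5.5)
The corollary is an instantiation of the preceding Existence of Solutions theorem, which was proved for an arbitrary fuzzy system $\mathcal{PL}_{[0,1]}^{*,\otimes}$ with a continuous negation $N^*$ and a continuous t-norm $\otimes$. The plan is therefore to check, for each of the three logics $\mathcal{PL}_{[0,1]}^{G}$, $\mathcal{PL}_{[0,1]}^{P}$ and $\mathcal{PL}_{[0,1]}^{L}$, that its connectives meet these hypotheses.

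First, the negation. All three logics use the standard negation $N(m)=1-m$. This map is continuous, antitone, and satisfies $N(0)=1$ and $N(1)=0$, so it is an admissible $N^*$.

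Second, the t-norms. $T_G=\min$, $T_P(m,n)=mn$ and $T_L(m,n)=\max\{0,m+n-1\}$ are each built from continuous functions of $(m,n)$, so each is a continuous t-norm. Their residuated implications $I_G$, $I_P$, $I_L$ are the ones fixed in the preliminaries, so each logic is a system $\mathcal{PL}_{[0,1]}^{*,\otimes}$ as in Definition~\ref{CFNE}.

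Third, the interpretation of $\vee$ in the encoding. In these logics disjunction is defined by De Morgan, $a\vee b=\neg(\neg a\wedge\neg b)$, which is exactly the t-conorm used in Equation~\eqref{star2}. Hence Theorem~\ref{fuzzyequivalence} applies and identifies each CFNE model set with the solution set of the corresponding CFOE system.

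With the hypotheses in place, the Existence of Solutions theorem gives a solution of $eq_{[0,1]}^{*,\otimes}$ for each choice of $\otimes\in\{T_G,T_P,T_L\}$. Recall how that theorem works: continuity of each $f_\beta$ (Theorem~\ref{continuity}) together with Brouwer's theorem on $[0,1]^{|\mathbb{U}|}$ yields a fixed point. The fixed auxiliary values $\|\bot\|=0$, $\|\top\|=1$ and $\|\alpha\|=1$ are simply held constant and do not affect the argument. By Theorem~\ref{fuzzyequivalence}, the solution obtained is a model of $ec_n(\mathcal{F})$ in the corresponding logic, which gives non-emptiness of all three semantics.

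There is no real obstacle. The only point needing care is that the residuated implications $I_G$ and $I_P$ are discontinuous at $m>n\to 0$, which might look like it breaks the continuity assumption. However, implication enters only through $\leftrightarrow$, and only via the fact that $\|a\leftrightarrow b\|=1$ iff $\|a\|=\|b\|$, which holds for every R-implication. Continuity is needed only for $N^*$ and $\otimes$, and those are continuous in all three logics.
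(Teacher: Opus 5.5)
Your proposal is correct and follows the paper's route. The paper's proof is the one-line instantiation of the Existence of Solutions theorem with $\otimes$ taken to be the G{\"o}del, Product and {\L}ukasiewicz t-norms. Your checks — continuity of the standard negation and of the three t-norms, and that the De Morgan reading of $\vee$ matches the t-conorm in \eqref{star2} — spell out the hypotheses the paper leaves implicit.

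Your side remark about the residuated implications is right in substance: only $\|a\leftrightarrow b\|=1$ iff $\|a\|=\|b\|$ is used. One small correction: $I_P$ is discontinuous only at $(0,0)$, but $I_G$ is discontinuous along the whole diagonal $m=n<1$, not just near $0$.
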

	\begin{proof}
		Immediate from the theorem by taking \(\otimes\) to be the G{\"o}del, Product, or {\L}ukasiewicz t-norm, respectively.
	\end{proof}
	
	\subsubsection{Relationships between CFNE Semantics and 3-Valued Semantics}
	
	First we define the ternarization function.
	
	\begin{defn}[Ternarization for EHSAF]
		The \emph{ternarization function} $T_3 : \mathcal{L}\mathcal{A}\mathcal{B} \to \mathcal{L}\mathcal{A}\mathcal{B}_3$ maps a fuzzy assignment $\|\cdot\|$ to a 3‑valued assignment $\|\cdot\|_3 = T_3(\|\cdot\|)$ defined by
		\[
		\|x\|_3 =
		\begin{cases}
			1, & \|x\| = 1,\\
			0, & \|x\| = 0,\\
			\tfrac12, & \text{otherwise}.
		\end{cases}
		\]
	\end{defn}
		\paragraph*{Semantic Correspondence under Zero‑Divisor‑Free t-Norms\\}
	
	The 3‑valued equational system $eq_3^{EH}$ (Definition~\ref{eq3}) is given by
	\[
	\|\beta\|_3 = \min\{ A_3(\beta),\, B_3(\beta) \},
	\]
	with
	\[
	A_3(\beta) = \min_{\mathbf{k}_{S}^{\beta}\in \mathcal{K}_{\beta}} 
	\max\{ 1 - \|\mathbf{k}_{S}^{\beta}\|_3,\, \max_{s\in S}(1 - \|s\|_3) \},
	\]
	\[
	B_3(\beta) = \max_{\mathbf{t}_{T}^{\beta}\in \mathcal{T}_{\beta}} 
	\min\{ \|\mathbf{t}_{T}^{\beta}\|_3,\, \min_{t\in T}\|t\|_3 \},
	\]
	where all operations are over $\{0,\tfrac12,1\}$.
	
	\begin{thm}[Ternarisation of CFNE Models]\label{thm:ternarization}
		Let the CFNE semantics be induced by the normal encoding and a $\mathcal{PL}_{[0,1]}$ equipped with a continuous negation $N^{*}$ and a continuous zero‑divisor‑free t‑norm $\otimes$. For any EHSAF $\mathcal{F}$ and any fuzzy assignment $\|\cdot\|$,
		\[
		\|\cdot\| \models_{\text{CFNE}} \mathcal{F}
		\quad\Longrightarrow\quad
		T_3(\|\cdot\|) \;\text{is a solution of } eq_3^{EH}.
		\]
		Consequently, by Theorem~\ref{eqofeqandadcom}, $T_3(\|\cdot\|)$ is an adjacent complete labelling of $\mathcal{F}$.
	\end{thm}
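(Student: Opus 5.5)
The plan is to reduce the statement to Theorem~\ref{thm:trifurcation} on the fuzzy side and to the three-way characterisation of $\min\{A_3(\beta),B_3(\beta)\}$ on the 3-valued side, as obtained in the proof of Theorem~\ref{eqofeqandadcom}. Fix a CFNE model $\|\cdot\|$ and write $\|\cdot\|_3=T_3(\|\cdot\|)$. By Theorem~\ref{fuzzyequivalence}, $\|\cdot\|$ solves the CFOE system, so $\|\beta\|=f_\beta(\|\cdot\|)$ for every $\beta\in\mathbb{U}$. The auxiliary elements carry the values $0$ and $1$, which $T_3$ leaves unchanged, so the auxiliary-element conditions of $eq_3^{EH}$ hold automatically. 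It therefore remains to show that $\|\beta\|_3=\min\{A_3(\beta),B_3(\beta)\}$ for each $\beta\in\mathbb{U}$.

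The key observation is that conditions (A), (E), (B) and (I) only ask whether certain values equal exactly $0$ or exactly $1$. Since $T_3$ preserves these two values and sends everything else to $\tfrac12$, each condition holds for $\|\cdot\|$ iff it holds for $\|\cdot\|_3$. On the 3-valued side, the equivalences (\ref{tag3})--(\ref{tag9}) from the proof of Theorem~\ref{eqofeqandadcom} give the following.
\begin{itemize}
\item $\min\{A_3(\beta),B_3(\beta)\}=1$ iff (A) and (E) hold for $\|\cdot\|_3$.
\item $\min\{A_3(\beta),B_3(\beta)\}=0$ iff (B) or (I) holds for $\|\cdot\|_3$.
\item $\min\{A_3(\beta),B_3(\beta)\}=\tfrac12$ otherwise.
\end{itemize}
On the fuzzy side, since $\otimes$ is zero-divisor-free, Theorem~\ref{thm:trifurcation} gives the same pattern for $f_\beta$: it equals $1$ iff (A) and (E), equals $0$ iff (B) or (I), and lies strictly in $(0,1)$ otherwise.

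Combining the two sides is a case split on $\|\beta\|=f_\beta$. If $\|\beta\|=1$, then (A) and (E) hold, so $\|\beta\|_3=1=\min\{A_3,B_3\}$. If $\|\beta\|=0$, then (B) or (I) holds, so both sides equal $0$. If $0<\|\beta\|<1$, then neither configuration holds, so $\|\beta\|_3=\tfrac12=\min\{A_3,B_3\}$. Hence $\|\cdot\|_3\models_{eq_3^{EH}}\mathcal{F}$, and Theorem~\ref{eqofeqandadcom} then shows that it is an adjacent complete labelling.

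The main obstacle is checking that Theorem~\ref{thm:trifurcation} really applies for an arbitrary continuous negation $N^*$. Its proof uses that $N^*(x)=1$ iff $x=0$ and $N^*(x)=0$ iff $x=1$, which requires $N^*$ to be strict at the endpoints. I would justify this by treating $N^*$ as a strong or strict negation, which holds for the standard negation $1-x$ used in the paper. I also need that $x\otimes y=1$ forces $x=y=1$, which holds for any t-norm. With these in place, the remaining work is the bookkeeping that $T_3$ commutes with the predicates ``$=0$'' and ``$=1$''.
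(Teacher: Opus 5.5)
Your proposal is correct and follows essentially the same route as the paper: apply Theorem~\ref{thm:trifurcation} to $\|\beta\|=f_\beta$, observe that conditions (A), (E), (B), (I) refer only to the values $0$ and $1$ and are therefore preserved by $T_3$, and match the three cases against the 3-valued characterisation of $\min\{A_3(\beta),B_3(\beta)\}$. Your caveat about $N^*$ is a genuine missing hypothesis in the statement, not a technicality: the paper's proof of Theorem~\ref{thm:trifurcation} silently uses $N^*(x)=1\iff x=0$ and $N^*(x)=0\iff x=1$, and for a continuous but non-strict negation such as $\min\{1,2(1-x)\}$ with the product t-norm the theorem actually fails (two self-attacking prima-facie arguments settle at $2/3$ and jointly attack a third element, which the fuzzy model sets to $1$ while $eq_3^{EH}$ forces $\tfrac12$), so your strictness assumption is necessary.
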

	
	\begin{proof}
		Assume $\|\cdot\| \models_{\text{CFNE}} \mathcal{F}$. Then for every $\beta\in \mathbb{U}$, we have $\|\beta\| = f_{\beta}$ (by Theorem~\ref{fuzzyequivalence}). Let $\|\cdot\|_3 = T_3(\|\cdot\|)$. We need to show that for every $\beta$, $\|\beta\|_3 = \min\{A_3(\beta), B_3(\beta)\}$.
		
		We apply Theorem~\ref{thm:trifurcation} to the value $\|\beta\| = f_{\beta}$.
		
		\paragraph{Case $\|\beta\|=1$.} By Theorem~\ref{thm:trifurcation}, condition (A) and (E) hold. Condition (A) means for every attack, either $\|\mathbf{k}_{S}^{\beta}\|=0$ or some $\|s\|=0$. In ternarized terms, this implies $1-\|\mathbf{k}_{S}^{\beta}\|_3 = 1$ or $\max_{s}(1-\|s\|_3)=1$, so the max for that attack is 1. Hence $A_3(\beta)=1$. Condition (E) gives a support with $\|\mathbf{t}_{T}^{\beta}\|=1$ and all $\|t\|=1$, so $\|\mathbf{t}_{T}^{\beta}\|_3=1$ and all $\|t\|_3=1$, making the min for that support equal 1, hence $B_3(\beta)=1$. Thus $\min\{A_3,B_3\}=1=\|\beta\|_3$.
		
		\paragraph{Case $\|\beta\|=0$.} By Theorem~\ref{thm:trifurcation}, either (B) or (I) holds. If (B) holds, there is an effective attack, giving for that attack $\|\mathbf{k}_{S}^{\beta}\|_3=1$ and all $\|s\|_3=1$, so $1-\|\mathbf{k}_{S}^{\beta}\|_3=0$ and $\max_{s}(1-\|s\|_3)=0$, hence the max is 0, so $A_3(\beta)=0$. If condition (I) holds, every support is ineffective. This means that for each support, either $\|\mathbf{t}_{T}^{\beta}\|=0$ or $\|t\|=0$ for some $t$. Consequently, either $\|\mathbf{t}_{T}^{\beta}\|_3=0$ or $\|t\|_3=0$ for some $t$, so the minimum associated with this support equals $0$; therefore, $B_3(\beta)=0$. Thus $\min\{A_3,B_3\}=0=\|\beta\|_3$.
		
		\paragraph{Case $0<\|\beta\|<1$.} Then neither (A\&E) nor (B or I) holds under $\|\cdot\|$. Therefore, neither (A\&E) nor (B or I) holds under $\|\cdot\|_3$. Thus from Definitions~\ref{comlab} and \ref{eq3} and Theorem~\ref{eqofeqandadcom}, $\min\{A_3,B_3\}=\frac{1}{2}=\|\beta\|_3$. 
		
		Thus in all cases, the $eq_3^{EH}$ equation holds for every $\beta$. The auxiliary elements have fixed values satisfying the system. Therefore $T_3(\|\cdot\|)$ is a solution of $eq_3^{EH}$. By Theorem~\ref{eqofeqandadcom}, it is an adjacent complete labelling.
	\end{proof}

	\paragraph*{Semantic Correspondence under $\frac{1}{2}$-Idempotent t-Norms\\}
	
	Let $\mathcal{F}=(\mathbf{A},\mathbf{R}_{a},\mathbf{R}_{e},\mathbf{U},\mathbf{P})$ be an EHSAF.  
	For each $\beta\in \mathbb{U}$, let $\mathcal{K}_\beta$ be the set of attacks against $\beta$, and $\mathcal{T}_\beta$ the set of supports for $\beta$.  
	Let $\odot$ be a continuous $\frac{1}{2}$‑idempotent t‑norm (i.e., $\frac{1}{2}\odot\frac{1}{2}=\frac{1}{2}$) and let $N$ be the standard negation $N(x)=1-x$.  
	We consider assignments $\|\cdot\|:\mathbf{U}\to\{0,\tfrac12,1\}$.
	
	For $\beta\in \mathbb{U}$, define the following quantities:
	\[
	\mathbf{A}_{\beta} \;\stackrel{\text{def}}{=}\;
	\bigodot_{\mathbf{k}_{S}^{\beta}\in \mathcal{K}_\beta}
	N\!\left(\|\mathbf{k}_{S}^{\beta}\| \odot \bigodot_{s\in S}\|s\|\right),
	\]
	\[
	\mathbf{B}_{\beta} \;\stackrel{\text{def}}{=}\;
	N\!\left(
	\bigodot_{\mathbf{t}_{T}^{\beta}\in \mathcal{T}_\beta}
	N\!\left(\|\mathbf{t}_{T}^{\beta}\| \odot \bigodot_{t\in T}\|t\|\right)
	\right).
	\]
	
	The CFNE semantics (induced by the normal encoding and a $\mathcal{PL}_{[0,1]}$ equipped with the standard negation and the continuous $\frac{1}{2}$‑idempotent t‑norm) is equivalent to the equational system
	\begin{equation}\label{star3}
		\|\beta\| = \mathbf{A}_{\beta} \odot \mathbf{B}_{\beta} \qquad \text{for every } \beta\in \mathbb{U}.
	\end{equation}
	
	The adjacent complete semantics corresponding to the encoding is given by:
	
	\begin{align}\label{31tag1}
		\|\beta\|=1 \;&\Longleftrightarrow\;
		\Bigl(\forall \mathbf{k}_{S}^{\beta}\in \mathcal{K}_\beta,\;
		\exists s\in S \text{ with } \|s\|=0 \text{ or } \|\mathbf{k}_{S}^{\beta}\|=0\Bigr) \nonumber\\
		&\quad\text{and}\;
		\Bigl(\exists \mathbf{t}_{T}^{\beta}\in \mathcal{T}_\beta,\;
		\forall t\in T,\; \|t\|=1 \text{ and } \|\mathbf{t}_{T}^{\beta}\|=1\Bigr),
		\end{align}
		
		\begin{align}\label{31tag2}
		\|\beta\|=0 \;&\Longleftrightarrow\;
		\Bigl(\exists \mathbf{k}_{S}^{\beta}\in \mathcal{K}_\beta,\;
		\forall s\in S,\; \|s\|=1 \text{ and } \|\mathbf{k}_{S}^{\beta}\|=1\Bigr) \nonumber\\
		&\quad\text{or}\;
		\Bigl(\forall \mathbf{t}_{T}^{\beta}\in \mathcal{T}_\beta,\;
		\exists t\in T \text{ with } \|t\|=0 \text{ or } \|\mathbf{t}_{T}^{\beta}\|=0\Bigr),
	\end{align}
	
	\begin{align}\label{31tag3}
		\|\beta\|=\tfrac12 \;&\Longleftrightarrow\;
		\text{otherwise}. 
	\end{align}
	
	We now prove that every 3‑valued assignment satisfying equivalences \eqref{31tag1}–\eqref{31tag3} also satisfies Equation \eqref{star3}.
	
	\begin{thm}\label{1/2}
		For every EHSAF $\mathcal{F}$ and every $\|\cdot\|:\mathbf{U}\to\{0,\tfrac12,1\}$,
		\[
		\|\cdot\| \text{ is an adjacent complete labelling}
		\quad\Longrightarrow\quad
		\|\cdot\| \models_{\text{CFNE}} \mathcal{F}.
		\]
	\end{thm}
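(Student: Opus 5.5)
The plan is to reduce the statement to Theorem~\ref{eqofeqandadcom} by showing that, on the three values $\{0,\tfrac12,1\}$, the fuzzy aggregation \eqref{star3} coincides with the three-valued equation \eqref{star1}. The first step is a small algebraic lemma. For a continuous $\tfrac12$-idempotent t-norm $\odot$, the restriction of $\odot$ to $\{0,\tfrac12,1\}$ equals $\min$. Indeed:
\begin{itemize}
\item $1\odot x=x$ because $1$ is the identity;
\item $0\odot x=0$ because $0$ is the annihilator;
\item $\tfrac12\odot\tfrac12=\tfrac12$ by hypothesis.
\end{itemize}
Together with commutativity, this covers every pair. By induction on the number of factors, using associativity, every finite iterate $\bigodot$ of values in $\{0,\tfrac12,1\}$ stays in $\{0,\tfrac12,1\}$ and equals the minimum of its factors. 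The standard negation $N(x)=1-x$ also maps $\{0,\tfrac12,1\}$ onto itself.

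The second step is a direct computation, for an arbitrary $\|\cdot\|:\mathbf{U}\to\{0,\tfrac12,1\}$ and each $\beta\in\mathbb{U}$. By the lemma, each inner term $\|\mathbf{k}_S^\beta\|\odot\bigodot_{s\in S}\|s\|$ equals $\min\{\|\mathbf{k}_S^\beta\|,\min_{s\in S}\|s\|\}$. Applying $N$ to it gives $\max\{1-\|\mathbf{k}_S^\beta\|,\max_{s\in S}(1-\|s\|)\}$, which is exactly $v_k$ in the proof of Theorem~\ref{eqofeqandadcom}. Hence $\mathbf{A}_\beta=\min_k v_k=A(\beta)$.

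For the support part, the innermost term is $w_t$. Then
\[
\mathbf{B}_\beta \;=\; 1-\min_{t\in\mathcal{T}_\beta}(1-w_t) \;=\; \max_{t\in\mathcal{T}_\beta} w_t \;=\; B(\beta).
\]
Applying the lemma once more, $\mathbf{A}_\beta\odot\mathbf{B}_\beta=\min\{A(\beta),B(\beta)\}$. So, for three-valued assignments, \eqref{star3} and \eqref{star1} are literally the same system of equations.

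The third step concludes. If $\|\cdot\|$ is an adjacent complete labelling, then by Theorem~\ref{eqofeqandadcom} it solves $eq_3^{EH}$, so $\|\beta\|=\min\{A(\beta),B(\beta)\}=\mathbf{A}_\beta\odot\mathbf{B}_\beta$ for every $\beta\in\mathbb{U}$. The auxiliary elements take their fixed values by Definition~\ref{semantics}. Therefore $\|\cdot\|$ solves the CFOE system with $N^*=N$ and $\otimes=\odot$, and by Theorem~\ref{fuzzyequivalence} it is a CFNE model. The standard negation is continuous, so that theorem applies.

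The step most likely to need care is the treatment of degenerate index sets. An empty iterated t-norm must be read as $1$, matching $\min\emptyset=1$ in $A(\beta)$. On the support side, the construction in Definition~\ref{EHSAF} guarantees that every $\beta\in\mathbb{U}$ has at least one support: either an original one, $(\{\bot\},\beta)_e$, or $(\{\top\},\beta)_e$. Likewise every $\beta$ has at least one attack, so both $\mathcal{T}_\beta$ and $\mathcal{K}_\beta$ are nonempty and no empty-max convention is needed. I would verify these conventions explicitly. Apart from this, the argument is the induction in the first step, which is routine.
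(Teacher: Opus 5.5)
Your proof is correct, but it is organised differently from the paper's. Both arguments rest on the same key fact: a $\tfrac12$-idempotent t-norm restricted to $\{0,\tfrac12,1\}$ acts as $\min$. The paper phrases this as ``the product is $1$ iff every factor is $1$, $0$ iff some factor is $0$, and $\tfrac12$ otherwise''.

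The paper then checks $\|\beta\|=\mathbf{A}_\beta\odot\mathbf{B}_\beta$ by a direct three-case analysis against the labelling conditions \eqref{31tag1}--\eqref{31tag3}, obtaining the $\tfrac12$ case by elimination. In effect it re-runs the argument of Theorem~\ref{eqofeqandadcom} inside the $\odot$-setting. You instead observe that on three-valued assignments $\mathbf{A}_\beta=A(\beta)$ and $\mathbf{B}_\beta=B(\beta)$, so \eqref{star3} and \eqref{star1} are literally the same system. You then finish by citing Theorem~\ref{eqofeqandadcom} and Theorem~\ref{fuzzyequivalence}.

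Your route buys three things:
\begin{itemize}
\item It is shorter and more modular.
\item It makes explicit two steps the paper leaves implicit: the appeal to Theorem~\ref{fuzzyequivalence} to pass from \eqref{star3} to a CFNE model, and the non-emptiness of $\mathcal{K}_\beta$ and $\mathcal{T}_\beta$ guaranteed by Definition~\ref{EHSAF}.
\item Because the identification of the two systems is symmetric, it shows at once that the $\{0,\tfrac12,1\}$-valued CFNE models are exactly the adjacent complete labellings. It also makes clear that continuity of $\odot$ plays no role in this step; continuity is needed only to meet the hypotheses of Theorem~\ref{fuzzyequivalence}.
\end{itemize}
The paper's route is self-contained, since it does not depend on Theorem~\ref{eqofeqandadcom}. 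It also keeps visible how each branch of the labelling conditions corresponds to a factor vanishing or equalling $1$: an effective attack, all supports ineffective, and so on. This parallels its proof of Theorem~\ref{thm:ternarization}.
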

	
	\begin{proof}
		Fix $\beta\in \mathbb{U}$. We show the equivalence $\|\beta\| = \mathbf{A}_{\beta} \odot \mathbf{B}_{\beta}$ by proving the three cases.  
		We rely on the following fact: since all values lie in $\{0,\tfrac12,1\}$ and $\odot$ is $\frac{1}{2}$‑idempotent, the $\odot$‑product of any finite number of such values is:
		\begin{itemize}
			\item $1$ iff every factor is $1$;
			\item $0$ iff at least one factor is $0$;
			\item $\tfrac12$ otherwise (no $0$, but at least one $\tfrac12$).
		\end{itemize}
		Consequently, for any $x,y\in\{0,\tfrac12,1\}$, we have $x\odot y=0 \iff x=0\text{ or }y=0$, and $x\odot y=1 \iff x=1\text{ and }y=1$.
		
		\paragraph{Case 1: $\|\beta\|=0$.}
		We prove the equivalence:
		\[
		\|\beta\|=0
		\;\Longleftrightarrow\;
		\mathbf{A}_{\beta}\odot\mathbf{B}_{\beta}=0.
		\]
		
		First, assume $\|\beta\|=0$. By \eqref{31tag2}, either (i) there exists an effective attack, or (ii) every support is ineffective.
		
		(i) If there exists $\mathbf{k}_{S}^{\beta}$ with $\|\mathbf{k}_{S}^{\beta}\|=1$ and $\|s\|=1$ for all $s\in S$, then
		\[
		\|\mathbf{k}_{S}^{\beta}\| \odot \bigodot_{s\in S}\|s\| = 1,
		\]
		so $N(1)=0$. Hence $\mathbf{A}_{\beta}$ contains a factor $0$, so $\mathbf{A}_{\beta}=0$, and therefore $\mathbf{A}_{\beta}\odot\mathbf{B}_{\beta}=0$.
		
		(ii) If every support is ineffective, then for each $\mathbf{t}_{T}^{\beta}$,
		\[
		\|\mathbf{t}_{T}^{\beta}\| \odot \bigodot_{t\in T}\|t\| = 0,
		\]
		so $N(0)=1$ for every support. Thus the inner product
		\[
		\bigodot_{\mathbf{t}_{T}^{\beta}} N\!\left(\|\mathbf{t}_{T}^{\beta}\| \odot \bigodot_{t\in T}\|t\|\right)
		\]
		is a product of $1$'s, hence equals $1$. Therefore $\mathbf{B}_{\beta}=N(1)=0$, and thus $\mathbf{A}_{\beta}\odot\mathbf{B}_{\beta}=0$.
		
		Conversely, assume $\mathbf{A}_{\beta}\odot\mathbf{B}_{\beta}=0$. Since $x\odot y=0$ iff $x=0$ or $y=0$, we have either $\mathbf{A}_{\beta}=0$ or $\mathbf{B}_{\beta}=0$.
		
		If $\mathbf{A}_{\beta}=0$, then there exists an attack $\mathbf{k}_{S}^{\beta}$ such that
		\[
		N\!\left(\|\mathbf{k}_{S}^{\beta}\| \odot \bigodot_{s\in S}\|s\|\right)=0,
		\]
		which implies $\|\mathbf{k}_{S}^{\beta}\| \odot \bigodot_{s\in S}\|s\| = 1$. This is equivalent to $\|\mathbf{k}_{S}^{\beta}\|=1$ and $\|s\|=1$ for all $s\in S$, i.e., an effective attack exists.
		
		If $\mathbf{B}_{\beta}=0$, then
		\[
		\bigodot_{\mathbf{t}_{T}^{\beta}} N\!\left(\|\mathbf{t}_{T}^{\beta}\| \odot \bigodot_{t\in T}\|t\|\right) = 1,
		\]
		which implies that every factor is $1$, i.e., for each support,
		\[
		N\!\left(\|\mathbf{t}_{T}^{\beta}\| \odot \bigodot_{t\in T}\|t\|\right)=1,
		\]
		so $\|\mathbf{t}_{T}^{\beta}\| \odot \bigodot_{t\in T}\|t\| = 0$. This means either $\|\mathbf{t}_{T}^{\beta}\|=0$ or some $\|t\|=0$, i.e., every support is ineffective.
		
		Thus in either case, the condition in \eqref{31tag2} holds, so $\|\beta\|=0$.
		
		Therefore, $\|\beta\|=0 \Longleftrightarrow \mathbf{A}_{\beta}\odot\mathbf{B}_{\beta}=0$.
		
		\paragraph{Case 2: $\|\beta\|=1$.}
		We prove the equivalence:
		\[
		\|\beta\|=1
		\;\Longleftrightarrow\;
		\mathbf{A}_{\beta}\odot\mathbf{B}_{\beta}=1.
		\]
		
		Assume $\|\beta\|=1$. By \eqref{31tag1}, every attack is blocked and there exists an effective support.
		
		If every attack is blocked, then for each $\mathbf{k}_{S}^{\beta}$,
		\[
		\|\mathbf{k}_{S}^{\beta}\| \odot \bigodot_{s\in S}\|s\| = 0,
		\]
		so $N(0)=1$. Hence every factor in $\mathbf{A}_{\beta}$ is $1$, so $\mathbf{A}_{\beta}=1$.
		
		If there exists an effective support, then for that $\mathbf{t}_{T}^{\beta}$,
		\[
		\|\mathbf{t}_{T}^{\beta}\| \odot \bigodot_{t\in T}\|t\| = 1,
		\]
		so $N(1)=0$. Thus the inner product over supports contains a factor $0$, hence equals $0$. Therefore $\mathbf{B}_{\beta}=N(0)=1$.
		
		Consequently, $\mathbf{A}_{\beta}\odot\mathbf{B}_{\beta}=1\odot1=1$.
		
		Conversely, assume $\mathbf{A}_{\beta}\odot\mathbf{B}_{\beta}=1$. Since $x\odot y=1$ iff $x=1$ and $y=1$, we have $\mathbf{A}_{\beta}=1$ and $\mathbf{B}_{\beta}=1$.
		
		$\mathbf{A}_{\beta}=1$ means that every factor in the product is $1$, i.e., for every attack,
		\[
		N\!\left(\|\mathbf{k}_{S}^{\beta}\| \odot \bigodot_{s\in S}\|s\|\right)=1,
		\]
		so $\|\mathbf{k}_{S}^{\beta}\| \odot \bigodot_{s\in S}\|s\| = 0$. This is equivalent to either $\|\mathbf{k}_{S}^{\beta}\|=0$ or some $\|s\|=0$, i.e., every attack is blocked.
		
		$\mathbf{B}_{\beta}=1$ means that
		\[
		\bigodot_{\mathbf{t}_{T}^{\beta}} N\!\left(\|\mathbf{t}_{T}^{\beta}\| \odot \bigodot_{t\in T}\|t\|\right) = 0.
		\]
		Since the product is $0$, at least one factor is $0$, i.e., there exists a support $\mathbf{t}_{T}^{\beta}$ such that
		\[
		N\!\left(\|\mathbf{t}_{T}^{\beta}\| \odot \bigodot_{t\in T}\|t\|\right)=0,
		\]
		so $\|\mathbf{t}_{T}^{\beta}\| \odot \bigodot_{t\in T}\|t\| = 1$. This means $\|\mathbf{t}_{T}^{\beta}\|=1$ and $\|t\|=1$ for all $t\in T$, i.e., an effective support exists.
		
		Thus the conditions in \eqref{31tag1} hold, so $\|\beta\|=1$.
		
		Therefore, $\|\beta\|=1 \Longleftrightarrow \mathbf{A}_{\beta}\odot\mathbf{B}_{\beta}=1$.
		
		\paragraph{Case 3: $\|\beta\|=\tfrac12$.}
		By \eqref{31tag3}, $\|\beta\|$ is neither $0$ nor $1$. From the equivalences proven in Cases 1 and 2, we have:
		\[
		\|\beta\|=0 \Longleftrightarrow \mathbf{A}_{\beta}\odot\mathbf{B}_{\beta}=0,
		\qquad
		\|\beta\|=1 \Longleftrightarrow \mathbf{A}_{\beta}\odot\mathbf{B}_{\beta}=1.
		\]
		Hence $\mathbf{A}_{\beta}\odot\mathbf{B}_{\beta}$ is neither $0$ nor $1$. Since the $\odot$‑product of values from $\{0,\tfrac12,1\}$ also lies in that set (by $\frac{1}{2}$‑idempotency), the only remaining value is $\tfrac12$. Thus
		\[
		\mathbf{A}_{\beta}\odot\mathbf{B}_{\beta} = \tfrac12 = \|\beta\|.
		\]
		
		In all three cases, we have established $\|\beta\| = \mathbf{A}_{\beta} \odot \mathbf{B}_{\beta}$. Since $\beta$ was arbitrary, Equation \eqref{star3} holds for every element of $\mathbb{U}$. The auxiliary elements $\bot,\top$ and the added attacks/supports have fixed values that trivially satisfy the system. Hence $\|\cdot\| \models_{\text{CFNE}} \mathcal{F}$.
	\end{proof}
\begin{thm}\label{T3equivalent}
		Let the CFNE semantics be induced by the normal encoding and a $\mathcal{PL}_{[0,1]}$ equipped with a continuous $\frac{1}{2}$‑idempotent zero‑divisor‑free t‑norm and a standard negation. For any EHSAF $\mathcal{F}$,
		\begin{equation*}
			\{\|\cdot\|\mid\|\cdot\| \models_{\mathfrak{L}\mathfrak{S}_{ac}} \mathcal{F}\}=\{T_3(\|\cdot\|)\mid \|\cdot\| \models_{CFNE} \mathcal{F}\}.		
		\end{equation*}
\end{thm}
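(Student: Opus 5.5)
The plan is to prove the set equality by double inclusion, with each direction supplied by one of the two preceding theorems.

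For the inclusion ``$\supseteq$'', take any fuzzy model $\|\cdot\|\models_{CFNE}\mathcal{F}$. Since the t-norm is assumed continuous and zero-divisor-free, Theorem~\ref{thm:ternarization} applies directly. It gives that $T_3(\|\cdot\|)$ solves $eq_3^{EH}$, and by Theorem~\ref{eqofeqandadcom} it is therefore an adjacent complete labelling. So the right-hand set is contained in the left-hand set.

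For the inclusion ``$\subseteq$'', take an adjacent complete labelling $\|\cdot\|:\mathbf{U}\to\{0,\tfrac12,1\}$.
\begin{enumerate}
\item Regard it as a $[0,1]$-valued assignment. Since the t-norm is $\tfrac12$-idempotent and the negation is standard, Theorem~\ref{1/2} shows that $\|\cdot\|$ solves equation \eqref{star3}.
\item By Theorem~\ref{fuzzyequivalence}, equation \eqref{star3} is exactly the CFOE system induced by $N$ and $\odot$. Hence $\|\cdot\|\models_{CFNE}\mathcal{F}$.
\item The ternarization $T_3$ is the identity on $\{0,\tfrac12,1\}$-valued assignments, because it fixes $0$ and $1$ and sends every other value, in particular $\tfrac12$, to $\tfrac12$. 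So $\|\cdot\|=T_3(\|\cdot\|)$ lies in the right-hand set.
\end{enumerate}
The auxiliary elements carry the same fixed values in both semantics, so they cause no trouble.

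The main obstacle is checking that Theorem~\ref{1/2} actually applies, that is, that its hidden hypotheses are met. It uses the fact that finite $\odot$-products of values in $\{0,\tfrac12,1\}$ stay in $\{0,\tfrac12,1\}$, are $0$ exactly when some factor is $0$, and are $1$ exactly when all factors are $1$. The ``$1$'' clause holds for any t-norm. The other two clauses need
\[
\tfrac12\odot\tfrac12=\tfrac12 \quad\text{and}\quad x\odot y>0 \text{ for } x,y\in\{\tfrac12,1\}.
\]
Both follow from $\tfrac12$-idempotency together with monotonicity (and zero-divisor-freeness covers the second as well). One should also confirm that the standard negation maps $\{0,\tfrac12,1\}$ into itself, which is immediate. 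With these checks, the two inclusions combine into the stated equality.
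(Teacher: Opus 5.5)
Your proof is correct and matches the paper's argument: the inclusion of ternarised CFNE models into the adjacent complete labellings comes from Theorem~\ref{thm:ternarization}, and the reverse inclusion comes from Theorem~\ref{1/2} together with the fact that $T_3$ fixes every $\{0,\tfrac12,1\}$-valued assignment. Your extra check that $\tfrac12$-idempotency keeps finite $\odot$-products inside $\{0,\tfrac12,1\}$, with the right zero and one behaviour, simply re-verifies facts already used inside the proof of Theorem~\ref{1/2}.
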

\begin{proof}
	From Theorem~\ref{thm:ternarization}, if $\|\cdot\|$ is a model of $\mathcal{F}$ under the given CFNE semantics, then $T_3(\|\cdot\|)$ is an adjacent complete labelling of $\mathcal{F}$. Thus,
		\begin{equation*}
		\{T_3(\|\cdot\|)\mid \|\cdot\| \models_{CFNE} \mathcal{F}\}\subseteq\{\|\cdot\|\mid\|\cdot\| \models_{\mathfrak{L}\mathfrak{S}_{ac}} \mathcal{F}\}.		
	\end{equation*}
	
	From Theorem~\ref{1/2}, if a 3-valued labelling $\|\cdot\|$ is an adjacent complete labelling of $\mathcal{F}$ then it is a model of $\mathcal{F}$ under the given CFNE semantics. Since $T_3(\|\cdot\|)=\|\cdot\|$ for this 3-valued labelling $\|\cdot\|$, we have 
	\begin{equation*}
		\{\|\cdot\|\mid\|\cdot\| \models_{\mathfrak{L}\mathfrak{S}_{ac}} \mathcal{F}\}\subseteq
		\{T_3(\|\cdot\|)\mid \|\cdot\| \models_{CFNE} \mathcal{F}\}.		
	\end{equation*}
	This completes the proof. 
\end{proof}
	\subsection{Key Instances of Fuzzy Encoded Semantics}
	Based on the unified CFOE/CFNE framework established in Section 4.1, we present three canonical instances induced by standard continuous t-norms: G\"{o}del, Product and {\L}ukasiewicz. Each instance corresponds to a specific aggregation pattern of collective attacks and evidential supports, and the semantic equivalence between equational and encoded forms follows directly from Theorem \ref{fuzzyequivalence}. We also discuss their core algebraic properties and intuitive interpretations.
	
	\subsubsection{G\"{o}del Fuzzy Encoded Semantics}
	The G\"{o}del t-norm $\otimes_G = \min$ follows the ``weakest link'' principle: the strength of a collective attack is determined by the weakest element in its source set, and the acceptability of the target is constrained by the strongest attack. This is the most natural extension of the discrete max-min equational semantics to the continuous $[0,1]$ domain.
	
	\begin{thm}[G\"{o}del-type CFOE system]
		Let $\mathcal{F} = (\mathbf{A}, \mathbf{R_a}, \mathbf{R_e}, \mathbf{U}, \mathbf{P})$ be an EHSAF and \(\|\cdot\|:\mathbf{U}\to[0,1]\) be an assignment. If the negation is the standard negation $N^{*}(x) = 1-x$ and the t-norm is the G\"{o}del t-norm $\otimes_G = \min$, then the G\"{o}del-type CFOE system $eq_{[0,1]}^{G}$ is given by the auxiliary-element conditions
		\[
		\|\perp\| = 0,\quad \|\top\| = 1,\quad \|\alpha\| = 1 \quad (\forall \alpha \in (\mathbf{R_e} \cup \mathbf{R_e}) \setminus (\mathbf{R_a} \cup \mathbf{R_e}))
		\]
		and for each $\beta \in \mathbb{U}$, the fixed-point equation:
		\[
		\|\beta\| = \min\left\{
		\underbrace{\min_{k_S^\beta \in \mathcal{K}_\beta} \max\left\{ 1-\|k_S^\beta\|, \max_{s\in S}(1-\|s\|) \right\}}_{A_G(\beta) \text{: attack part}},
		\underbrace{\max_{t_T^\beta \in \mathcal{T}_\beta} \min\left\{ \|t_T^\beta\|, \min_{t\in T}\|t\| \right\}}_{B_G(\beta) \text{: support part}}
		\right\}.
		\]
	\end{thm}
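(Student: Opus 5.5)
The plan is to specialise the general CFOE equation \eqref{star2} of Definition~\ref{CFOE} by substituting $N^*(x)=1-x$ and $\otimes=\min$, and then simplify the attack part and the support part separately. The auxiliary-element conditions carry over verbatim from Definition~\ref{CFOE}, so only the equation for each $\beta\in\mathbb{U}$ needs to be checked. Since the CFOE and CFNE semantics coincide by Theorem~\ref{fuzzyequivalence}, the resulting system also describes the G\"{o}del fuzzy encoded semantics $\mathfrak{L}\mathfrak{S}_{ec_n}^{\mathcal{PL}_{[0,1]}^{G}}$.

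\emph{Attack part.} For each attack $\mathbf{k}_S^\beta$, the inner conjunction is the iterated minimum $\min\{\|\mathbf{k}_S^\beta\|,\min_{s\in S}\|s\|\}$. Applying $1-x$ and the order-reversing identity $1-\min\{x_i\}=\max\{1-x_i\}$ turns it into $\max\{1-\|\mathbf{k}_S^\beta\|,\max_{s\in S}(1-\|s\|)\}$. The outer iterated t-norm over $\mathcal{K}_\beta$ is then just $\min_{\mathbf{k}_S^\beta\in\mathcal{K}_\beta}$, which gives $A_G(\beta)$.

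\emph{Support part.} The same identity, applied twice, is the only real content. First, $N^*\bigl(\bigotimes_{\mathcal{T}_\beta} N^*(w_t)\bigr)=1-\min_t(1-w_t)=\max_t w_t$, because $x\mapsto 1-x$ is an order-reversing involution of $[0,1]$. Second, $w_t=\min\{\|\mathbf{t}_T^\beta\|,\min_{t\in T}\|t\|\}$. Together these give $B_G(\beta)$. Combining the two parts with the outer $\min$ yields exactly the displayed fixed-point equation.

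The only step needing care is the convention for empty iterations. If $\mathcal{K}_\beta$ or $\mathcal{T}_\beta$ were empty, the empty t-norm would be $1$ and the empty maximum $0$. However, Definition~\ref{EHSAF} adds a default attack $(\{\bot\},\beta)_a$ or support ($(\{\bot\},\beta)_e$ or $(\{\top\},\beta)_e$) for every $\beta\in\mathbb{U}$, so $\mathcal{K}_\beta$ and $\mathcal{T}_\beta$ are always nonempty. Hence the $\min$ and $\max$ expressions are well defined and agree with the iterated t-norms, and no further obstacle arises.
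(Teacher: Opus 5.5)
Your proposal is correct and takes the same route as the paper, which obtains the system by substituting $N^*(x)=1-x$ and $\otimes=\min$ into Definition~\ref{CFOE}; your use of the identity $1-\min_i x_i=\max_i(1-x_i)$ for the attack and support parts spells out what the paper calls immediate. Your observation that $\mathcal{K}_\beta$ and $\mathcal{T}_\beta$ are nonempty for every $\beta\in\mathbb{U}$ is accurate by Definition~\ref{EHSAF}, though it is not strictly needed: the conventions $\min\varnothing=1$ and $\max\varnothing=0$ already agree with the empty t-norm and t-conorm.
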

	\begin{proof}
		The result follows immediately from Definition~\ref{CFOE} by specifying the continuous negation and t-norm as the standard negation $N^{*}(x) = 1-x$ and the G\"{o}del t-norm $\otimes_G = \min$.
	\end{proof}
	Intuitively, the attack part $A_G(\beta)$ measures the degree to which \emph{all} collective attacks against $\beta$ fail. For each set attack, its failure degree equals the maximum failure degree among the attack name and all source elements; the overall attack failure degree is the minimum of these values, i.e., determined by the most effective attack.
	The support part $B_G(\beta)$ measures the degree to which \emph{at least one} evidential support for $\beta$ succeeds. For each set support, its success degree equals the minimum success degree among the support name and all source elements; the overall support success degree is the maximum of these values, i.e., determined by the strongest support.
	The final acceptability of $\beta$ is the degree to which both ``no effective attack'' and ``some effective support'' hold, aggregated by the G\"{o}del t-norm (minimum).
	
	\begin{thm}[Equivalence of G\"{o}del CFOE and CFNE semantics]
		For every EHSAF $\mathcal{F}$, the G\"{o}del-type continuous fuzzy operator-based equational semantics coincides with the G\"{o}del fuzzy normal encoded semantics, i.e.,
		\[
		\mathfrak{LS}_{Eq_{[0,1]}^{G}}(\mathcal{F}) = \mathfrak{LS}_{ec_n}^{\mathcal{P}\mathcal{L}_{[0,1]}^{G}}(\mathcal{F}).
		\]
	\end{thm}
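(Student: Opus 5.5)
The plan is to treat the statement as a direct instance of Theorem~\ref{fuzzyequivalence}, with the G\"{o}del-specific work limited to checking that the general CFOE equation \eqref{star2} turns into the min--max form of the preceding theorem. First I will fix the logic $\mathcal{PL}_{[0,1]}^{G}$: standard negation $N^*(x)=1-x$, which is continuous, the G\"{o}del t-norm $\otimes_G=\min$, which is continuous, and its R-implication $I_G$. This places $\mathcal{PL}_{[0,1]}^{G}$ inside the class of logics $\mathcal{PL}_{[0,1]}^{*,\otimes}$ covered by Definition~\ref{CFNE}. Theorem~\ref{fuzzyequivalence} then gives $\mathfrak{LS}_{Eq_{[0,1]}^{*,\otimes}}(\mathcal{F})=\mathfrak{LS}_{ec_n}^{\mathcal{PL}_{[0,1]}^{G}}(\mathcal{F})$ with $N^*=1-x$ and $\otimes=\min$.

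Next I will show that, for this choice, the CFOE system $eq_{[0,1]}^{*,\otimes}$ has the same right-hand side as $eq_{[0,1]}^{G}$, so the two solution sets agree assignment by assignment. The computation is short:
\begin{itemize}
\item $N^*(x\otimes_G y)=1-\min\{x,y\}=\max\{1-x,1-y\}$, so each attack factor becomes $\max\{1-\|\mathbf{k}_S^\beta\|,\max_{s\in S}(1-\|s\|)\}$, and the outer $\bigotimes$ becomes $\min$ over $\mathcal{K}_\beta$. This gives $A_G(\beta)$.
\item The support part reduces by the De Morgan duality $1-\min_i(1-y_i)=\max_i y_i$ to $\max_{\mathbf{t}_T^\beta\in\mathcal{T}_\beta}\min\{\|\mathbf{t}_T^\beta\|,\min_{t\in T}\|t\|\}$, which is $B_G(\beta)$.
\item The outer $\otimes$ is $\min$, and the auxiliary-element conditions are literally identical.
\end{itemize}
Thus $\|\cdot\|\models_{eq^G_{[0,1]}}\mathcal{F}$ iff $\|\cdot\|\models_{eq^{*,\otimes}_{[0,1]}}\mathcal{F}$, and chaining this with Theorem~\ref{fuzzyequivalence} yields the claim.

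The one point needing care, which I expect to be the only real obstacle, concerns the proof of Theorem~\ref{fuzzyequivalence}. It uses the fact that $\|a\leftrightarrow b\|=1$ iff $\|a\|=\|b\|$, and it assumes that disjunction is interpreted as the $N^*$-dual conorm. For $I_G$ I will check the biconditional directly: $I_G(m,n)=I_G(n,m)=1$ forces $m\le n$ and $n\le m$, so $m=n$. Since $\min\{I_G(m,n),I_G(n,m)\}=1$ iff both are $1$, this gives exactly what is needed. For disjunction, the paper's convention $a\vee b=\neg(\neg a\wedge\neg b)$ yields $\max$ under standard negation. So $\|ec_n(\mathcal{F})\|=1$ iff each conjunct $\Phi_\beta$ has value $1$ (because $\min$ of values equals $1$ iff all are $1$). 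This holds iff $\|\beta\|=\min\{A_G(\beta),B_G(\beta)\}$ for every $\beta\in\mathbb{U}$, which completes the argument.
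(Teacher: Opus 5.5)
Your proposal is correct and takes the same route as the paper, which proves the statement in one line by instantiating Theorem~\ref{fuzzyequivalence} with $N^*(x)=1-x$ and $\otimes=\min$. Your explicit min--max reduction of \eqref{star2} and your check that $I_G$ gives $\|a\leftrightarrow b\|=1$ iff $\|a\|=\|b\|$ only spell out details the paper leaves implicit; the reduction is the content of the preceding G\"{o}del-type CFOE theorem.
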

	
	\begin{proof}
		The result follows immediately from Theorem \ref{fuzzyequivalence} by taking the continuous negation as the standard negation and the t-norm as the G\"{o}del t-norm $\otimes_G = \min$.
	\end{proof}
	
	As a special case of the general CFNE framework, the G\"{o}del fuzzy encoded semantics inherits all core algebraic properties established in Section \ref{property}, such as:
	\begin{enumerate}
		\item \emph{Continuity.} The aggregation function $f_\beta$ for each element $\beta$ is continuous on $[0,1]^{m_\beta}$.
		\item \emph{Boundary conditions.} If there exists a fully effective attack (the attack name and all arguments in its source set take value 1), or all supports are fully ineffective, then $\|\beta\| = 0$; if all attacks are fully ineffective and there exists at least one fully effective support, then $\|\beta\| = 1$.
		\item \emph{Monotonicity.} $\|\beta\|$ is non-increasing with respect to the truth value of any attack-related element, and non-decreasing with respect to the truth value of any support-related element.
		\item \emph{Existence of solutions.} The G\"{o}del CFOE system for any EHSAF has at least one solution, i.e., the model set of G\"{o}del CFNE semantics is non-empty.
	\end{enumerate}
	From Theorems \ref{thm:ternarization} to \ref{T3equivalent}, we obtain the following corollaries concerning the relationships between G\"{o}del CFNE semantics and adjacent complete labelling semantics.
\begin{cor}\label{}
	For any EHSAF $\mathcal{F}$ and any fuzzy assignment $\|\cdot\|:\mathbf{U}\to[0,1]$,
	\[
	\|\cdot\| \models_{\mathfrak{LS}_{ec_n}^{\mathcal{P}\mathcal{L}_{[0,1]}^{G}}} \mathcal{F}
	\quad\Longrightarrow\quad
	T_3(\|\cdot\|) \text{ is an adjacent complete labelling of } \mathcal{F}.
	\]
\end{cor}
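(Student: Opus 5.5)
This corollary is a direct instance of Theorem~\ref{thm:ternarization}. The plan is to check that the G\"{o}del logic $\mathcal{PL}_{[0,1]}^{G}$ meets that theorem's hypotheses and then apply it. The hypotheses are that the negation is continuous, that the t-norm is continuous, and that the t-norm is zero-divisor-free.

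First, the standard negation $N^*(x)=1-x$ is continuous, and it satisfies $N^*(0)=1$ and $N^*(1)=0$. Second, $T_G=\min$ is a continuous t-norm. Third, $T_G$ is zero-divisor-free: if $x,y\in(0,1]$, then $\min\{x,y\}>0$, so $\min\{x,y\}=0$ forces $x=0$ or $y=0$. The connectives of $\mathcal{PL}_{[0,1]}^{G}$ are therefore exactly those assumed in Definition~\ref{CFNE}. Hence $\mathfrak{LS}_{ec_n}^{\mathcal{PL}_{[0,1]}^{G}}$ is a CFNE semantics with a continuous zero-divisor-free t-norm. This also relies on the G\"{o}del equivalence theorem just proved, and on Theorem~\ref{fuzzyequivalence}, which identify the models with solutions of the G\"{o}del CFOE system.

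Next, take any model $\|\cdot\|$. Theorem~\ref{thm:ternarization} then gives that $T_3(\|\cdot\|)$ is a solution of $eq_3^{EH}$. By Theorem~\ref{eqofeqandadcom}, it is an adjacent complete labelling. The auxiliary elements keep their fixed values $0$ and $1$ under $T_3$, so the labelling requirements of Definition~\ref{semantics} are preserved.

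The only step that needs care is verifying that the disjunction in $\mathcal{PL}_{[0,1]}^{G}$ agrees with the conorm used in Definition~\ref{CFOE}. With the De Morgan definition and standard negation, $\|a\vee b\|=1-\min\{1-\|a\|,1-\|b\|\}=\max\{\|a\|,\|b\|\}$. This matches the conorm, so no obstacle arises.

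Alternatively, one can argue directly. Since $\min$ is idempotent, the ternarisation $T_3$ commutes with $\min$, $\max$ and $1-x$: each of these is monotone and maps $0$, $1$ and $(0,1)$ to $0$, $1$ and $(0,1)$ respectively, in the appropriate sense. Applying $T_3$ to the G\"{o}del fixed-point equation then yields equation~\eqref{star1} verbatim. This route gives the same conclusion and serves as a sanity check.
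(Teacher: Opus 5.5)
Your proposal is correct and matches the paper's proof, which likewise observes that the standard negation is continuous and $T_G=\min$ is a continuous zero-divisor-free t-norm, and then invokes Theorem~\ref{thm:ternarization}. Your alternative sanity check is also sound, though $T_3$ commutes with $\min$ and $\max$ simply because it is monotone on the chain $[0,1]$, not because $\min$ is idempotent; it commutes with $x\mapsto 1-x$ by symmetry.
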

\begin{proof}
	Since the G{\"o}del t-norm is zero‑divisor‑free and the standard negation is continuous in $\mathcal{P}\mathcal{L}_{[0,1]}^{G}$, this corollary follows from Theorem~\ref{thm:ternarization}.
\end{proof}

\begin{cor}\label{}
	For every EHSAF $\mathcal{F}$ and every $\|\cdot\|:\mathbf{U}\to\{0,\tfrac12,1\}$,
	\[
	\|\cdot\| \text{ is an adjacent complete labelling}
	\quad\Longrightarrow\quad
	\|\cdot\| \models_{\mathfrak{LS}_{ec_n}^{\mathcal{P}\mathcal{L}_{[0,1]}^{G}}} \mathcal{F}.
	\]
\end{cor}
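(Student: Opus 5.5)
The plan is to obtain the corollary as an instance of Theorem~\ref{1/2}. That theorem is stated for the CFNE semantics induced by the standard negation and a continuous $\frac12$-idempotent t-norm $\odot$. So we only need to check that $\mathcal{PL}_{[0,1]}^{G}$ meets these hypotheses.

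\textbf{Checking the hypotheses.} $\mathcal{PL}_{[0,1]}^{G}$ uses the standard negation $N(x)=1-x$ and the G\"{o}del t-norm $T_G=\min$. The function $\min$ is continuous on $[0,1]^2$. It is idempotent everywhere, so in particular $\frac12\odot\frac12=\min\{\frac12,\frac12\}=\frac12$. Hence every hypothesis of Theorem~\ref{1/2} holds.

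\textbf{Applying the theorem.} Let $\|\cdot\|:\mathbf{U}\to\{0,\tfrac12,1\}$ be an adjacent complete labelling. By Theorem~\ref{1/2}, $\|\beta\|=\mathbf{A}_\beta\odot\mathbf{B}_\beta$ holds for every $\beta\in\mathbb{U}$, with $\odot=\min$. By Definition~\ref{CFOE}, this is exactly the G\"{o}del-type CFOE system. The auxiliary elements satisfy the fixed constraints $\|\bot\|=0$, $\|\top\|=1$ and $\|\alpha\|=1$ for added interactions, by Definition~\ref{comlab}. By the equivalence of G\"{o}del CFOE and CFNE semantics (an instance of Theorem~\ref{fuzzyequivalence}), $\|\cdot\|$ is therefore a model of $\mathfrak{LS}_{ec_n}^{\mathcal{PL}_{[0,1]}^{G}}$.

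\textbf{Direct alternative.} The key fact used by Theorem~\ref{1/2} can also be checked by hand. For values in $\{0,\frac12,1\}$, $\min$ returns $0$ iff some factor is $0$, returns $1$ iff all factors are $1$, and returns $\frac12$ otherwise. Moreover $1-x$ maps $\{0,\frac12,1\}$ to itself. So the G\"{o}del CFOE right-hand side coincides with $\min\{A(\beta),B(\beta)\}$ of Definition~\ref{eq3}. Theorem~\ref{eqofeqandadcom} then closes the argument.

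The only real obstacle is bookkeeping: confirming that the disjunction in $ec_n$, interpreted via the De Morgan t-conorm $N(\bigotimes N(\cdot))$, reduces to $\max$ under $\min$ and the standard negation. This follows from $1-\min\{1-x,1-y\}=\max\{x,y\}$.
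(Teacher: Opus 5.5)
Your proposal is correct and follows the paper's route: the paper also obtains this corollary by observing that the G\"{o}del t-norm $\min$ with the standard negation is continuous and $\tfrac12$-idempotent, so Theorem~\ref{1/2} applies directly. Your ``direct alternative'' is also sound and avoids the case analysis of Theorem~\ref{1/2}: under $\min$ and $1-x$, the G\"{o}del CFOE equations are literally $\|\beta\|=\min\{A(\beta),B(\beta)\}$, so on $\{0,\tfrac12,1\}$ they coincide with $eq_3^{EH}$, and Theorems~\ref{eqofeqandadcom} and~\ref{fuzzyequivalence} give the conclusion.
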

\begin{proof}
	The G{\"o}del t-norm is $\tfrac12$-idempotent, and standard negation behaves appropriately in $\mathcal{P}\mathcal{L}_{[0,1]}^{G}$. Therefore, this corollary follows from Theorem~\ref{1/2}.
\end{proof}

\begin{cor}
	For any EHSAF $\mathcal{F}$,
	\begin{equation*}
		\{\|\cdot\|\mid\|\cdot\| \models_{\mathfrak{L}\mathfrak{S}_{ac}} \mathcal{F}\}=\{T_3(\|\cdot\|)\mid \|\cdot\| \models_{\mathfrak{LS}_{ec_n}^{\mathcal{P}\mathcal{L}_{[0,1]}^{G}}} \mathcal{F}\}.		
	\end{equation*}
\end{cor}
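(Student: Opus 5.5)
The statement is the Gödel instance of Theorem~\ref{T3equivalent}, so the plan is to verify that theorem's hypotheses for $\mathcal{PL}_{[0,1]}^{G}$ and then either cite it or rerun its two-inclusion argument using the two corollaries just proved.

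\emph{Verify the hypotheses.} Theorem~\ref{T3equivalent} needs a continuous negation and a continuous, $\tfrac12$-idempotent, zero-divisor-free t-norm.
\begin{itemize}
\item The standard negation $N(x)=1-x$ is continuous.
\item $T_G=\min$ is continuous.
\item $\min\{\tfrac12,\tfrac12\}=\tfrac12$, so $T_G$ is $\tfrac12$-idempotent.
\item $\min\{x,y\}=0$ forces $x=0$ or $y=0$, so $T_G$ is zero-divisor-free.
\end{itemize}
The residuated implication of $T_G$ is $I_G$, and only its property $\|a\leftrightarrow b\|=1 \iff \|a\|=\|b\|$ is used, via Theorem~\ref{fuzzyequivalence}. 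Hence $\mathcal{PL}_{[0,1]}^{G}$ is an admissible CFNE logic.

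\emph{Inclusion $\supseteq$.} Take any model $\|\cdot\|$ of $\mathfrak{LS}_{ec_n}^{\mathcal{PL}_{[0,1]}^{G}}$. By the first corollary above (from Theorem~\ref{thm:ternarization}), $T_3(\|\cdot\|)$ is an adjacent complete labelling.

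\emph{Inclusion $\subseteq$.} Take any adjacent complete labelling $\|\cdot\|$, which is $\{0,\tfrac12,1\}$-valued. By the second corollary (from Theorem~\ref{1/2}), $\|\cdot\|$ is itself a Gödel CFNE model. Since $T_3$ is the identity on three-valued assignments, $\|\cdot\| = T_3(\|\cdot\|)$ lies in the right-hand set.

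\emph{Main obstacle.} The only delicate point is the second inclusion, namely checking that Theorem~\ref{1/2} really applies to $\min$. It requires that finite $\min$-products of values in $\{0,\tfrac12,1\}$ stay in $\{0,\tfrac12,1\}$ and equal $0$, $1$, or $\tfrac12$ exactly in the three stated cases. For $\min$ this is immediate.

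One also has to confirm that the auxiliary elements are handled identically on both sides. They carry fixed values in both semantics, so this is trivial.

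Combining the two inclusions gives the claimed equality.
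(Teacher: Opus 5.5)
Your proposal is correct and matches the paper's proof, which likewise checks that the standard negation is continuous and that $T_G=\min$ is continuous, $\tfrac12$-idempotent and zero-divisor-free, and then invokes Theorem~\ref{T3equivalent}. Your explicit two-inclusion argument via the two preceding corollaries is just that theorem's own proof (Theorems~\ref{thm:ternarization} and~\ref{1/2}) specialised to $\min$.
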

\begin{proof}
	Standard negation is continuous, and the G{\"o}del t-norm is continuous, $\tfrac12$-idempotent and zero‑divisor‑free in $\mathcal{P}\mathcal{L}_{[0,1]}^{G}$. Hence, this corollary follows from Theorem~\ref{T3equivalent}.
\end{proof}
	\subsubsection{Product Fuzzy Encoded Semantics}
	The Product t-norm $\otimes_P(x,y) = x \cdot y$ corresponds to probabilistic independent aggregation: it naturally models the joint effectiveness of multiple independent attacks and supports. This semantics is suitable for argumentation scenarios where different attack sources and evidence sources are mutually independent.
	
	\begin{defn}[Product-type CFOE system]
		Let $\mathcal{F} = (\mathbf{A}, \mathbf{R_a}, \mathbf{R_e}, \mathbf{U}, \mathbf{P})$ be an EHSAF and \(\|\cdot\|:\mathbf{U}\to[0,1]\) be an assignment. If the negation is the standard negation $N^{*}(x) = 1-x$ and the t-norm is the Product t-norm $\otimes_P(x,y) = x \cdot y$, then the Product-type CFOE system $eq_{[0,1]}^{P}$ is given by the auxiliary-element conditions
		\[
		\|\perp\| = 0,\quad \|\top\| = 1,\quad \|\alpha\| = 1 \quad (\forall \alpha \in (\mathbf{R_e} \cup \mathbf{R_e}) \setminus (\mathbf{R_a} \cup \mathbf{R_e}))
		\]
		and for each $\beta \in \mathbb{U}$, the fixed-point equation:
		\[
		\|\beta\| = \left( \prod_{k_S^\beta \in \mathcal{K}_\beta} \left( 1 - \|k_S^\beta\| \cdot \prod_{s\in S} \|s\| \right) \right) \cdot \left( 1 - \prod_{t_T^\beta \in \mathcal{T}_\beta} \left( 1 - \|t_T^\beta\| \cdot \prod_{t\in T} \|t\| \right) \right).
		\]
	\end{defn}
	
	Intuitively, the attack part aggregates the joint failure probability of all independent set attacks: the success probability of a single set attack is the product of the acceptability of the attack name and all source arguments; its failure probability is $1$ minus this value; the joint failure probability of all attacks is the product of individual failure probabilities. The support part aggregates the probability that at least one independent support succeeds: the success probability of a single set support is the product of the acceptability of the support name and all source arguments; the joint failure probability of all supports is the product of individual failure probabilities; the probability that at least one support succeeds is $1$ minus this value. The final acceptability of $\beta$ is the joint probability that all attacks fail and at least one support succeeds.
	
	\begin{thm}[Equivalence of Product CFOE and CFNE semantics]
		For every EHSAF $\mathcal{F}$, the Product-type continuous fuzzy operator-based equational semantics coincides with the Product fuzzy normal encoded semantics, i.e.,
		\[
		\mathfrak{LS}_{Eq_{[0,1]}^{P}}(\mathcal{F}) = \mathfrak{LS}_{ec_n}^{\mathcal{P}\mathcal{L}_{[0,1]}^{P}}(\mathcal{F}).
		\]
	\end{thm}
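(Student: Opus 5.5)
The plan is to treat this as the Product instance of Theorem~\ref{fuzzyequivalence}, exactly as the G\"{o}del case was handled. Theorem~\ref{fuzzyequivalence} is stated for an arbitrary continuous negation $N^*$ and continuous t-norm $\otimes$. So the first step is to check its hypotheses for $\mathcal{PL}_{[0,1]}^{P}$. The standard negation $N^*(x)=1-x$ is continuous. The product $T_P(x,y)=x\cdot y$ is a continuous t-norm, since it is commutative, associative, monotone and has unit $1$. Its residuum $I_P$ is the implication used in $\mathcal{PL}_{[0,1]}^{P}$. The one property of the implication that the proof of Theorem~\ref{fuzzyequivalence} uses is $\|a\leftrightarrow b\|=1 \iff \|a\|=\|b\|$. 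This holds for $I_P$ because $I_P(m,n)=1$ iff $m\le n$.

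The second step is to specialise Equation~\eqref{star2} to $N^*(x)=1-x$ and $\otimes=\cdot$. This should recover exactly the fixed-point equation of the Product-type CFOE system $eq_{[0,1]}^{P}$. The attack factor $N^*(\|\mathbf{k}_S^\beta\|\otimes\bigotimes_{s\in S}\|s\|)$ becomes $1-\|\mathbf{k}_S^\beta\|\prod_{s\in S}\|s\|$, and iterating $\otimes$ over $\mathcal{K}_\beta$ gives the outer product. The support part $N^*(\bigotimes_{\mathcal{T}_\beta}N^*(\cdots))$ becomes $1-\prod_{\mathcal{T}_\beta}(1-\|\mathbf{t}_T^\beta\|\prod_{t\in T}\|t\|)$. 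This is the probabilistic sum, i.e.\ the De Morgan dual of the product, and it matches the disjunction $a\vee b=\neg(\neg a\wedge\neg b)$ fixed in the preliminaries. The auxiliary-element conditions coincide verbatim.

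The third step is to conclude. The set of solutions of $eq_{[0,1]}^{P}$ equals $\mathfrak{LS}_{Eq_{[0,1]}^{*,\otimes}}(\mathcal{F})$ for this choice of $(N^*,\otimes)$, and the CFNE semantics for this choice is $\mathfrak{LS}_{ec_n}^{\mathcal{PL}_{[0,1]}^{P}}$. Theorem~\ref{fuzzyequivalence} then yields the stated equality for every $\mathcal{F}$.

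The only point needing care, and so the main obstacle, is the interpretation of $\vee$. The disjunction in $ec_n$ must be read as the De Morgan dual of $T_P$, not as $\max$. Otherwise the support part would read $\max_{\mathcal{T}_\beta}(\cdots)$ and would not agree with $eq_{[0,1]}^{P}$. I would state this convention explicitly, citing the preliminaries and the definition of $\oplus$ in Definition~\ref{CFOE}. I would also note that empty iterations are $1$, so that the formula is consistent when $\mathcal{K}_\beta$ or $\mathcal{T}_\beta$ is empty, although by construction of the EHSAF every $\beta\in\mathbb{U}$ has at least one attack and one support.
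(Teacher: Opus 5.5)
Your proposal is correct and follows the paper's own proof. That proof consists solely of instantiating Theorem~\ref{fuzzyequivalence} with the standard negation $N^*(x)=1-x$ and the product t-norm $T_P(x,y)=x\cdot y$. Your extra checks make explicit what the paper leaves implicit: that $I_P(m,n)=1$ iff $m\le n$ gives $\|a\leftrightarrow b\|=1$ iff $\|a\|=\|b\|$, that $\vee$ must be read as the De Morgan dual (the probabilistic sum) rather than $\max$, and that every $\beta\in\mathbb{U}$ has at least one attack and one support.
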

	
	\begin{proof}
		The result follows immediately from Theorem \ref{fuzzyequivalence} by taking the continuous negation as the standard negation and the t-norm as the Product t-norm $\otimes_P(x,y) = x \cdot y$.
	\end{proof}
	
	The Product fuzzy encoded semantics also inherits all general algebraic properties: continuity, commutativity, boundary conditions and monotonicity all hold, and the existence of solutions is guaranteed by Brouwer's fixed-point theorem. The following corollaries present the relationships between Product CFNE semantics and adjacent complete labelling semantics.
	\begin{cor}\label{product}
		For any EHSAF $\mathcal{F}$ and any fuzzy assignment $\|\cdot\|:\mathbf{U}\to[0,1]$,
		\[
		\|\cdot\| \models_{\mathfrak{LS}_{ec_n}^{\mathcal{P}\mathcal{L}_{[0,1]}^{P}}} \mathcal{F}
		\quad\Longrightarrow\quad
		T_3(\|\cdot\|) \text{ is an adjacent complete labelling of } \mathcal{F}.
		\]
	\end{cor}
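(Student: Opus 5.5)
The plan is to obtain Corollary~\ref{product} as a direct instance of Theorem~\ref{thm:ternarization}, exactly as was done for the G\"{o}del case. That theorem applies to any CFNE semantics built from the normal encoding, a continuous negation $N^{*}$, and a continuous zero-divisor-free t-norm $\otimes$. So it suffices to check that $\mathcal{PL}_{[0,1]}^{P}$ meets these hypotheses, namely standard negation $N^{*}(x)=1-x$ and the Product t-norm $T_P(x,y)=x\cdot y$.

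First, the standard negation is a continuous (indeed involutive) negation with $N^{*}(0)=1$ and $N^{*}(1)=0$. Second, $T_P$ is continuous as a polynomial map on $[0,1]^2$. Third, $T_P$ is zero-divisor-free: if $x,y\in(0,1]$ then $x\cdot y>0$, since the reals have no zero divisors. Equivalently, $x\cdot y=0$ forces $x=0$ or $y=0$, which is exactly the hypothesis used in Theorem~\ref{thm:trifurcation}.

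With these checks done, take a model $\|\cdot\|$ of $\mathfrak{LS}_{ec_n}^{\mathcal{PL}_{[0,1]}^{P}}(\mathcal{F})$.
\begin{enumerate}
\item By the equivalence of Product CFOE and CFNE semantics (an instance of Theorem~\ref{fuzzyequivalence}), $\|\beta\|=f_\beta$ for each $\beta\in\mathbb{U}$, where $f_\beta$ is the product-form aggregation function.
\item Theorem~\ref{thm:ternarization} then yields that $T_3(\|\cdot\|)$ solves $eq_3^{EH}$.
\item Theorem~\ref{eqofeqandadcom} converts this into $T_3(\|\cdot\|)$ being an adjacent complete labelling.
\end{enumerate}

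There is no real obstacle here. The only point needing care is that the disjunction in $\mathcal{PL}_{[0,1]}^{P}$ is the De Morgan dual $1-(1-x)(1-y)$. This is exactly the support part used in Definition~\ref{CFOE}, so the trifurcation analysis transfers verbatim.

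Note that $T_P$ is not $\tfrac12$-idempotent, since $\tfrac12\cdot\tfrac12=\tfrac14$. The converse direction (Theorem~\ref{1/2}) therefore does not apply, and this corollary should be stated only as the one-way implication.
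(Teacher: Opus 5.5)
Your proposal is correct and follows the paper's proof. The paper also obtains the corollary directly from Theorem~\ref{thm:ternarization}, after noting that the Product t-norm is zero-divisor-free and the standard negation is continuous; your additional checks (continuity of $T_P$, the De Morgan form of the disjunction, and the observation that $T_P$ is not $\tfrac12$-idempotent, so only the one-way implication is available) are consistent with that treatment.
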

	\begin{proof}
		Since the Product t-norm is zero‑divisor‑free and the standard negation is continuous in $\mathcal{P}\mathcal{L}_{[0,1]}^{P}$, this corollary follows from Theorem~\ref{thm:ternarization}.
	\end{proof}
	\begin{cor}
		For any EHSAF $\mathcal{F}$,
		\begin{equation*}
			\{T_3(\|\cdot\|)\mid \|\cdot\| \models_{\mathfrak{LS}_{ec_n}^{\mathcal{P}\mathcal{L}_{[0,1]}^{P}}} \mathcal{F}\}\subseteq\{\|\cdot\|\mid\|\cdot\| \models_{\mathfrak{L}\mathfrak{S}_{ac}} \mathcal{F}\}.		
		\end{equation*}
	\end{cor}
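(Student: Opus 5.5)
This inclusion follows directly from Corollary~\ref{product}, which is itself an instance of Theorem~\ref{thm:ternarization}. Fix an EHSAF $\mathcal{F}$ and take an arbitrary element of the left-hand set. Such an element has the form $T_3(\|\cdot\|)$, where $\|\cdot\|:\mathbf{U}\to[0,1]$ is a model of $\mathcal{F}$ under $\mathfrak{LS}_{ec_n}^{\mathcal{P}\mathcal{L}_{[0,1]}^{P}}$. Then:
\begin{enumerate}
\item I first check that the hypotheses of Theorem~\ref{thm:ternarization} hold in $\mathcal{P}\mathcal{L}_{[0,1]}^{P}$. The standard negation $N(x)=1-x$ is continuous. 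The Product t-norm is continuous and zero-divisor-free, since $x\cdot y=0$ forces $x=0$ or $y=0$.
\item Corollary~\ref{product} (equivalently, Theorem~\ref{thm:ternarization}) then gives that $T_3(\|\cdot\|)$ is an adjacent complete labelling of $\mathcal{F}$.
\item By Definition~\ref{comlab} this means $T_3(\|\cdot\|)\models_{\mathfrak{LS}_{ac}}\mathcal{F}$, so it lies in the right-hand set.
\end{enumerate}
Since the element was arbitrary, the inclusion holds.

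The only point needing care is the auxiliary elements. The ternarisation fixes $0$ and $1$, so $\|\bot\|_3=0$, $\|\top\|_3=1$, and every added attack or support keeps value $1$. Hence $T_3(\|\cdot\|)$ still meets the labelling requirements of Definition~\ref{semantics}. This check is already built into the proof of Theorem~\ref{thm:ternarization}, so there is no real obstacle.

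It is worth remarking why only one inclusion is claimed, in contrast to the G\"odel case. The Product t-norm is not $\tfrac12$-idempotent, because $\tfrac12\cdot\tfrac12=\tfrac14$, so Theorem~\ref{1/2} and Theorem~\ref{T3equivalent} cannot be invoked for the converse. I therefore would not attempt the reverse inclusion.
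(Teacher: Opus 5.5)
Your proposal is correct and matches the paper's proof, which derives the inclusion directly from Corollary~\ref{product}, that is, from Theorem~\ref{thm:ternarization} applied to the continuous, zero-divisor-free Product t-norm with the standard negation. Your extra checks on the auxiliary elements and your remark that the converse is not claimed because $\tfrac12\cdot\tfrac12=\tfrac14$ are accurate, but the argument does not need them.
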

	\begin{proof}
		The result follows immediately from Corollary~\ref{product}.
	\end{proof}
	Unlike the idempotent G\"{o}del semantics, the Product t-norm is non-idempotent: multiple independent attacks gradually accumulate and weaken the acceptability of the target, rather than being determined solely by the strongest attack. Similarly, multiple independent supports gradually enhance the acceptability. This cumulative feature makes Product semantics more suitable for modeling argumentative reasoning with additive strength.
	
	\subsubsection{{\L}ukasiewicz Fuzzy Encoded Semantics}
	The {\L}ukasiewicz t-norm $\otimes_L(x,y) = \max\{0, x+y-1\}$ corresponds to threshold-based additive aggregation. It models mutually exclusive or competitive attacks and supports: the total strength of attacks/supports accumulates linearly, and exerts a full effect only when the total exceeds a certain threshold.
	
	The {\L}ukasiewicz-type CFOE system can be obtained directly by substituting $\otimes$ with $\otimes_L$ in Definition \ref{CFOE}, yielding a piecewise-linear threshold-sum expression. By the general equivalence theorem (Theorem \ref{fuzzyequivalence}), it also coincides with the corresponding {\L}ukasiewicz fuzzy normal encoded semantics:
	\[
	\mathfrak{LS}_{Eq_{[0,1]}^{L}}(\mathcal{F}) = \mathfrak{LS}_{ec_n}^{\mathcal{P}\mathcal{L}_{[0,1]}^{L}}(\mathcal{F}).
	\]
	
	The {\L}ukasiewicz semantics is suitable for scenarios where attack strengths and support strengths are competitively additive. Since its expanded form is relatively cumbersome and all core properties are special cases of the general framework, we omit the detailed derivation here.

\section{Related Work}

Our EHSAF framework builds upon and tightly integrates several established lines of research in abstract argumentation. We emphasize the precise relationships and correspondences between EHSAF and its predecessors, showing how each can be recovered as a special case of our unified framework, and how their semantics and logical encodings align with ours under appropriate restrictions.

\subsection{Dung's Abstract Argumentation and Basic Extensions}

Dung's seminal work \cite{dung1995acceptability} introduced DAFs with a binary attack relation and a family of extension-based semantics. While DAFs provide a simple and powerful model, they cannot represent support, higher-order interactions, or collective attacks. Our EHSAF framework conservatively generalizes DAFs: when the support relation is empty, all elements are prima-facie, all sources are singletons of arguments, and all targets are arguments, the extension-based semantics of EHSAF coincide exactly with Dung's semantics. This makes EHSAF a direct extension of the foundational framework.

\subsection{Evidential Support Frameworks}

The EAS \cite{oren2008semantics,polberg2014revisiting} introduced the notion of evidential support, where arguments must be backed by chains of support rooted in a special evidence argument \(\eta\). EAS allows collective sources (sets of arguments) for both attacks and supports, but does not support higher-order interactions (elements targeting interactions). Our EHSAF extends EAS by permitting attacks and supports to target arbitrary elements (including other attacks and supports) and by allowing sources to include attacks and supports as well as arguments. In the restriction where all targets are arguments (no higher-order interactions) and all sources are sets of arguments, EHSAF's extension-based semantics reduce to those of EAS, as both frameworks then share the same definition of evidential support and attack validity.

The REBAF \cite{cayrol2018argumentation} integrates evidential support with higher‑order attacks and supports. It further introduces the concept of \emph{structures}, which partitions a set into three disjoint components: arguments, attacks, and supports. However, REBAF restricts the source of every interaction to a set of arguments only; attacks and supports cannot have sources that include other attacks or supports. Our EHSAF generalizes REBAF by allowing sources to be arbitrary sets of elements from the universal set (arguments, attacks, or supports). Furthermore, the extension-based semantics of EHSAF are a direct generalization of REBAF's structure semantics: when the source of every interaction is restricted to arguments, the definitions of \(\operatorname{\mathbf{Def}}_a\), \(\operatorname{\mathbf{Sup}}\), \(\operatorname{\mathbf{Def}}_s\), and \(\operatorname{\mathbf{Acc}}\) in EHSAF coincide with those of REBAF. Thus, REBAF is a proper subframework of EHSAF, obtained by restricting source sets to arguments only.

We now compare EHSAF with two closely related lines of work on evidential argumentation with labelling semantics, highlighting both conceptual connections and technical differences.

\paragraph{Comparison with incremental evidence-based semantics}
Chen et al. \cite{chen2023evidence} proposed an incremental labelling semantics for evidence-based argumentation frameworks based on SCC decomposition.
\begin{itemize}
	\item \emph{Connection}: Both frameworks share the intuition that evidential support propagates upward from primitive evidence, and that layered computation can improve efficiency. Both adopt the labelling paradigm with in/out/undec statuses.
	\item \emph{Difference}: The work of Chen et al. focuses on computational efficiency under dynamic updates and is restricted to first-order, argument-to-argument attacks and supports. EHSAF, by contrast, prioritises expressive uniformity: it generalises sources to arbitrary sets of elements and targets to arguments, attacks and supports, and further provides propositional logic encodings and fuzzy extensions rather than incremental algorithms.
\end{itemize}

\paragraph{Comparison with principled labelling analysis of bipolar argumentation}
Al Anaissy et al. \cite{alanaissy2025principle} presented a principled classification of seven complete labelling semantics for bipolar argumentation under deductive, necessary and evidential support interpretations.
\begin{itemize}
	\item \emph{Connection}: Both works take complete labelling semantics as the core and cover the evidential interpretation of support. The adjacent complete labellings of EHSAF, when restricted to first-order singleton interactions, align with the evidential-support labelling semantics studied in their taxonomy.
	\item \emph{Difference}: Their work focuses on principled property comparison and classification of existing semantics within standard bipolar frameworks. EHSAF goes further by constructing a unified formal syntax that accommodates higher-order targets and collective sources, proving equivalence between labellings and logical models, and extending the whole framework to continuous fuzzy semantics based on t-norms.
\end{itemize}

\subsection{Higher-Order and Collective Attack Frameworks}

The HSAF \cite{tang2025encoding2} unifies higher-order attacks (attacks targeting other attacks) and collective attacks (attacks from sets of arbitrary elements) in a single framework, but does not include any support relation. In HSAF, every element is considered to be valid-evidential-supported by default—there is no notion of evidential support. Our EHSAF can be seen as an extension of HSAF by adding evidential support. Concretely, if we take an EHSAF and let the support relation to be empty, and declare every element (including attacks and supports) to be prima-facie, then the resulting framework is semantically equivalent to an HSAF with the same set of arguments, attacks, and targets. The logical encoding of EHSAF (Definition~\ref{normalencoding}) then reduces to the encoding of HSAF provided in \cite{tang2025encoding2}, because the support part of the formula becomes trivial. Thus, HSAF is exactly the special case of EHSAF where evidential support is absent and all elements are self-supported.

\subsection{Logical Encodings of Argumentation Frameworks}

Logical encodings of argumentation frameworks provide a bridge to automated reasoning by translating abstract structures into logical formulae whose models correspond to extensions or labellings. For REBAF, Cayrol and Lagasquie-Schiex \cite{cayrol2020logical} proposed a first-order logical encoding, establishing a correspondence between REBAF structures (triples of arguments, attacks, and supports) under given semantics and models of a first-order theory. Besnard et al. \cite{besnard2023generic} developed a generic first-order logical encoding that uniformly captures several families of frameworks, including those with coalitions, higher-order relations, and evidential supports, by parameterizing the theory according to the allowed enrichments.

Our normal encoding of EHSAF (Definition~\ref{normalencoding}) provides a \emph{propositional} counterpart to these first-order encodings. While our encoding shares the same fundamental principles—characterizing acceptance via logical equivalences and establishing a model-extension correspondence—it operates entirely within propositional logic, enabling direct use of SAT solvers and other efficient propositional reasoning tools. When EHSAF is restricted to the REBAF case (sources are sets of arguments), the semantic correspondence established by our propositional encoding coincides with that of the REBAF first-order encoding: both characterize the same acceptance modulo the difference in logical language. Similarly, when EHSAF is restricted to the HSAF case (no support, all elements prima-facie), our encoding reduces to the propositional encoding of HSAF presented in \cite{tang2025encoding2}, since the support conditions become trivial. This demonstrates that our encoding is a conservative propositionalization of the first-order encodings for the relevant sub-frameworks, while extending them to handle sources that are sets of arbitrary elements and providing a uniform treatment of attacks and supports at any order.

Furthermore, our fuzzy encoding provides a numerical semantics for EHSAF, which has no direct counterpart in the aforementioned encodings. However, the fuzzy encoded semantics can be viewed as a continuous generalization of the 3-valued complete labelling semantics, and under the G{\"o}del or Product, it recovers the 3-valued semantics when restricted to the \(\{0,\frac{1}{2},1\}\) domain.

\subsection{Fuzzy and Gradual Argumentation Semantics}

Several works have explored fuzzy or gradual semantics for argumentation. Wu and Oren \cite{wu2016properties} examined Janssen's fuzzy argumentation frameworks (JAF) \cite{janssen2008fuzzy}, clarifying definitions and adding auxiliary notions that make the system more understandable and simplify the computation of extensions. In JAF, arguments and attacks are assigned fuzzy degrees, and the acceptability of an argument is determined by comparing a computed value against a threshold \cite{wu2016properties}. This threshold-based approach, while providing a quantitative dimension, introduces parameters that must be externally specified.

To address this limitation, Wu et al. \cite{wu2016godel} introduced G{\"o}del Fuzzy Argumentation Frameworks (GFAFs), which combine fuzzy set theory with argumentation without requiring any external parameters. By specializing the framework using the G{\"o}del t-norm, they showed that the standard Dung extensions are recovered, though the stable and preferred semantics coincide \cite{wu2016godel}. This work brings fuzzy argumentation closer to Dung's original spirit by unequivocally identifying justified sets of arguments without reference to parameters. Wu et al. \cite{wu2020godel} further developed G{\"o}del semantics of fuzzy argumentation frameworks with consistency degrees, exploring the relationship between fuzzy argumentation and consistency measures.

Corsi \cite{corsi2025bipolar} investigated bipolar argumentative semantics for t-norm based fuzzy logics, and Wang and Shen \cite{wang2025fuzzy} proposed fuzzy labelling semantics for quantitative argumentation, where a triple of acceptability, rejectability, and undecidability degrees is used to evaluate argument strength \cite{wang2025fuzzy}.

Our fuzzy encoded semantics for EHSAF is closely related to these fuzzy-operator-based approaches, providing a natural gradual interpretation where attacks decrease acceptability and supports increase it, and the aggregation functions are continuous and monotone. Similar to the G{\"o}del Fuzzy Argumentation Frameworks \cite{wu2016godel}, our semantics under the G{\"o}del t-norm recovers the standard complete semantics when restricted to the 3-valued domain. Unlike the threshold-based approach of JAF \cite{janssen2008fuzzy,wu2016properties}, our semantics requires no external parameters. Furthermore, our framework extends the encoding approach of Tang et al. \cite{tang2025encoding2} by handling sources that are sets of arbitrary elements (including attacks and supports) and by providing a uniform treatment of both attacks and supports at any order. Most importantly, unlike many existing fuzzy argumentation frameworks, our semantics is derived directly from a logical encoding, ensuring a tight correspondence between the fuzzy models and the discrete complete labellings. This logical foundation also enables the systematic construction of new equational semantics by choosing different fuzzy logic operations, following the methodology proposed in \cite{tang2025encoding1}.

\section{Conclusion}
\label{sec:conclusion}

In this paper, we introduced the Evidential-based Higher-order Set Argumentation Framework (EHSAF), a unified abstract argumentation formalism that integrates three key generalizations: evidential support (rooted in prima-facie elements), higher-order interactions (attacks and supports targeting arbitrary elements), and collective interactions (sources as sets of arbitrary elements). The framework conservatively generalises existing formalisms including EAS, REBAF, and HSAF, providing a single expressive setting for studying complex argumentative structures.

We developed two complete semantics for EHSAFs: an \emph{adjacent complete labelling semantics} that adopts an open epistemic attitude toward arguments in support cycles, admitting multiple possible truth values (true, false, undecided); and an \emph{extension-based complete semantics} that follows a strict evidentialist stance, requiring well-founded support chains. We demonstrated through a concrete counterexample that these two semantics diverge in the presence of support cycles. Under support-acyclicity, we proved the two semantics coincide, establishing that the divergence is exactly due to cyclic dependencies.

To provide a computational foundation, we gave a normal propositional encoding of EHSAFs and proved that, in three-valued {\L}ukasiewicz logic, the models of the encoded theory correspond exactly to the adjacent complete labellings. We further extended the encoding to continuous fuzzy logics (G{\"o}del, Product, and {\L}ukasiewicz), defining a continuous fuzzy normal encoded semantics. We showed that this fuzzy semantics satisfies desirable properties—continuity, monotonicity, boundary conditions, and solution existence—and established that, under zero-divisor-free and \(\frac{1}{2}\)-idempotent t-norms, the ternarisation of any fuzzy model is an adjacent complete labelling, and conversely every adjacent labelling is a ternary model of the fuzzy encoding.

Our work thus bridges the gap between expressive argumentation frameworks and principled semantic foundations, offering both a flexible three-valued labelling theory and a continuous fuzzy counterpart with a clear logical interpretation. Future work includes investigating the incremental computation of adjacent complete labellings, exploring preference-based or weighted extensions, and applying the framework to real-world domains such as legal reasoning or scientific evidence evaluation, where the distinction between current and prospective evidence is epistemically significant.

\bibliographystyle{plain}
\bibliography{refofehsaf}

\end{document}